\documentclass{article}

%


\usepackage{graphicx}
\usepackage{multirow}
\usepackage[final, nonatbib]{neurips_2024}



\usepackage{subcaption}
\usepackage{wrapfig}
\usepackage[utf8]{inputenc} 
\usepackage[T1]{fontenc}    
\usepackage{hyperref}       
\usepackage{url}            
\usepackage{booktabs}       
\usepackage{amsfonts}       
\usepackage{amssymb}
\usepackage{nicefrac}       
\usepackage{microtype}      
\usepackage{xcolor}         
\usepackage{amsthm}
\usepackage{thm-restate}
\usepackage{bm}
\usepackage{amsmath}
\usepackage{algorithm}
\usepackage{algorithmic}
\theoremstyle{plain}
\newtheorem{theorem}{Theorem}[section]

\theoremstyle{definition}

\theoremstyle{remark}

\theoremstyle{plain}

\usepackage{todonotes}

\title{Bayesian Optimisation with \\ Unknown Hyperparameters:\\ Regret Bounds Logarithmically Closer to Optimal}

%

\author{
    Juliusz Ziomek$^{\dagger, \bigstar}$, 
    Masaki Adachi$^{\dagger, \ddagger}$, 
    Michael A. Osborne$^{\dagger}$,\\
    \small{$^\dagger$Machine Learning Research Group, University of Oxford}\\
    \small{$^\ddagger$ Toyota Motor Corporation}\\
    \small{$^\bigstar$ Corresponding Author} \\
    \small{\texttt{\{juliusz, masaki, mosb\}@robots.ox.ac.uk}}\\
}

\begin{document}

\maketitle

\begin{abstract}
 Bayesian Optimization (BO) is widely used for optimising black-box functions but requires us to specify the length scale hyperparameter, which defines the smoothness of the functions the optimizer will consider. Most current BO algorithms choose this hyperparameter by maximizing the marginal likelihood of the observed data, albeit risking misspecification if the objective function is less smooth in regions we have not yet explored. The only prior solution addressing this problem with theoretical guarantees was A-GP-UCB, proposed by Berkenkamp et al. (2019). This algorithm progressively decreases the length scale, expanding the class of functions considered by the optimizer. However, A-GP-UCB lacks a stopping mechanism, leading to over-exploration and slow convergence. To overcome this, we introduce Length scale Balancing (LB)---a novel approach, aggregating multiple base surrogate models with varying length scales. LB intermittently adds smaller length scale candidate values while retaining longer scales, balancing exploration and exploitation. We formally derive a cumulative regret bound of LB and compare it with the regret of an oracle BO algorithm using the optimal length scale. Denoting the factor by which the regret bound of A-GP-UCB was away from oracle as $g(T)$,  we show that LB is only $\log g(T)$ away from oracle regret. We also empirically evaluate our algorithm on synthetic and real-world benchmarks and show it outperforms A-GP-UCB, maximum likelihood estimation and MCMC.
\end{abstract}

\section{Introduction}

Bayesian Optimisation (BO) \cite{garnett2023bayesian} has proven to be an efficient solution for black-box optimisation problems, finding applications across science, engineering and machine learning \cite{ cowen2022hebo, grosnit2022boils, khan2023toward}. As a model-based optimisation technique, BO constructs a surrogate model of the black box function, which is typically a Gaussian Process (GP) \cite{williams2006gaussian}. However, to construct this surrogate, we need to specify our expectations about the smoothness of the black-box function. In the case of GP, the choice of smoothness is reflected in the selection of an appropriate length scale value for the kernel function. Selecting smaller length scales allows us to model less smooth functions and, as such, expands the class of all possible black-box functions the optimiser will consider. At the same time, it makes the convergence of the algorithm slower, due to an increase in the number of possible `candidate' functions, the algorithm has to explore the space much more. As such, we wish to consider the smallest possible class of functions that still contains the black-box function we wish to solve. This translates to selecting some optimal length scale value, which is neither too short nor too long. 

\begin{wrapfigure}{r}{0.5\textwidth}
\vspace{-0.2cm}
  \begin{center}
     \includegraphics[width=0.5\textwidth]{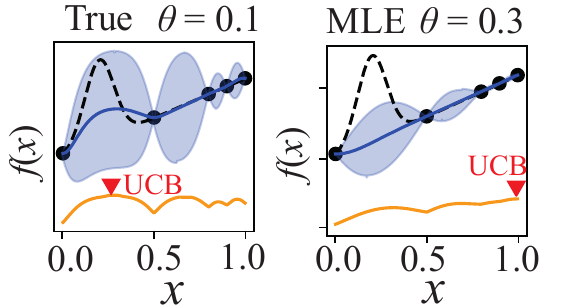}
  \end{center}
  \caption{An objective function (proposed by \cite{berkenkamp2019no}) that illustrates the importance of length scales to BO. The blue line shows a GP fit with shaded regions representing one standard deviation. The length scale value was set to the optimal value on the left and was selected by MLE on the right, based on five points represented by dots. While the optimiser with the MLE of length scale persistently selects a suboptimal value of $x=1$, the optimiser with the optimal length scale can spot the hidden peak leading to finding the maximum at $x^*=0.3$.
  }
  \label{fig:berkenkamp}
  \vspace{0.5cm}
\end{wrapfigure}
Appropriate selection of the length scale parameter can be challenging. Typical practice is to fit the length scale value by maximum likelihood estimation (MLE) on the observed data we have collected thus far. However, it is entirely possible that the function changes less smoothly in the regions we have not explored yet, as shown in Figure \ref{fig:berkenkamp}. As such, we cannot guarantee that maximising the likelihood of the limited, observed data will find a length scale value such that the black-box function will lie in the space of considered functions. A previously proposed algorithm called A-GP-UCB \cite{berkenkamp2019no} approached this issue by progressively decreasing the length scale value, and as such increasing the class of functions considered by the optimiser. As a consequence, at some point, it must contain the black-box function we are trying to optimise. However, the algorithm has no mechanism for stopping and as such the length scale value will decrease indefinitely, inexorably expanding the class of considered functions. This causes over-exploration, making the convergence much slower compared to an optimiser that knows the optimal length scale value. 

A-GP-UCB is suboptimal because it never returns to previously trialled, longer length scales. Observe that if trying a shorter length scale value does not improve function discovery, opting for a longer scale is a safer choice, preventing excessive exploration. To build an algorithm following this intuition, we could have a number of base optimisers, each utilising a different length scale value, and aggregate them into a single `master' optimiser. By knowing how explorative each of the base optimisers is, the `master' optimiser could select the most suitable one at each iteration, so as to balance exploration and exploitation. Within the literature of multi-armed bandit problems, a number of rules for aggregating base algorithms have been proposed \cite{abbasi2020regret, agarwal2017corralling, pacchiano2020regret}, however, 
the performance of those `master' algorithms worsens with the number of base algorithms. This prohibits us from directly applying `master' algorithms to the unknown hyperparameter problem, as the length scale is a continuous parameter and has infinitely many possible values.

\begin{wrapfigure}{r}{0.45\textwidth}
\vspace{-0.2cm}
  \begin{center}
     \includegraphics[width=0.45\textwidth]{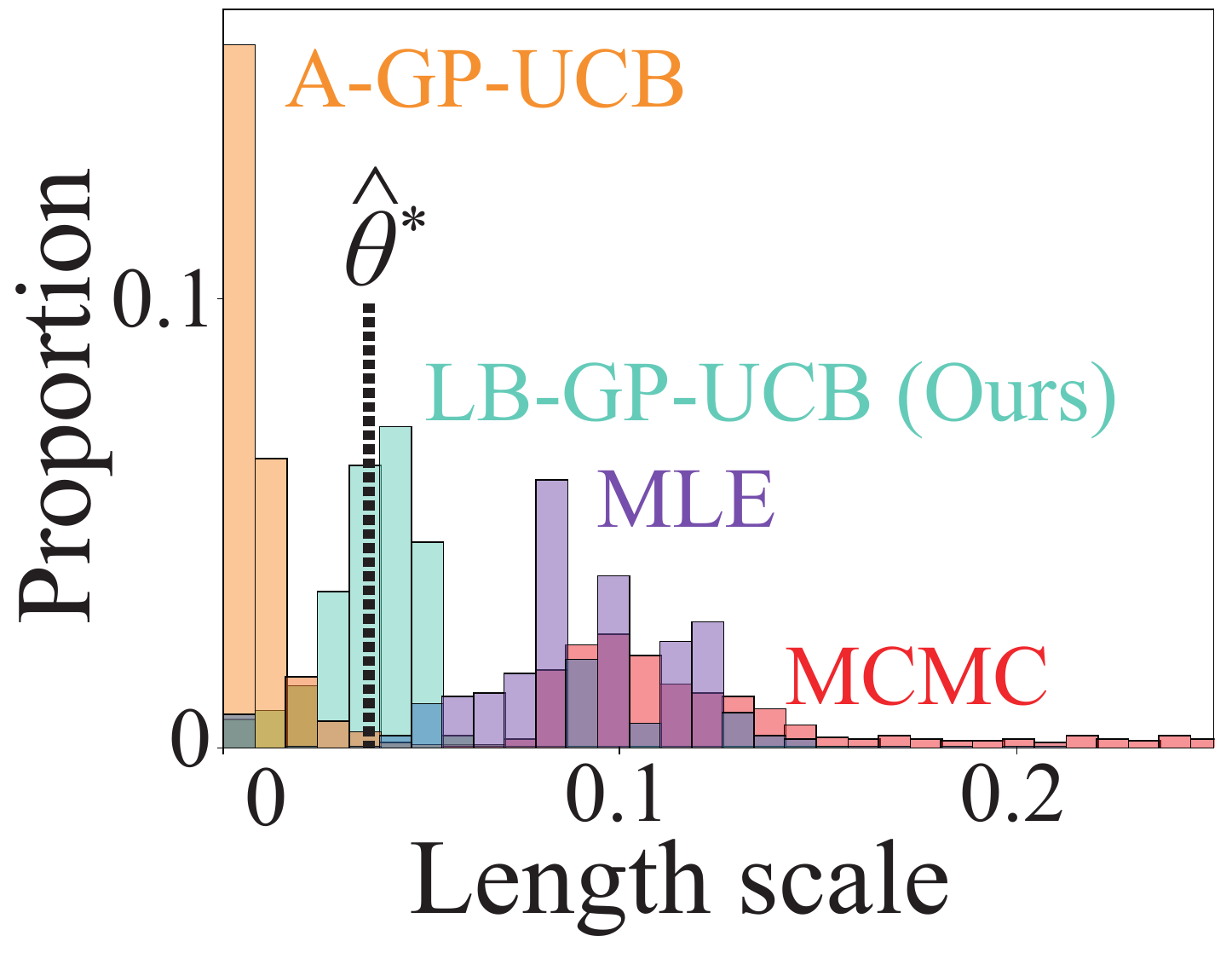}
   \end{center}
  \caption{Histogram showing how often (as a proportion of iterations) each algorithm selected a given length scale value while optimising the Michalewicz function over ten seeds. $\hat{\theta}^\star$ corresponds to an estimate of optimal length scale value. See §\ref{sec:experiment} for details.}
  \label{fig:lengthscale}
  \vspace{-0.3cm}
\end{wrapfigure}

Within this work, we extend one such aggregation scheme, called regret-balancing, to handle infinitely many learners, so that it could tackle the problem of BO with unknown hyperparameters. We propose an algorithm called Length scale Balancing GP-UCB (LB-GP-UCB), which aggregates a number of base optimisers with different length scale values and gradually introduces new base optimisers, equipped with smaller length scales. Instead of permanently decreasing the length scale value, as done by A-GP-UCB, LB-GP-UCB occasionally introduces new base learners with smaller length scale values, while still maintaining base learners with longer ones. As such, if one of the longer length scales is optimal, we will be able to recover performance close to the one of the oracle optimiser utilising that optimal length scale. Denoting the factor by which the regret bound of A-GP-UCB was away from oracle as $g(T)$,  we show that LB-GP-UCB is only $\log g(T)$ away from oracle regret. We also conduct empirical evaluation and show LB-GP-UCB obtains improved regret results on a mix of synthetic and real-world benchmarks compared to A-GP-UCB, MLE and MCMC. We show the histogram of length scale values selected by each method on one of the benchmark problems in Figure \ref{fig:lengthscale}. $\hat{\theta}^\star$ represents an estimate of optimal length scale value on this problem (see §\ref{sec:experiment} for details). We can see that the length scale values selected by LB-GP-UCB are close to the estimated optimal value, whereas MLE and A-GP-UCB miss this value by respectively over and under-estimating, matching our predictions. We summarise our contributions below.

\begin{itemize}
    \item We propose LB-GP-UCB and show that compared to A-GP-UCB its regret bound is logarithmically closer to the bound of an oracle optimiser knowing the optimal length scale.
    \item We extend our algorithm to also handle the case of unknown output scale (function norm) alongside the length scale
    \item We show the empirical superiority of our algorithm compared to MLE, MCMC and A-GP-UCB on a mix of synthetic and real-world problems, and conduct an ablation study showing increased robustness of our method.

\end{itemize}

\section{Problem Statement and Preliminaries}
 We consider the problem of maximising an unknown black-box function $f: \mathcal{X} \to \mathbb{R}$ on some compact set $\mathcal{X} \subset \mathbb{R}^d$. At each time step $t$, we are allowed to query a function at a selected point $\bm{x}_t \in \mathcal{X}$ and observe noisy feedback $y_t = f(\bm{x}_t) + \epsilon_t$, where $\epsilon_t \sim \textrm{SubGauss}(\sigma_N^2)$. We wish to find the optimum $\bm{x}^\star = \max_{\bm{x}\in \mathcal{X}}f(\bm{x})$. We define the instantaneous regret to be $r_t = f(\bm{x}^\star) - f(\bm{x}_t)$ and cumulative regret as $R_T = \sum_{t=1}^T r_t$, and we wish to minimise it. We assume we are given some kernel function $k^\theta(\bm{x}, \bm{x}^\prime) = k(\frac{\bm{x}}{\theta}, \frac{\bm{x}^\prime}{\theta})$ parametrised by a length scale value $\theta \in \mathbb{R}^+$ and we denote its associated Reproducing Kernel Hilbert Space (RKHS) as $\mathcal{H}(k^{\theta})$. We assume that at least for certain values of $\theta$, the black-box function $f$ belongs to this RKHS, i.e. $\exists_{\theta \in \mathbb{R}^+} f \in \mathcal{H}(k^\theta)$. Concretely, we will consider two popular types of kernels: RBF and $\nu$-Matérn, defined below for completeness:
\begin{align*}
    k_{\textrm{RBF}}^{\theta}(\bm{x}, \bm{x}^\prime) &= \exp \left(- \frac{\lVert \bm{x} - \bm{x}^\prime \rVert_2^2}{\theta^2}  \right) \\
    k_{\textrm{$\nu$-Matérn}}^{\theta}(\bm{x}, \bm{x}^\prime) &= \frac{2^{1-\nu}}{\Gamma(\nu)} \left( \sqrt{2\nu} \frac{\|\bm{x} - \bm{x}^\prime \|}{\theta} \right)^\nu K_\nu \left( \sqrt{2\nu} \frac{\|\bm{x} - \bm{x}^\prime\|}{\theta} \right),
\end{align*}
where $\Gamma(\cdot)$ is the Gamma function and $K_{\nu}(\cdot)$ is the modified Bessel function of the second kind of order 
$\nu$. Without the loss of generality, we assume $k^\theta(\cdot, \cdot) \le 1$ for all $\theta \in \mathbb{R}^+$. In the rest of the paper, if we do not specify the type of the kernel, it means the result is applicable to both types of kernels. If we fit GP model with a kernel $k^\theta(\bm{x}, \bm{x}^\prime)$ to the data so far, $\mathcal{D}_{t-1} = \{(\bm{x}_\tau, y_\tau)\}_{\tau=1}^{t-1} $, we obtain the following mean $\mu^\theta_{t-1}(\bm{x}) $ and variance $(\sigma^\theta_{t-1})^2(\bm{x})$ functions:
\begin{equation*}
    \mu^\theta_{t-1}(\bm{x}) = \bm{k}^\theta_{t-1}(\bm{x})^T(\mathbf{K}^\theta_{t-1} + \sigma_N^2\mathbf{I})^{-1}\bm{y}_{t-1}
\end{equation*}
\begin{equation*}
    (\sigma^\theta_{t-1})^2(\bm{x}) = k^\theta(\bm{x}, \bm{x}) - \bm{k}^\theta_{t-1}(\bm{x})^T(\mathbf{K}^\theta_{t-1} + \sigma_N^2\mathbf{I})^{-1}\bm{k}^\theta_{t-1}(\bm{x}) ,
\end{equation*}
where $\bm{y}_{t-1} \in \mathbb{R}^{t-1} $ with elements $(\bm{y})_i = y_i$, $\bm{k}^\theta(\bm{x}) \in \mathbb{R}^{t-1}$ with elements $\bm{k}^\theta(\bm{x})_i = k^\theta(\bm{x}, \bm{x}_i)$ and similarily $\mathbf{K}^\theta_{t-1} \in \mathbb{R}^{t-1 \times t-1}$ with entries $(\mathbf{K}^\theta_{t-1})_{i,j} = k^\theta(\bm{x}_i, \bm{x}_j)$, and $\sigma_N^2$ is the regulariser factor \cite{chowdhury2017kernelized} with identity matrix $\mathbf{I}$. If we were to use any length scale value such that $f \in \mathcal{H}\left(k^{\theta}\right)$, then we can obtain certain guarantees about the predictions made by the GP model, as stated next.

\begin{theorem}[Theorem 2 of \cite{chowdhury2017kernelized}] \label{theorem:ucb}
Let $f \in \mathcal{H}\left(k^{\theta}\right)$, such that $\lVert f \rVert_{k^{\theta}} \le B$ and set $\beta_t^{\theta, B} = B + \sigma_N \sqrt{2 (\mathcal{I}_{t-1}(k^{\theta}) + 1 + \ln (1/\delta_A))}$, where $\mathcal{I}_T(k^\theta)$ is an upper bound  $\frac{1}{2} \log \lvert I + \sigma_N^{-2} \bm{K}^{\theta^\star}_T \rvert \le \mathcal{I}_T(k^\theta)$, which depends on the kernel and length scale choice. 
Then, with probability at least $1 - \delta_A$, for all $\bm{x} \in \mathcal{X}$ and $t = 1, \dots, T$:
    \begin{equation*}
    \left|f(\bm{x}) -  \mu_{t-1}^{\theta}(\bm{x})\right| \le \beta_t^{\theta, B} \sigma_{t-1}^{\theta} (\bm{x}) .
\end{equation*}
\end{theorem}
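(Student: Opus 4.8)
The plan is to follow the self-normalised martingale argument of Abbasi-Yadkori et al., lifted to the RKHS $\mathcal{H}(k^\theta)$, since this is exactly the route of the cited IGP-UCB analysis. Write $\phi(\cdot)$ for a feature map with $k^\theta(\bm x,\bm x')=\langle\phi(\bm x),\phi(\bm x')\rangle$, stack the observed features as $\Phi_{t-1}$, and substitute $\bm y_{t-1}=\bm f_{t-1}+\bm\epsilon_{t-1}$ (with $(\bm f_{t-1})_\tau=f(\bm x_\tau)$) into the posterior mean. This splits the error into a deterministic bias term and a stochastic noise term:
\[
f(\bm x)-\mu^\theta_{t-1}(\bm x)=\underbrace{\big[f(\bm x)-\bm k^\theta_{t-1}(\bm x)^T(\mathbf K^\theta_{t-1}+\sigma_N^2\mathbf I)^{-1}\bm f_{t-1}\big]}_{\text{bias}}-\underbrace{\bm k^\theta_{t-1}(\bm x)^T(\mathbf K^\theta_{t-1}+\sigma_N^2\mathbf I)^{-1}\bm\epsilon_{t-1}}_{\text{noise}}.
\]

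First I would bound the bias with pure RKHS geometry. Using the reproducing property $f(\bm x)=\langle f,\phi(\bm x)\rangle$ and $f(\bm x_\tau)=\langle f,\phi(\bm x_\tau)\rangle$, the bias equals $\langle f, g_{\bm x}\rangle$ for the residual element $g_{\bm x}=\big(I-\Phi_{t-1}^{T}(\mathbf K^\theta_{t-1}+\sigma_N^2\mathbf I)^{-1}\Phi_{t-1}\big)\phi(\bm x)$. A short computation with $\Phi_{t-1}\Phi_{t-1}^{T}=\mathbf K^\theta_{t-1}$ shows $\lVert g_{\bm x}\rVert^2=\big(\sigma^\theta_{t-1}\big)^2(\bm x)-\sigma_N^2\lVert(\mathbf K^\theta_{t-1}+\sigma_N^2\mathbf I)^{-1}\bm k^\theta_{t-1}(\bm x)\rVert^2\le\big(\sigma^\theta_{t-1}\big)^2(\bm x)$, so Cauchy--Schwarz gives $|\text{bias}|\le\lVert f\rVert_{k^\theta}\,\sigma^\theta_{t-1}(\bm x)\le B\,\sigma^\theta_{t-1}(\bm x)$, which accounts for the leading $B$ in $\beta^{\theta,B}_t$.

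Next, and this is the crux, I would control the noise term. Introduce the regularised covariance $V_{t-1}=\sigma_N^2 I+\Phi_{t-1}^{T}\Phi_{t-1}$ and the vector-valued martingale $S_{t-1}=\Phi_{t-1}^{T}\bm\epsilon_{t-1}=\sum_{\tau=1}^{t-1}\epsilon_\tau\phi(\bm x_\tau)$. The push-through identity $(\sigma_N^2 I+\Phi^T\Phi)^{-1}\Phi^T=\Phi^T(\mathbf K^\theta_{t-1}+\sigma_N^2 I)^{-1}$ rewrites the noise term as $\langle\phi(\bm x),V_{t-1}^{-1}S_{t-1}\rangle$, so Cauchy--Schwarz in the $V_{t-1}^{-1}$ metric yields $|\text{noise}|\le\lVert\phi(\bm x)\rVert_{V_{t-1}^{-1}}\,\lVert S_{t-1}\rVert_{V_{t-1}^{-1}}$, and Woodbury gives $\lVert\phi(\bm x)\rVert_{V_{t-1}^{-1}}=\sigma_N^{-1}\sigma^\theta_{t-1}(\bm x)$. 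It remains to bound $\lVert S_{t-1}\rVert_{V_{t-1}^{-1}}$ uniformly over $t$. For this I would run the method-of-mixtures argument: from the sub-Gaussianity of each $\epsilon_\tau$, build an exponential supermartingale for each fixed direction, mix it against a Gaussian prior over directions in $\mathcal{H}(k^\theta)$, and apply Ville's inequality to the resulting supermartingale. The Gaussian integral produces the normaliser $|\mathbf I+\sigma_N^{-2}\mathbf K^\theta_{t-1}|^{1/2}$, delivering, with probability at least $1-\delta_A$ and simultaneously for all $t$, the bound $\lVert S_{t-1}\rVert^2_{V_{t-1}^{-1}}\le 2\sigma_N^2\big(\tfrac12\log|\mathbf I+\sigma_N^{-2}\mathbf K^\theta_{t-1}|+1+\ln(1/\delta_A)\big)\le 2\sigma_N^2\big(\mathcal I_{t-1}(k^\theta)+1+\ln(1/\delta_A)\big)$.

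The main obstacle is making this last step rigorous in the possibly infinite-dimensional RKHS, where $V_{t-1}$ and the Gaussian mixing measure live in a Hilbert space and the determinant must be re-expressed through the finite Gram matrix $\mathbf K^\theta_{t-1}$; I would resolve this either by projecting onto the finite span of $\{\phi(\bm x_\tau)\}$ before integrating, or by invoking the Hilbert-space version of the self-normalised tail bound directly. Finally I would combine the two pieces,
\[
\big|f(\bm x)-\mu^\theta_{t-1}(\bm x)\big|\le\Big(B+\sigma_N^{-1}\lVert S_{t-1}\rVert_{V_{t-1}^{-1}}\Big)\sigma^\theta_{t-1}(\bm x)\le\beta^{\theta,B}_t\,\sigma^\theta_{t-1}(\bm x),
\]
noting that uniformity over $\bm x$ is automatic, since the only randomness is $S_{t-1}$ and it is controlled by a single event, while uniformity over $t$ comes from the stopping-time (Ville) construction, so one application of $\delta_A$ suffices.
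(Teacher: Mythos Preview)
The paper does not prove this statement at all: Theorem~\ref{theorem:ucb} is simply quoted from \cite{chowdhury2017kernelized} and used as a black box in the proofs of Lemmas~\ref{lemma:preservation} and~\ref{lemma:regretbalancing}. There is therefore no ``paper's own proof'' to compare against. Your proposal is a correct and faithful reconstruction of the argument in the cited reference (the IGP-UCB analysis): the bias--noise decomposition, the $B\,\sigma^\theta_{t-1}(\bm x)$ bound on the bias via Cauchy--Schwarz in the RKHS, and the self-normalised method-of-mixtures bound on $\lVert S_{t-1}\rVert_{V_{t-1}^{-1}}$ are exactly the ingredients of Chowdhury and Gopalan's Theorem~2, and your identification of the infinite-dimensional issue (resolved by projecting onto the span of the observed features) is the right caveat.
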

Note that the Theorem~\ref{theorem:ucb} relies on the quantity $\mathcal{I}_T(k^\theta)$, also called maximum information gain (MIG). The next proposition, proven in Appendix \ref{app:lengthscalemigproof}, provides bounds on $\mathcal{I}_T(k^\theta)$ for RBF and $\nu$-Matérn kernel and shows the explicit dependence on length scale hyperparameter $\theta$.
\newpage
\begin{restatable}[]{proposition}{lengthscalemig}
\label{prop:lengthscalemig} We have that $\mathcal{I}_T(k^\theta) \le \mathcal{O}\left(\gamma_T(k^\theta)\right)$:
\begin{itemize}
    \item For an RBF kernel $
    \gamma_T(k^\theta) = \frac{1}{\theta^d} \log(T)^{d+1}
$
\item For $\nu$-Matérn kernel $
    \gamma_T(k^\theta) = \frac{1}{\theta^{d}} T^{\frac{d(d+1)}{2\nu + d(d+1)}}\log(T)^{\frac{2\nu}{2\nu + d}}
$
\end{itemize} 
\end{restatable}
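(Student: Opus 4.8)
The plan is to reduce everything to the known maximum-information-gain (MIG) bounds for the \emph{unit} length scale kernel and then track how the domain volume enters those bounds. The starting observation is that $k^\theta$ is merely a rescaling of the base kernel: writing $\bm{u} = \bm{x}/\theta$, we have $k^\theta(\bm{x},\bm{x}') = k^1(\bm{u}, \bm{u}')$, so any Gram matrix built from points $\bm{x}_1,\dots,\bm{x}_T \in \mathcal{X}$ using $k^\theta$ coincides exactly with the one built from $\bm{u}_1,\dots,\bm{u}_T \in \tilde{\mathcal{X}}$ using $k^1$, where $\tilde{\mathcal{X}} = \mathcal{X}/\theta$. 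Since $\tfrac{1}{2}\log|I + \sigma_N^{-2}\bm{K}^\theta_T|$ is invariant under this relabelling, the MIG of $k^\theta$ on $\mathcal{X}$ equals the MIG of the unit length scale kernel $k^1$ on the rescaled domain $\tilde{\mathcal{X}}$, whose volume is $\mathrm{vol}(\tilde{\mathcal{X}}) = \theta^{-d}\,\mathrm{vol}(\mathcal{X})$. Thus the only length scale dependence I must chase is the dependence of the unit length scale MIG on the domain volume $V \propto \theta^{-d}$.

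Next I would invoke the standard Mercer-eigenvalue (equivalently, spectral/Fourier) route to bounding the MIG, as developed by Srinivas et al.\ and sharpened by Vakili et al. For a stationary kernel one has a bound of the schematic form $\gamma_T \lesssim D\log(1 + \sigma_N^{-2} T \lambda_1) + \sigma_N^{-2} T \sum_{m>D}\lambda_m$, where $\{\lambda_m\}$ are the Mercer eigenvalues of the integral operator on $\tilde{\mathcal{X}}$ and $D$ is a truncation level to be optimised. In the Fourier picture, $D$ is the number of spectral modes below a frequency cutoff $\omega_\star$ and the tail is controlled by the kernel's spectral density. The key quantitative fact is that, for a stationary kernel on a domain of volume $V$, both the eigenvalue counting function $N(\lambda) = |\{m : \lambda_m \ge \lambda\}|$ and the number of low-frequency modes scale \emph{linearly} in $V$ (the content of Weyl-type asymptotics, made rigorous by covering $\tilde{\mathcal{X}}$ with $O(V)$ unit cells). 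Hence the effective dimension $D$ one must retain carries a factor $V \propto \theta^{-d}$.

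I would then specialise the truncation to each kernel. For RBF the spectral density decays faster than any polynomial, so a cutoff growing only polylogarithmically in $T$ already kills the tail term; the retained effective dimension is $D = O(V (\log T)^d)$, and feeding this into the bound gives $\gamma_T = O(V (\log T)^{d+1}) = O(\theta^{-d}(\log T)^{d+1})$. For the $\nu$-Matérn kernel the spectral density decays polynomially, $S(\omega) \sim \|\omega\|^{-(2\nu + d)}$, so $D$ is chosen by balancing the logarithmic head term against the polynomially-decaying tail; this optimisation reproduces the familiar power-of-$T$ rate, and because $V$ multiplies the number of retained modes, it emerges as the prefactor, yielding $\gamma_T = O\!\left(\theta^{-d} T^{\frac{d(d+1)}{2\nu+d(d+1)}}(\log T)^{\frac{2\nu}{2\nu+d}}\right)$. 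Collecting the two cases and recalling $\mathcal{I}_T(k^\theta) = O(\gamma_T(k^\theta))$ completes the argument.

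The main obstacle I anticipate is making the volume dependence \emph{clean and linear} rather than merely asymptotic. The Weyl-type statement that the eigenvalue counting function grows like $V$ is asymptotic in $V$, and $\tilde{\mathcal{X}} = \mathcal{X}/\theta$ need not be a nice product domain, so care is needed to (i) cover $\tilde{\mathcal{X}}$ by $O(\theta^{-d})$ unit-scale cells, (ii) bound the MIG of the whole domain by the aggregate spectral content of these cells without losing more than constant factors to cross-cell correlations, and (iii) ensure the normalisation $k^\theta(\cdot,\cdot)\le 1$ is respected uniformly in $\theta$, so that the top eigenvalue scale $\lambda_1$ does not secretly reintroduce $\theta$-dependence. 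Once the volume factor is shown to pass through the truncation optimisation as an overall multiplier, the two rates follow directly from the standard unit-length-scale computations.
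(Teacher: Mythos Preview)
Your rescaling-to-unit-length-scale argument is sound and the spectral/Weyl route you sketch does yield the $\theta^{-d}$ prefactor, but the paper takes a much shorter path: it simply cites existing results. For the RBF case it invokes Proposition~2 of Berkenkamp et~al.\ (2019) directly; for the Mat\'ern case it observes that Section~B.2 of the same paper already establishes the polynomial eigendecay constants $C_p = \theta^{-(2\nu+d)}$ and $\beta_P = (2\nu+d)/d$, and then plugs these into Corollary~1 of Vakili et~al.\ (2021), whose bound scales as $C_p^{1/\beta_P} T^{1/\beta_P}(\log T)^{1-1/\beta_P}$, whence $C_p^{1/\beta_P} = \theta^{-d}$ falls out immediately. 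Your approach is more self-contained and explains conceptually \emph{why} the length scale enters as a volume factor, at the price of having to carry out the covering/Weyl argument you flag as the main obstacle; the paper's approach avoids all of that work by delegating both the eigendecay computation and the truncation optimisation to prior literature, so the entire proof is two citations and a substitution. Note also that the spectral balancing you describe would in fact naturally produce the sharper Vakili exponent $T^{d/(2\nu+d)}$ rather than the Srinivas exponent $T^{d(d+1)/(2\nu+d(d+1))}$ appearing in the proposition statement---the paper's own appendix derives the former, so the displayed exponent in the statement seems to be a leftover from the older bound.
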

Based on the GP model, a typical BO algorithm constructs an acquisition function, which tells us how `promising’ a given point is to try next. We will focus on the commonly used Upper Confidence Bound (UCB),  defined as 
$
    \textrm{UCB}_{t}^{\theta, B}(\bm{x}) = \mu_{t-1}^\theta (\bm{x}) + \beta_t^{\theta, B} \sigma^\theta_{t-1}(\bm{x})
$. The GP-UCB algorithm \cite{srinivas2009gaussian, chowdhury2017kernelized} fits a GP model and utilises the UCB criterion to select new points to query. Such an algorithm admits a high-probability regret bound as stated by the next Theorem.
 \begin{restatable}[Theorem 2 in \cite{chowdhury2017kernelized}]{theorem}{gpucbbound} \label{thm:gpucbbound}
     Let us run a GP-UCB utilising a GP with a kernel $k^\theta$ and the exploration bonus of $\beta_t^{\theta, B} = B + \sigma_N \sqrt{2 (\mathcal{I}_T(k^\theta)) + 1 + \ln (1/\delta_A))}$ on a black-box function $f \in \mathcal{H}(k^\theta)$ such that $\lVert f \rVert_{k^{\theta}} \le B$. Then, with probability at least $1-\delta_A$, it admits the following bound on its cumulative regret $R_T \le  \mathcal{O}\left(R^{\theta, B}(T)\right)$, where $R^{\theta, B}(T) = \sqrt{T} \left(B \sqrt{\gamma_T(k^\theta)} + \gamma_T(k^\theta) \right)$.
 \end{restatable}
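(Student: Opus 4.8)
The plan is to follow the classical GP-UCB regret decomposition, turning the per-round confidence guarantee of Theorem~\ref{theorem:ucb} into a cumulative bound. First I would condition on the high-probability event of Theorem~\ref{theorem:ucb}, which holds simultaneously for all $\bm{x}\in\mathcal{X}$ and all $t=1,\dots,T$ with probability at least $1-\delta_A$; the rest of the argument is deterministic on this event. Since the acquisition rule sets $\bm{x}_t=\arg\max_{\bm{x}}\textrm{UCB}_t^{\theta,B}(\bm{x})$ and $\bm{x}^\star$ was itself available to be chosen, the upper half of the confidence bound gives
\begin{equation*}
f(\bm{x}^\star)\le\mu_{t-1}^\theta(\bm{x}^\star)+\beta_t^{\theta,B}\sigma_{t-1}^\theta(\bm{x}^\star)=\textrm{UCB}_t^{\theta,B}(\bm{x}^\star)\le\textrm{UCB}_t^{\theta,B}(\bm{x}_t).
\end{equation*}
Combining this with the lower half applied at $\bm{x}_t$, i.e. $f(\bm{x}_t)\ge\mu_{t-1}^\theta(\bm{x}_t)-\beta_t^{\theta,B}\sigma_{t-1}^\theta(\bm{x}_t)$, the instantaneous regret collapses to $r_t=f(\bm{x}^\star)-f(\bm{x}_t)\le 2\beta_t^{\theta,B}\sigma_{t-1}^\theta(\bm{x}_t)$.

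Summing over $t$ and using that $\beta_t^{\theta,B}$ is non-decreasing in $t$ (as $\mathcal{I}_{t-1}(k^\theta)$ is non-decreasing), I would bound $R_T\le 2\beta_T^{\theta,B}\sum_{t=1}^T\sigma_{t-1}^\theta(\bm{x}_t)$ and then apply Cauchy--Schwarz to get $\sum_{t=1}^T\sigma_{t-1}^\theta(\bm{x}_t)\le\sqrt{T\sum_{t=1}^T\big(\sigma_{t-1}^\theta(\bm{x}_t)\big)^2}$. The next step is the standard variance-to-information-gain lemma: the scaling assumption $k^\theta(\cdot,\cdot)\le 1$ forces $(\sigma_{t-1}^\theta)^2\le 1$, so the elementary (concavity) inequality
\begin{equation*}
\big(\sigma_{t-1}^\theta(\bm{x}_t)\big)^2\le\frac{1}{\log(1+\sigma_N^{-2})}\log\!\Big(1+\sigma_N^{-2}\big(\sigma_{t-1}^\theta(\bm{x}_t)\big)^2\Big)
\end{equation*}
applies termwise, and the telescoping identity $\sum_{t=1}^T\log\!\big(1+\sigma_N^{-2}(\sigma_{t-1}^\theta(\bm{x}_t))^2\big)=\log\lvert I+\sigma_N^{-2}\bm{K}_T^\theta\rvert$ for the posterior Gram matrix then yields $\sum_{t=1}^T(\sigma_{t-1}^\theta(\bm{x}_t))^2\le\mathcal{O}(\mathcal{I}_T(k^\theta))$, the last step being the defining inequality $\tfrac12\log\lvert I+\sigma_N^{-2}\bm{K}_T^\theta\rvert\le\mathcal{I}_T(k^\theta)$. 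This gives $R_T\le\mathcal{O}\big(\beta_T^{\theta,B}\sqrt{T\,\mathcal{I}_T(k^\theta)}\big)$.

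It then remains to substitute the explicit bonus $\beta_T^{\theta,B}=B+\sigma_N\sqrt{2(\mathcal{I}_T(k^\theta)+1+\ln(1/\delta_A))}=\mathcal{O}\big(B+\sqrt{\mathcal{I}_T(k^\theta)}\big)$, treating $\sigma_N$ and $\delta_A$ as constants, whence
\begin{equation*}
R_T\le\mathcal{O}\!\Big(\big(B+\sqrt{\mathcal{I}_T(k^\theta)}\big)\sqrt{T\,\mathcal{I}_T(k^\theta)}\Big)=\mathcal{O}\!\Big(\sqrt{T}\big(B\sqrt{\mathcal{I}_T(k^\theta)}+\mathcal{I}_T(k^\theta)\big)\Big).
\end{equation*}
Invoking $\mathcal{I}_T(k^\theta)\le\mathcal{O}(\gamma_T(k^\theta))$ from Proposition~\ref{prop:lengthscalemig} recovers exactly $R_T\le\mathcal{O}(R^{\theta,B}(T))$ with $R^{\theta,B}(T)=\sqrt{T}(B\sqrt{\gamma_T(k^\theta)}+\gamma_T(k^\theta))$.

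I expect the main obstacle to be the variance-to-information-gain lemma bounding $\sum_t(\sigma_{t-1}^\theta)^2(\bm{x}_t)$ by the MIG: the UCB-rule and confidence-bound manipulations are essentially mechanical once Theorem~\ref{theorem:ucb} is granted, whereas this lemma requires the posterior log-determinant telescoping identity and careful use of the boundedness $(\sigma_{t-1}^\theta)^2\le 1$ to license the concavity inequality. A secondary point to watch is that the constant produced by this lemma depends only on $\sigma_N$, so it is absorbed cleanly into the $\mathcal{O}(\cdot)$ once dependence on $\sigma_N$ and $\delta_A$ is suppressed, as in the statement.
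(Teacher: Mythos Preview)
Your proposal is correct and follows exactly the standard GP-UCB argument of Chowdhury and Gopalan that the paper simply cites (as ``Theorem~2 in \cite{chowdhury2017kernelized}'') without reproducing: optimism from Theorem~\ref{theorem:ucb} gives $r_t\le 2\beta_t^{\theta,B}\sigma_{t-1}^\theta(\bm{x}_t)$, Cauchy--Schwarz and the variance-to-log-determinant lemma give $\sum_t(\sigma_{t-1}^\theta(\bm{x}_t))^2\le\mathcal{O}(\mathcal{I}_T(k^\theta))$, and substituting $\beta_T^{\theta,B}$ yields the stated bound. The paper itself relies on this same intermediate inequality $2\sum_t\beta_t\sigma_{t-1}(\bm{x}_t)\le C R^{\theta,B}(T)$ later (in the proof of Lemma~\ref{lemma:remaininghypers}), so your reconstruction is precisely what is being invoked.
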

 In our notation, note the distinction between the regret of an algorithm $R_T$ and the scaling of its bound $R^{\theta, B}(T)$. As the maximum regret we can possibly suffer at any time step, while optimising a function with property $\lVert f \rVert_{k^{\theta}} \le B$,  is bounded as $r_t \le 2B$ \footnote{This is because $f(\bm{x}^\star) - f(\bm{x}_t) \le 2\lVert f \rVert_{\infty} \le 2\lVert f \rVert_{k^\theta} \le 2B$.}, we are going to assume the bound obeys the property $R^{\theta, B}(t + 1) - R^{\theta, B}(t) \le 2B$ for all $t=1,\dots,T-1$, as otherwise the bound can be trivially improved. 
 
 In order for the bound of Theorem \ref{theorem:ucb} to hold, we need to know the length scale $\theta$ and an upper bound on the RKHS norm $B$ of the black-box function for the given kernel $k^\theta$. Inspecting the regret bound together with Proposition \ref{prop:lengthscalemig}, we see that selecting the smallest $B$ (i.e. the tightest bound) and the longest length scale $\theta$ results in the smallest $R^{\theta, B}(T)$. Note that the same function $f(\cdot)$, can have different RKHS norms under kernels with different length scale values. As such, to obtain the optimal scaling of the regret bound, one needs to jointly optimise for $\theta$ and $B$. The optimal hyperparameters are thus $\theta^\star, B^\star = \arg\min_{\theta, B \in \mathbb{R}^+} R^{\theta, B}(T)$ such that $\lVert f \rVert_{k^{\theta^\star}} \le B^\star$. We assume we are given some initial $\theta_0 \ge \theta^\star$ and $B_0 \le B^\star$.  As explained in the introduction, in practice, those initial values could be found by maximising the marginal likelihood for a small number of initial data points.  We now notice one interesting property. In the case of RBF and  $\nu$-Matérn kernels, if we change the length scale value from $\theta_0$ to $\theta$ and the norm bound from $B_0$ to $B$, we  get that the regret bound with those new hyperparameters scales as follows:
\begin{equation*}
    R^{\theta,  B}(T) =  \sqrt{T}  \left(\left(\frac{B}{B_0} \right)\left(\frac{\theta_0}{\theta} \right)^{d/2}  B_0\sqrt{\gamma_T(k^{\theta_0})} + \left(\frac{\theta_0}{\theta} \right)^{d}\gamma_T(k^{\theta_0}) \right) .
\end{equation*}
Since $\gamma_T(k^{\theta})$ is increasing in $T$, for   large enough $T$ we have $B < \sqrt{\gamma_T(k^\theta)}$ and any $B < \left(\frac{\theta_0}{\theta} \right)^{d/2}B_0$ does not affect the bound's order dependence. As such, whenever we decrease lengthscale to $\theta$, we can increase norm bound by $\left(\frac{\theta_0}{\theta} \right)^{d/2}$ essentially "for free". As such, we are going to use $\theta$-dependent norms in the form of $B(\theta, N) = (\frac{\theta_0}{\theta})^{d/2} N$, where $N$ is the norm bound under $\theta_0$ and becomes the new hyperparameter we wish to select, instead of $B$. The optimal values of hyperparameters under this new parameterization are thus $\theta^\star, N^\star = \arg\min R^{\theta, B(\theta, N)}(T)$ subject to $\lVert f \rVert_{k^{\theta^\star}} \le B(\theta^\star, N^\star)$. Notice that $\min_{\theta \in (0, \theta_0]} B(\theta, N^\star) = N^\star $ and as such it does not make sense to try values of $N$ smaller than $B_0$. Using this new parameterization brings an important benefit, as stated next.

\begin{restatable}[Consequence of Lemma 4 in \cite{bull2011convergence}]{lemma}{notexcluding}
\label{lemma:notexcluding} In case of RBF and Matérn kernels, for any $\theta < \theta^\star$ and $N > N^\star$, we have that $ \lVert f \rVert_{k^{\theta}} \le B(\theta,N)$.
\end{restatable}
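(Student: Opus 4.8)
The plan is to reduce the statement to a single norm-comparison inequality between RKHS spaces of different length scales, which is exactly the content of Lemma 4 of \cite{bull2011convergence}. The natural starting point is the defining property of the optimal hyperparameters: by construction $\theta^\star, N^\star$ satisfy the feasibility constraint $\lVert f \rVert_{k^{\theta^\star}} \le B(\theta^\star, N^\star) = \left(\frac{\theta_0}{\theta^\star}\right)^{d/2} N^\star$. I would take this as given and propagate it down to any smaller length scale $\theta < \theta^\star$.

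The key ingredient I would invoke is the monotone embedding of RKHS spaces for the RBF and Matérn kernels: for $\theta \le \theta'$ one has $\mathcal{H}(k^{\theta'}) \subseteq \mathcal{H}(k^{\theta})$ together with the quantitative norm bound
\begin{equation*}
    \lVert f \rVert_{k^{\theta}} \le \left(\frac{\theta'}{\theta}\right)^{d/2} \lVert f \rVert_{k^{\theta'}}.
\end{equation*}
This is precisely the scaling already exploited implicitly in the reparameterisation $B(\theta, N) = (\theta_0/\theta)^{d/2} N$, so the exponent $d/2$ should be consistent across both kernel families. Applying this inequality with $\theta' = \theta^\star$ gives $f \in \mathcal{H}(k^{\theta})$ and
\begin{equation*}
    \lVert f \rVert_{k^{\theta}} \le \left(\frac{\theta^\star}{\theta}\right)^{d/2} \lVert f \rVert_{k^{\theta^\star}}.
\end{equation*}

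From here the argument is purely algebraic. Substituting the feasibility bound on $\lVert f \rVert_{k^{\theta^\star}}$ and telescoping the two length-scale ratios, the factor $\theta^\star$ cancels:
\begin{equation*}
    \lVert f \rVert_{k^{\theta}} \le \left(\frac{\theta^\star}{\theta}\right)^{d/2}\left(\frac{\theta_0}{\theta^\star}\right)^{d/2} N^\star = \left(\frac{\theta_0}{\theta}\right)^{d/2} N^\star .
\end{equation*}
Finally, since $N > N^\star$ and $(\theta_0/\theta)^{d/2} > 0$, the right-hand side is bounded by $(\theta_0/\theta)^{d/2} N = B(\theta, N)$, which is the claim.

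I expect the only genuine obstacle to be the careful invocation of Lemma 4 of \cite{bull2011convergence}: one must confirm that its statement yields exactly the multiplicative constant $(\theta'/\theta)^{d/2}$ (rather than a different power of the ratio or an extra absolute constant) uniformly over both the RBF and the $\nu$-Matérn kernels, and that the embedding runs in the stated direction, with the smaller length scale giving the larger space. Once the norm-comparison inequality is pinned down with this constant, the telescoping cancellation of $\theta^\star$ and the use of $N > N^\star$ are immediate, so essentially all the mathematical content sits in transcribing and orienting the cited lemma correctly.
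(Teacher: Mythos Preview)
Your proposal is correct and is exactly the argument the paper has in mind: the paper does not spell out a proof but simply labels the lemma a ``Consequence of Lemma 4 in \cite{bull2011convergence}'', and your chain---feasibility at $(\theta^\star,N^\star)$, then Bull's norm-comparison $\lVert f\rVert_{k^\theta}\le(\theta^\star/\theta)^{d/2}\lVert f\rVert_{k^{\theta^\star}}$, then telescoping and using $N>N^\star$---is precisely that consequence made explicit. Your caveat about checking the exact exponent and the direction of the embedding in Bull's lemma is the only thing to verify, and it does indeed come out as $(\theta'/\theta)^{d/2}$ in the isotropic case for both kernel families.
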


We will thus refer to any pair $(\theta, N)$, such that $\theta \le \theta^\star$ and $N \ge N^\star$, as \textit{well-specified} hyperparameters, as the GP-UCB admits a provable regret bound when they are used (albeit that bound might not be optimal). For simplicity, we are now going to assume that $N^\star$ is known and proceed with solving the problem of only one unknown hyperparameter $\theta^\star$. We will thus be writing $\beta^\theta_t = \beta^{\theta, B(\theta, N^\star)}_t$, $\textrm{UCB}_t^\theta(\cdot) = \textrm{UCB}_t^{\theta, B(\theta, N^\star)}(\cdot)$  and $R^\theta(\cdot) = R^{\theta, B(\theta,N^\star)}(\cdot)$. However, we would like to emphasise that the algorithm we will propose throughout this paper can be extended to the case when $B^{\star}$ is also an unknown hyperparameter, which we do in Appendix \ref{app:lnb}.  

\section{Length scale Balancing}
Aggregation schemes describe a set of rules that a master algorithm should follow while coordinating a number of base algorithms. One such scheme is regret-balancing with elimination \cite{pacchiano2020regret}. This scheme assumes each of the base algorithms comes with a \textit{suspected regret bound}, which is a high-probability bound on its regret that holds if the algorithm is well-specified for the given problem, but might not hold if the learner is misspecified. The scheme always selects the base algorithm that currently has the smallest cumulative regret according to its suspected bound. This ensures that the regret of the master algorithm will not be too far from the regret of the best well-specified candidate. It also removes base algorithms that underperform, compared to others, by more than their suspected bound, as this means their bounds do not hold and, with a high probability, are misspecified.

Our idea is to use regret balancing with elimination while having each base algorithm be a GP-UCB algorithm with a different value of the length scale hyperparameter.  Let us now discuss how to identify candidates for the length scale values. We propose the usage of a candidate-suggesting function $q(\cdot): \mathbb{N} \to \mathbb{R}^+$, such that the $i$th candidate length scale value to consider is given by $q(i)$.

\begin{restatable}[]{definition}{expodisc}
\label{def:expodisc} 
    Let us define the length scale candidate-suggesting function $q(\cdot): \mathbb{N} \to \mathbb{R}^+$ as a mapping for each $i \in \mathbb{N}$ of form: 
    \begin{equation*}
        q(i) = \theta_0  e^{-i / d} .
    \end{equation*}
\end{restatable}

We want to ensure that one of the candidates we will eventually introduce will be close to $\theta^\star$.
Let us denote $\hat{\theta} = \arg\max_{i \in \mathbb{N} \textrm{ }; \, q(i) \le \theta^\star} q(i) $ to be the largest length scale suggested by our candidate-suggesting function that is still smaller than $\theta^\star$. Observe that $\lVert f \rVert_{k^{\hat{\theta}}} \le B(\hat{\theta}, N^\star)$ and thus $\hat{\theta}$ is the largest well-specified length scale value among suggested candidates. As such, the regret bound of the best base learner is $R^{\hat{\theta}}(T)$ and we hope that the regret bound of a master algorithm aggregating this learner with others will be close to $R^{\hat{\theta}}(T)$. Comparing with the regret bound of the GP-UCB algorithm utilising the true optimal length scale value $R^{\theta^\star}(T)$, we get the result stated by the following Lemma~\ref{lemma:boundequiavalence}, proven in Appendix \ref{app:boundeqbound}
\begin{restatable}[]{lemma}{boundequiavalence} \label{lemma:boundequiavalence}
    In the case of both RBF and $\nu$-Matérn kernel, we have that:
    \begin{align*}
   \frac{R^{\hat{\theta}}(T)}{R^{\theta^\star}(T)} & = \mathcal{O}\left(1\right) .
\end{align*}
\end{restatable}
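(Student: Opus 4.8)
The plan is to exploit two facts: first, that the candidate-suggesting function places grid points densely enough (on a logarithmic scale) that $\hat{\theta}$ and $\theta^\star$ differ only by a bounded multiplicative constant; and second, that both summands of the GP-UCB regret bound $R^\theta(T)$ share the same power-law dependence $\theta^{-d}$ once the $\theta$-dependent norm $B(\theta, N^\star)$ is substituted. Together these reduce the claimed ratio to an explicit constant.

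First I would pin down the gap between $\hat{\theta}$ and $\theta^\star$. By Definition \ref{def:expodisc}, $\hat{\theta} = \theta_0 e^{-i^\star/d}$ for the smallest index $i^\star$ with $q(i^\star) \le \theta^\star$; since $q(i^\star - 1) = \hat{\theta}\, e^{1/d} > \theta^\star$ by maximality, I obtain the two-sided bound $\hat{\theta} \le \theta^\star < \hat{\theta}\, e^{1/d}$, that is $1 \le \theta^\star/\hat{\theta} < e^{1/d}$. The exponent $1/d$ in $q$ is chosen precisely so that this ratio, once raised to the $d$th power in the regret comparison, yields the clean constant $e$.

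Next I would show that $R^\theta(T)$ factorises. Using Proposition \ref{prop:lengthscalemig}, in both the RBF and $\nu$-Matérn cases one has $\gamma_T(k^\theta) = \theta^{-d} h(T)$ for a function $h(T)$ that does not depend on $\theta$ (the $\log(T)^{d+1}$ factor for RBF, and $T^{d(d+1)/(2\nu+d(d+1))}\log(T)^{2\nu/(2\nu+d)}$ for Matérn). Substituting the $\theta$-dependent norm $B(\theta, N^\star) = (\theta_0/\theta)^{d/2} N^\star$ into $R^{\theta, B}(T) = \sqrt{T}\bigl(B\sqrt{\gamma_T(k^\theta)} + \gamma_T(k^\theta)\bigr)$, the norm term contributes $\theta^{-d/2}\cdot\theta^{-d/2} = \theta^{-d}$ and the information-gain term contributes $\theta^{-d}$, so that
\begin{equation*}
R^\theta(T) = \frac{1}{\theta^d}\,\sqrt{T}\left(\theta_0^{d/2} N^\star \sqrt{h(T)} + h(T)\right) = \frac{G(T)}{\theta^d},
\end{equation*}
with $G(T)$ independent of $\theta$. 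This cancellation is the crux of the argument: it is exactly the free norm increase identified earlier that makes the two summands scale identically in $\theta$.

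Finally, dividing the two bounds cancels $G(T)$ entirely and leaves
\begin{equation*}
\frac{R^{\hat{\theta}}(T)}{R^{\theta^\star}(T)} = \left(\frac{\theta^\star}{\hat{\theta}}\right)^d < \left(e^{1/d}\right)^d = e,
\end{equation*}
which, together with the trivial lower bound of $1$, gives $\mathcal{O}(1)$. I expect the only genuine subtlety to be the factorisation step: one must verify that the stated forms of $\gamma_T$ really isolate \emph{all} $\theta$-dependence into the prefactor $\theta^{-d}$, and that the norm exponent $d/2$ matches so that the $B\sqrt{\gamma_T}$ term and the $\gamma_T$ term carry the same power of $\theta$. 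The index-gap computation and the final division are then routine.
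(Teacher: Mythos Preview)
Your proposal is correct and follows essentially the same route as the paper's proof: both arguments reduce the ratio $R^{\hat{\theta}}(T)/R^{\theta^\star}(T)$ to $(\theta^\star/\hat{\theta})^d$ by observing that the $\theta$-dependence in both terms of $R^\theta(T)$ collapses to a common $\theta^{-d}$ factor once $B(\theta,N^\star)=(\theta_0/\theta)^{d/2}N^\star$ is substituted, and then bound this ratio by $e$ using the geometric spacing $q(i)/q(i+1)=e^{1/d}$. Your presentation makes the factorisation $R^\theta(T)=G(T)/\theta^d$ explicit, whereas the paper's proof is a terse one-liner that leaves this step implicit, but the content is identical.
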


This Lemma shows that the regret bound of the best of our base algorithms is only a constant factor away from the bound of the algorithm using the optimal length scale value. However, as for any $i \in \mathbb{N}$, we have $q(i) > 0$, and the candidate-suggesting function $q(\cdot)$ introduces infinitely many candidates. As we can only aggregate a finite number of base algorithms, we thus propose to gradually introduce new optimisers equipped with new candidate length scale values. Observe that if we stopped our quantisation at some lower bound $\theta_L$, then we would create a maximum of $q^{-1}(\theta_L)$ candidates, that is, $d \ln(\frac{\theta_0}{\theta_L})$. However, this would require us to know a sure lower bound on the optimal length scale value. Since we do not have this knowledge, we could employ a mechanism similar to A-GP-UCB, where we progressively decrease the \textit{suspected lower bound value} $\theta_L(t) = \frac{\theta_0}{g(t)}$, based on some growth function $g(t)$. Observe that since $\ln \left(\frac{\theta_0}{\theta_L(t)}\right) = \ln(g(t))$, the number of candidate values grows only logarithmically with the growth function $g(t)$. Same as for A-GP-UCB, this growth function needs to be specified by the user, and we describe how this choice can be made in §\ref{sec:experiment}. However, we would like to emphasise that, unlike A-GP-UCB which simply sets its length scale value to $\theta_L(t)$, we instead introduce new learners with shorter length scale values, while still keeping the old learners with longer values. This strategy is thus more robust to the choice of the growth function, which is reflected in better scaling of the regret bound we derive later. In Algorithm \ref{alg:rb_bo}, we present LB-GP-UCB, an algorithm employing this mechanism. We now briefly explain the logic behind its operations.

\begin{algorithm}
\caption{Length scale Balancing GP-UCB (LB-GP-UCB) }\label{alg:rb_bo}
\begin{algorithmic}[1]
\REQUIRE initial length scale value $\theta_0$; suspected regret bounds $R^\theta(\cdot)$; \\growth function $g(\cdot)$;
confidence parameters $\{\xi_t\}_{t=1}^T$ and
$\{\beta^\theta_t\}_{t=1}^T$ 

\STATE Set $\mathcal{D}_{0} = \emptyset$, $\Theta_1 = \{\theta_0\}$, $S^\theta_0 = \emptyset$ for all $\theta \in \Theta$,  length scale counter $l=1$ \label{alg_line:set_initial_conditions}
\FOR{$t = 1, \dots, T$}
\STATE Select length scale  $\theta_t = \arg\min_{\theta \in \Theta_t} R^\theta(|S_{t-1}^{\theta}| + 1)$ \label{alg_line:select_lengthscale}
\STATE Select point to query $\bm{x}_t =  \underset{\bm{x} \in \mathcal{X}}{\arg\max} \textrm{ UCB}_{t-1}^{\theta_t}(\bm{x})$
\STATE Query the black-box $y_t = f(\bm{x}_t) + \epsilon_t$ 
\STATE Update data buffer $\mathcal{D}_t = \mathcal{D}_{t-1} \cup (x_t, y_t)$
\STATE For each $\theta \in \Theta_{t}$, set $S_t^{\theta} = \{\tau  = 1,\dots, t: \theta_{\tau} = \theta\}$
\STATE Initialise length scales set for new iteration $\Theta_{t+1} := \Theta_{t}$
\IF{ $\forall_{\theta \in \Theta_t} |S_t^\theta| \neq 0$}  \label{alg_line:if_add_new_lengthscale}
\STATE Define $L_t(\theta) = \left( \frac{1}{|S_t^{\theta}|} \sum_{\tau \in S_t^{\theta}}y_{\tau} - \sqrt{\frac{\xi_{t}}{|S_{t}^{\theta}|}} \right)$ \label{alg_line:define_L}
\STATE  \COMMENT{\# \texttt{Eliminate underperforming length scale values}}
\STATE $\Theta_{t+1} = \left\{ \theta \in \Theta_t: L_t(\theta) +  \frac{2}{|S_t^{\theta}|} \sum_{\tau \in S_t^\theta} \beta_\tau^\theta \sigma^\theta_{\tau-1}(\bm{x}_\tau)  \ge  \max_{\theta^\prime \in \Theta_{t}} L_t(\theta^\prime) \right\}$
\label{alg_line:eliminate_lengthscale}
\ENDIF

\IF{$q(l+1) \le \frac{\theta_0}{g(t)}$} \label{alg_line:if_add_legnthscale}
    \STATE $\Theta_{t+1} := \Theta_{t+1} \cup \{q(l+1)\}$ 
    \COMMENT{\# \texttt{Add shorter length scales}} \label{alg_line:add_new_lengthscale}
    \STATE $l := l + 1$ \label{alg_line:update_counter}
\ENDIF
\ENDFOR

\end{algorithmic}
\end{algorithm}

The algorithm starts in line~\ref{alg_line:set_initial_conditions} by initialising the set of candidates to just the upper bound $\theta_0$. Later on, in lines~\ref{alg_line:if_add_legnthscale}-\ref{alg_line:update_counter}, new candidates are introduced using the candidate-suggesting function $q(\cdot)$ at a pace dictated by the growth function $g(t)$. Typically in aggregation schemes, each one of the base algorithms is run in isolation. However, there is nothing preventing us from making them share the data and as such, selecting a base algorithm in our case simply amounts to choosing the length scale value we will use to fit the GP model at a given time step $t$, which is done in line~\ref{alg_line:select_lengthscale}. This choice is done by the regret-balancing rule $\theta_t = \arg\min_{\theta \in \Theta_t} R^\theta(|S_t^{\theta}| + 1)$, with $R^\theta(\cdot)$ defined as in Theorem \ref{thm:gpucbbound} and $S_t^\theta$ being the set of iterations before $t$ at which length scale value $\theta$ was chosen. Note that we only need to know the scaling of the bound up to a constant. This rule implies that lower length scale values will be selected less frequently than higher values, as their regret bounds grow faster. After that, in line 4, the algorithm utilises the acquisition rule dictated by a model fitted with the selected length scale value to find the point to query next, $\bm{x}_t$.  The idea is that, occasionally, $\theta_t$ will be set to one of the smaller values from $\Theta_t$ and, if that results in finding significantly better function values, then the rejection mechanism in lines~\ref{alg_line:if_add_new_lengthscale}-\ref{alg_line:eliminate_lengthscale} will remove longer length scales from the set of considered values, $\Theta_t$. Otherwise, we will keep all of the length scales and try again after some number of iterations. We now proceed to derive a regret bound for our developed algorithm.

\section{Regret Bound and Proof Sketch}
We now state the formal regret bound of the proposed algorithm, provide a brief sketch of the proof and discuss the result.

\begin{restatable}[]{theorem}{fullrbbound}
\label{thm:fullrbbound}
    Let us use confidence parameters of $\xi_t = 2 \sigma_N^2 \log \left(d\ln(g(t))\pi^2 t^2\right) - \log 3\delta $ and $\beta^{\theta}_t = B(\theta, N^\star) + \sigma_N \sqrt{2 (\gamma^{\theta}_{t-1} + 1 + \ln (2/\delta))}$, then
  with probability at least $1 - \delta$, the cumulative regret $R_T$ of the Algorithm \ref{alg:rb_bo} admits the following bound:
\begin{align*}
    R_T = \mathcal{O} \left(\left(t_0+ \iota \right)B^\star + \left(R^{\theta^\star}(T) + \sqrt{T\xi_T}  \right) \left(\left(\frac{\theta_0}{\theta^\star} \right)^{d} d \ln \frac{\theta_0}{\theta^\star}  + \iota \right) \right),
\end{align*}
     
where $t_0 = g^{-1}\left(e^{-1/d}\theta_0/\theta^\star \right)$ and $\iota = d \ln g(T)$. 
\end{restatable}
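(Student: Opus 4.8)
The plan is to decompose the cumulative regret into contributions from three sources and bound each separately using the regret-balancing machinery of \cite{pacchiano2020regret} adapted to infinitely many, gradually introduced learners. First I would identify the event on which all the suspected regret bounds hold. With $\beta^\theta_t$ chosen as stated, Theorem~\ref{theorem:ucb} guarantees that for every well-specified length scale the UCB confidence intervals are valid, and with the stated $\xi_t$ (a union bound over the at most $d\ln(g(t))$ active candidates and over all $t$ via $\pi^2 t^2$ factors) the empirical-reward concentration underlying $L_t(\theta)$ holds; call the intersection of these the good event, occurring with probability at least $1-\delta$. Everything below is conditioned on this event.

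\emph{First contribution (warm-up before $\hat\theta$ is introduced).} The candidate $\hat\theta$ closest to $\theta^\star$ from below is only introduced once the suspected lower bound $\theta_L(t)=\theta_0/g(t)$ drops to $q$-level $\hat\theta$, which happens at $t_0 = g^{-1}(e^{-1/d}\theta_0/\theta^\star)$. Before that time the best learner may be misspecified, so I would simply bound the per-step regret by the maximal $2B^\star$, giving the $t_0 B^\star$ term.

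\emph{Second contribution (balancing cost once $\hat\theta$ is live).} This is the heart of the argument and the main obstacle. Following the regret-balancing-with-elimination analysis, I would show that on the good event $\hat\theta$ is never eliminated (its suspected bound genuinely holds, so the elimination test in line~\ref{alg_line:eliminate_lengthscale} never triggers against it), and then argue that the master's total regret is controlled by the \emph{largest} suspected bound among surviving learners times the balancing overhead. The selection rule $\theta_t=\arg\min_\theta R^\theta(|S^\theta_{t-1}|+1)$ equalises the accumulated suspected regret across active arms, so each surviving learner is pulled a number of times that keeps its suspected regret comparable to $\hat\theta$'s; summing over the at most $\iota=d\ln g(T)$ active candidates yields the $(R^{\theta^\star}(T)+\sqrt{T\xi_T})$ factor multiplied by the number of candidates. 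Here I must carefully use Lemma~\ref{lemma:boundequiavalence} to replace $R^{\hat\theta}(T)$ by $R^{\theta^\star}(T)$ up to a constant, and use the per-step increment assumption $R^{\theta,B}(t+1)-R^{\theta,B}(t)\le 2B$ to control the overshoot from testing $|S^\theta_{t-1}|+1$ rather than $|S^\theta_{t-1}|$. The elimination guarantee additionally shows that any learner with length scale longer than $\hat\theta$ that is genuinely misspecified contributes at most its own suspected bound before being removed, which is absorbed into the same factor.

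\emph{Third contribution (counting active candidates).} The factor $((\theta_0/\theta^\star)^d\, d\ln(\theta_0/\theta^\star) + \iota)$ comes from two places: the $d\ln(\theta_0/\theta^\star)$ candidate values with length scale in $[\hat\theta,\theta_0]$ whose suspected bounds are each within a $(\theta_0/\theta^\star)^d$ factor of $R^{\theta^\star}(T)$ by Proposition~\ref{prop:lengthscalemig}, and the $\iota=d\ln g(T)$ shorter, possibly-over-exploring candidates that survive. I would verify that shorter learners than $\hat\theta$, even if never eliminated, are each selected so rarely under the balancing rule that their individual contribution is at most $\sqrt{T\xi_T}$ up to the balancing overhead, and that their number is exactly the count of $q$-candidates below $\hat\theta$ down to $\theta_L(T)$, namely $\iota$. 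Assembling the three contributions and simplifying the constants gives the claimed bound. The delicate points I expect to spend the most care on are the exact form of the balancing overhead (ensuring the overhead multiplies rather than adds to the per-learner bound) and verifying that the union-bound budget encoded in $\xi_t$ indeed covers the time-varying, growing candidate set.
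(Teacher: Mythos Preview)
Your high-level decomposition (warm-up before $\hat\theta$ appears, then balancing among the active learners) matches the paper, but the way you account for the multiplicative factor $(\theta_0/\theta^\star)^d$ is backwards, and this is a genuine gap.

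In your third contribution you say the $d\ln(\theta_0/\theta^\star)$ candidates with length scale in $[\hat\theta,\theta_0]$ have ``suspected bounds each within a $(\theta_0/\theta^\star)^d$ factor of $R^{\theta^\star}(T)$ by Proposition~\ref{prop:lengthscalemig}''. But these candidates have \emph{longer} length scales than $\theta^\star$, so by Proposition~\ref{prop:lengthscalemig} their suspected bounds are \emph{smaller} than $R^{\theta^\star}(T)$, not larger. More importantly, they are precisely the \emph{misspecified} learners $\mathcal{M}_0$: their suspected bounds need not control their actual regret at all, so you cannot sum their suspected bounds to get the contribution. Relatedly, your statement in the second contribution that a misspecified learner ``contributes at most its own suspected bound before being removed'' is false and is not what the paper does.

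The paper's mechanism is different on both points. First, it separates the post-$t_0$ iterations into the set $\mathcal{T}$ where some candidate is rejected (at most $|A|\le\iota$ of them, each costing $2B^\star$) and the complement. On iterations $t\notin\mathcal{T}$ the elimination test did \emph{not} fire, so for each misspecified $\theta$ the inequality in line~\ref{alg_line:eliminate_lengthscale} compared against $\hat\theta$ gives a bound on $\sum_{t\in S_T^\theta\setminus\mathcal{T}} r_t$ in terms of $R^{\hat\theta}(|S_T^{\hat\theta}|)$ times the pull-count ratio $|S_T^\theta|/|S_T^{\hat\theta}|$. Second, because misspecified candidates have \emph{smaller} suspected bounds, the balancing rule selects them \emph{more} often than $\hat\theta$, not less; the key step (Lemma~\ref{lemma:rbfbalancingbound}) inverts the balancing condition $R^\theta(|S_T^\theta|)\le R^{\hat\theta}(|S_T^{\hat\theta}|)+2B^\star$ using the explicit form $R^\theta(t)=\sqrt{t\gamma_t^\theta}(\sqrt{\gamma_t^\theta}+B(\theta,N^\star))$ to obtain $\sqrt{|S_T^\theta|/|S_T^{\hat\theta}|}\le(\theta_0/\theta^\star)^d$. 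Summing this over the $|\mathcal{M}_0|=\mathcal{O}(d\ln(\theta_0/\theta^\star))$ misspecified candidates is what produces the $(\theta_0/\theta^\star)^d\, d\ln(\theta_0/\theta^\star)$ term. The remaining $\iota$ in the bracket is simply $|A|$, which covers both the well-specified (shorter than $\hat\theta$) learners---whose actual regret \emph{is} bounded by their suspected bound, hence by $R^{\hat\theta}+2B^\star$ via balancing---and the additive $|\mathcal{M}_0|$ term in the regret-balancing lemma. Without the pull-count-ratio argument your plan has no valid route to the stated factor.
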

\begin{proof}(sketch) We provide a sketch of the result here and defer the proof to Appendix \ref{app:fullrbbound}. 

Let us denote by $t_0$ the iteration at which the first well-specified length scale ($\hat{\theta} \le \theta^\star$) is added to the candidate set in line~\ref{alg_line:add_new_lengthscale}. This will happen at the first iteration after $g^{-1}(\frac{\theta_0}{\theta^\star})$, where the condition in line~\ref{alg_line:if_add_legnthscale} will trigger. Given the ratios between consecutive candidates suggested by $q(\cdot)$, we get that $t_0 = \lceil g^{-1}(\frac{\theta_0}{\theta^\star} e^{-1/d}) \rceil$. On iterations up to $t_0$, we can potentially suffer the highest as possible, thus the cumulative regret can be bounded as:
\begin{equation*}
    R_T = \sum_{t=1,\dots,t_0-1}r_t + \sum_{t=t_0,\dots,T} r_t \le 2B^\star t_0 + \Tilde{R}_T ,
\end{equation*}
where $\Tilde{R}_T$ is the regret of the algorithm after $t_0$. Let us define by $\mathcal{T}$ the set of iterations, where we reject at least one length scale value in line~\ref{alg_line:eliminate_lengthscale}. We thus have:
\begin{equation*}
    \Tilde{R}_T = \sum_{t \in \mathcal{T}} r_t + \sum_{t \notin \mathcal{T}} r_t \le 2B^\star|\mathcal{T}| +  \sum_{t \notin \mathcal{T}} r_t \le 2B^\star q^{-1}(\frac{\theta_0}{g(T)})  +  \sum_{t \notin \mathcal{T}} r_t ,
\end{equation*}
where the second inequality comes from the fact that we cannot reject more candidates than we have introduced in total. The remaining thing to do is to bound $\sum_{t \notin \mathcal{T}} r_t$. This expression is the cumulative regret of the iterations, where no candidates are rejected and where at least one of the well-specified candidates has been introduced. We can bound this term using a similar strategy as in \cite{pacchiano2020regret}. First, we show that, with a probability of at least $1 - \delta$, the well-specified candidate introduced at $t_0$ will not be rejected. Second, since no other candidates are rejected at iterations $t \notin \mathcal{T}$, it means that the function values achieved at those iterations cannot be too different from the ones achieved when using $\hat{\theta}$. Using this fact, we arrive at a statement:
\begin{equation*}
    \sum_{t \notin \mathcal{T}} r_t \le \left(R^{\hat{\theta}}(|S_t^{\hat{\theta}}|) + \sqrt{T\xi_T}  \right) \left(\sum_{\theta \in \mathcal{M}_0}\sqrt{\frac{|S_t^{\theta}|}{|S_t^{\hat{\theta}}|}} + q^{-1} \left(\frac{\theta_0}{g(T)} \right) \right),
\end{equation*}
where $\mathcal{M}_0$ is the set of misspecified length scale values that were chosen at least once after $t_0$. The rest of the proof consists of bounding $\frac{|S_t^{\theta}|}{|S_t^{\hat{\theta}}|}$, which can be done due to the selection rule in line~\ref{alg_line:select_lengthscale}. 

\end{proof}
\textbf{Optimality} In Appendix \ref{app:optimality}, we show that for a fixed choice of growth function, we get $R_T / R^{\theta^\star}(T) = \mathcal{O}(d\ln g(T))$. This is an improvement compared to A-GP-UCB achieving $R_T / R^{\theta^\star}(T) = \mathcal{O}(g(T)^{d})$. As such the bound of our algorithm is significantly closer to the optimal bound than the one of A-GP-UCB.  The faster $g(\cdot)$ is increasing, the quicker we will be able to find the first well-specified candidate, which will decrease the term $t_0 B^\star$ in the bound. At the same time, it will increase all the terms depending on $g(T)$, but as our bound only scales with $d \log g(T)$, we are able to select much more aggressive growth functions than A-GP-UCB, whose bound scales with $g(T)^{d}$. In the Experiments section we compare the performance of LB-GP-UCB and A-GP-UCB using different growth functions $g(t)$ and show that the former algorithm is much more robust to the choice of $g(t)$.

\begin{table}[t]
    
    \caption{Comparison of optimality for A-GP-UCB and LB-GP-UCB for fixed functions $g(\cdot)$ and $b(\cdot)$. $R^\star (T)$ refers to the scaling of the regret bound of an oracle optimiser, knowing the optimal hyperparameters. See Appendix \ref{app:optimality} for more details.}

    \centering

    \begin{tabular}{ccc}
    \toprule
    Algorithm   & \multicolumn{2}{c}{Optimality $R_T / R^\star(T)$ }\\
    & Unknown $\theta$ & Unknown $\theta$ and $B$ \\
    \midrule
      
      A-GP-UCB \cite{berkenkamp2019no}   &  $\mathcal{O}(g(T)^{d})$ & $\mathcal{O}(b(T)g(T)^{d})$ \\ 
      
      LB-GP-UCB / LNB-GP-UCB (ours)   &  $\mathcal{O}({d} \ln g(T))$ & $\mathcal{O}({d} \ln g(T) \ln b(T)) $ \\
      \bottomrule
    \end{tabular}
    
    \label{tab:regretcomp}
\end{table}


\textbf{Extension to unknown $N^\star$} As we discussed before, LB-GP-UCB requires us to know the initial RKHS norm $N^\star$. However, we can easily extend the algorithm to handle the case of unknown $N^\star$, which we do in Appendix \ref{app:lnb}. In Algorithm \ref{alg:lnb_bo} we present Length scale and Bound Balancing (LNB) --- an algorithm, which in addition to having candidates for $\theta$ also maintains a number of candidates for $N^\star$. As such, it requires us to specify another growth function $b(t)$ for exploring new norm values as well as the initial RKHS norm $B_0$. We prove its cumulative regret bound in Theorem \ref{thm:fullrbboundrkhsbound}. We can similarly derive the suboptimality gap for our algorithm in this case. We display it in Table \ref{tab:regretcomp} together with the gap of A-GP-UCB. We can see that in this setting, we also achieve an improvement.

\section{Experiments}\label{sec:experiment}

We now evaluate the performance of our algorithm on multiple synthetic and real-world functions. To run experiments we used the compute resources listed in Appendix \ref{app:compute} and implemented based on the codebase of \cite{adachi2024quadrature}, which uses the BoTorch package \cite{balandat2020botorch, paszke2019pytorch}. We open-source our code\footnote{\url{https://github.com/JuliuszZiomek/LB-GP-UCB}}.
We used the UCB acquisition function and we compared different techniques for selecting the length scale value. For all experiments, we used isotropic $\nu$-Matérn kernel with $\nu=2.5$. We standardise the observations before fitting the GP model and as such keep the kernel outputscale fixed to $1.0$. The first baseline we compare against is MLE, where the length scale value is optimised using a multi-start L-BFGS-B method \cite{liu1989limited} (the default BoTorch optimiser \cite{balandat2020botorch}) after each timestep by maximising the marginal likelihood for the data collected so far. The next baseline is MCMC, where we employ a fully Bayesian treatment of the unknown length scale value using the NUTS sampler \cite{hoffman2014no}, which we implemented using Pyro \cite{bingham2019pyro}. We use BoTorch's default hyperprior ($\theta \sim \Gamma(3, 6)$) and to select a new point we optimise the expected acquisition function under the posterior samples as described by \cite{de2021bayesian}.  We also compare against A-GP-UCB. To achieve a fair comparison, we used the same growth function $g(t) = \max\{t_0, \sqrt{t}\} $ for both LB-GP-UCB and A-GP-UCB across all experiments, where $t_0$ was selected so that at least 5 candidates are generated for $g(1)$. We study the impact of this choice in the ablation section. We used 10 initial points for each algorithm unless specified otherwise. To select upper bound $\theta_0$ for A-GP-UCB and LB-GP-UCB we fitted a length scale to initial data points with MLE (and we did not use MLE after that).
 We present the results in Figure~\ref{fig:regretresults} below. We show running times in Table \ref{tab:times} in Appendix \ref{app:running_time}. We now describe each benchmark problem in detail.

\begin{figure}[h]
    \centering
    \includegraphics[width=\textwidth]{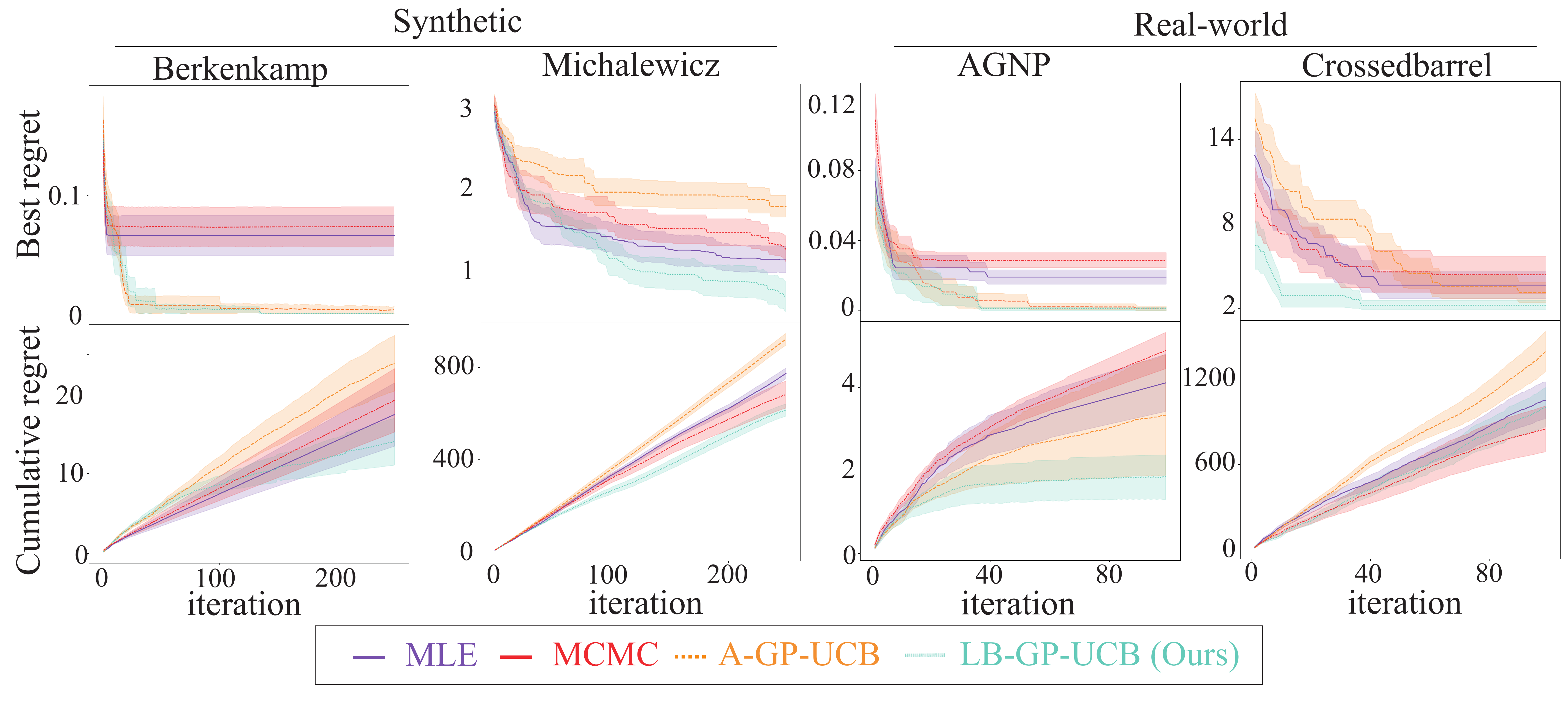}
    \caption{Regret results of the proposed algorithm and baselines on synthetic and real-world tasks. We ran 20 seeds on Berkenkamp and AGNP and 10 seeds on Michalewicz and Crossedbarrel problems. Shaded areas correspond to standard errors.}
    \label{fig:regretresults}
\end{figure}

\textbf{Berkenkamp Toy Problem} We start with a one-dimensional toy problem proposed by the same paper that proposed the A-GP-UCB algorithm \cite{berkenkamp2019no}. We showed a plot of this one-dimensional function in Figure~\ref{fig:berkenkamp}. On the right side of the domain, the function appears to be smoother than on the left side. In this problem, we only use three initial points, to benchmark the ability of algorithms to escape from the local optimum on the right side of the domain. MLE and MCMC can be easily misled towards too-long length scale values, which causes them to get stuck in the local optimum.  Both A-GP-UCB and LB-GP-UCB quickly find the optimal solution, however, due to over-exploration, the cumulative regret of A-GP-UCB grows faster than that of LB-GP-UCB.

\textbf{Michalewicz Synthetic Function} As a next benchmark, we evaluate our algorithm on the five-dimensional Michalewicz synthetic function, which has been designed to be challenging while using MLE for fitting hyperparameters, because it exhibits different degrees of smoothness throughout its domain. In the histogram in Figure~\ref{fig:lengthscale}, we compare the selected length scale value with an estimate of the optimal length scale $\hat{\theta}^*$. We produce this estimate by sampling ten thousand points uniformly through the domain and fitting a length scale value by maximum likelihood to the points with the top 1\% of objective values. In this way, we are able to capture the length scale value that produces a good model for the Michalewicz function around the optimum, where it is least smooth. We can see LB-GP-UCB selects lengthscale value closer to $\hat{\theta}^*$ and as a result outperforms other baselines in terms of both cumulative and best regret metrics.

\textbf{Material Design Problems} We utilise material design tasks proposed by \cite{gongora2020bayesian} and \cite{mekki2021two} - the 4-dimensional CrossedBarrel  and 5-dimensional AGNP   tasks. At each time step, the algorithm can choose which material configuration to try, and observe the objective value, which corresponds to a given material optimisation criterion.  As material design problems are known to exhibit a needle-in-a-haystack behaviour \cite{siemenn2023fast}, on both benchmarks MLE and MCMC get stuck at a suboptimal solution and their best regret does not fall beyond a certain value. A-GP-UCB is able to quickly find low-regret solutions on the AGNP benchmark, but struggles on the Crossedbarrel problem and underperforms in terms of cumulative regret. On the contrary, LB-GP-UCB performs well across both benchmark problems and across both regret metrics.

\textbf{Ablation on $g(t)$} To test robustness of LB-GP-UCB, we evaluate it on Michalewicz function together with A-GP-UCB for different choices of $g(t)$. We try functions of form $g(t) = \max(t_0, t^a)$ for $a \in (0.25, 0.5, 0.75)$. In Figure~\ref{fig:ablation} we plot the final performance of the algorithms after $N=250$ steps as well as the distribution of selected length scale values. We see that A-GP-UCB is very sensitive to the selection of growth function $g(t)$, whereas our algorithm selects similar length scale values regardless of $g(t)$, which results in consistently good best regret results. We can also see that LB-GP-UCB typically selects values around $\hat{\theta}^*$ for different growth functions, whereas A-GP-UCB decreases its length scale beyond $\hat{\theta}^*$, resulting in slower convergence. 
\begin{figure}[h]
    \centering
    \includegraphics[width=0.8\textwidth]{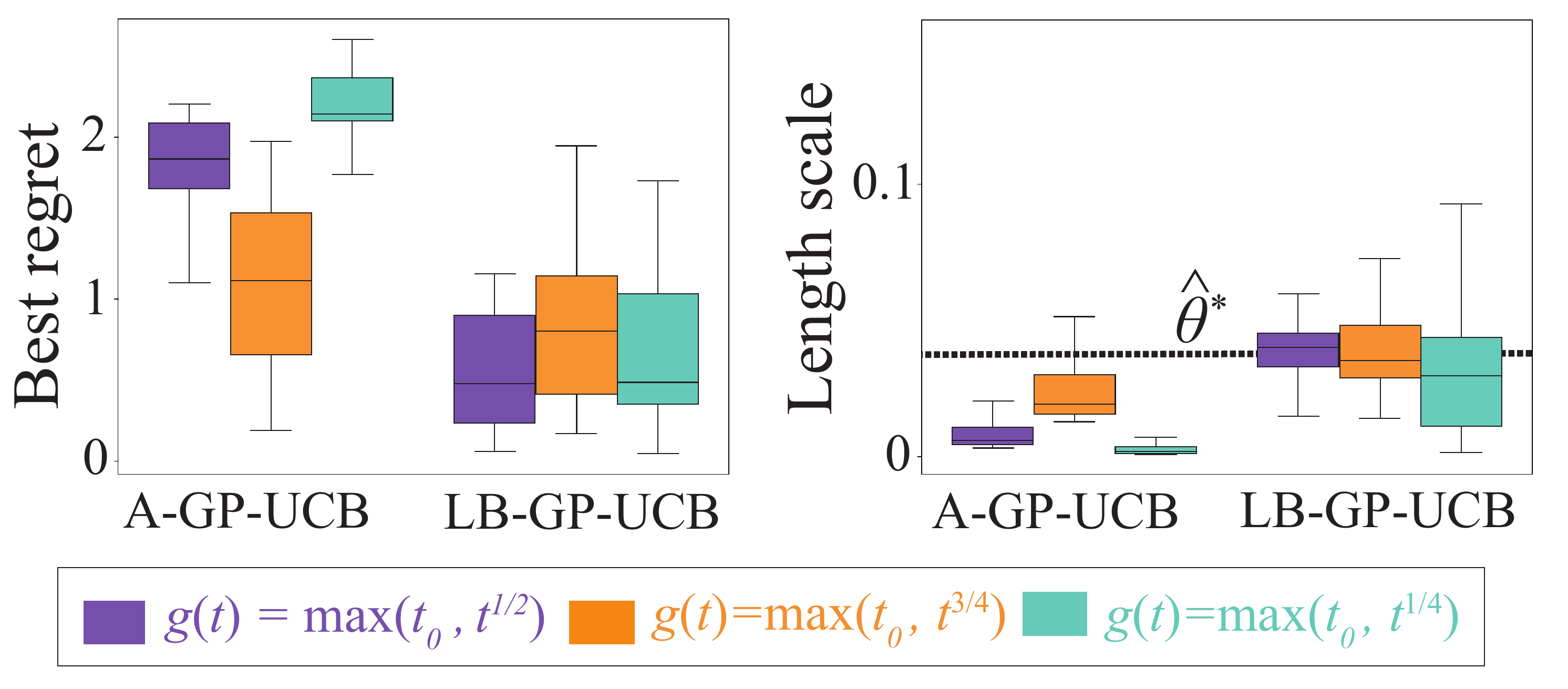}
    \caption{Ablation study of the choice of growth function $g(t)$. $t_0$ is chosen so that at least 5 candidates are generated at $g(1)$. See beginning of §\ref{sec:experiment} for details.}
    \label{fig:ablation}
\end{figure}

\section{Related Work}
We already mentioned the work proposing A-GP-UCB \cite{berkenkamp2019no}, which so far has been the only work providing the guarantees on BO with unknown hyperparameters, where only an upper bound on the optimal length scale value is known. The work of \cite{wang2014theoretical} addresses the problem, where, in addition, a lower bound on the optimal length scale is known, however, the regret bound of the algorithm they propose scales with the $\gamma_T(k^{\theta_L})$ of the smallest possible length scale $\theta_L$, making it no better than a naïve algorithm always selecting $\theta_L$. \cite{bogunovic2021misspecified} studied the problem of solving BO, when the kernel function is misspecified, however, provided no method for finding the well-specified kernel function.  \cite{liu2023adaptation} proved a lower bound on the algorithm's regret in the case when the regularity of RKHS is unknown (which corresponds to an unknown $\nu$ hyperparameter in the case of Matérn kernel), compared to their work we focused on different unknown hyperparameters, such as the length scale.
 There have also been a number of works \cite{de2021bayesian, hvarfner2023self, lizotte2008practical, osborne2009gaussian} that tackled the problem of BO with unknown hyperparameters but did not provide a theoretical analysis of the used algorithm. Some of earlier works \cite{feurer2015initializing, wang2018regret} viewed BO with unknown hyperparameters as meta-learning or transfer learning problem, where a large dataset is available for pre-training. In our problem setting, we do not assume access to any such pre-training data.
Within this work, we considered a frequentist problem setting, where the black-box function is arbitrarily selected from some RKHS. While there are no guarantees for the consistency of MLE  in such a setting, if we were to assume a Bayesian setting and put a GP prior on the black box, statistical literature derived asymptotic consistency results \cite{bachoc2013cross, kaufman2013role,li2022bayesian, mardia1984maximum} for MLE of kernel hyperparameters, including length scale. Under such a Bayesian setting, \cite{ziomek2024timevaryinggaussianprocessbandits} studied the problem of BO with unknown prior, when we are given a finite number of candidate priors. \cite{capone2022gaussian} derived predictive guarantees for GP in a Bayesian setting with unknown hyperparameters, provided that hyperpriors on those hyperparameters are known. However, the authors do not provide any BO algorithm based on their results.

\section{Conclusions}
Within this work, we addressed the problem of BO with unknown hyperparameters. We proposed an algorithm with a cumulative regret bound logarithmically closer to optimal than the previous state of the art and showed that our algorithm can outperform existing baselines in practice. One limitation of our work is that we only showed how to handle the isotropic case, i.e. where the same length scale value is applied for every dimension. This limitation is because our algorithm requires the knowledge of the regret bounds of an optimiser utilising a given length scale value, which in turn requires the knowledge of the bounds on MIG $\gamma_T(k^\theta)$ for the used kernel. To the best of our knowledge, within the existing literature, no work has yet derived those bounds in non-isotropic cases. However, we believe that if such bounds were obtained, one could easily extend our algorithm to the non-isotropic case, in the same way as 
we extended our base algorithm to handle unknown norm and length scale simultaneously. This constitutes a promising direction of future work. 

Another limitation of our work is the assumption of known noise magnitude $\sigma_N$. The problem of simultaneously not knowing kernel hyperparameters and noise magnitude is extremely challenging, as large variations in observed function values can be a result of either short lengthscale value or large noise magnitude. To the best of our knowledge, previous work did not tackle this setting and it remains an open problem.

Our algorithm relies on the standard GP model, which can result in poor scalability to large datasets and high-dimensional spaces. Extending our work to sparse GPs \cite{mcintire2016sparse, moss2023inducing} and kernels specifically designed for a high number of dimensions \cite{eriksson2021high,ziomek2023random} is another possible direction of future work. 

\begin{ack}
We would like to thank Toni Karvonen for pointing out important properties of the $\nu$-Matérn kernel's RKHS, which were crucial to take into account in the proposed problem setting. We would also like to thank Ondrej Bajgar and the anonymous reviewers for their helpful comments about improving the paper. Juliusz Ziomek was supported by the Oxford Ashton-Memorial Scholarship and EPSRC DTP grant EP/W524311/1.
Masaki Adachi was supported by the Clarendon Fund, the Oxford Kobe Scholarship, the Watanabe Foundation, and Toyota Motor Corporation.
\end{ack}

\bibliographystyle{plain}
\bibliography{example_paper}

\newpage
\appendix
\section{Proof of Proposition \ref{prop:lengthscalemig}} \label{app:lengthscalemigproof}
\lengthscalemig*
\begin{proof}
    The RBF case follows directly from Proposition 2 in \cite{berkenkamp2019no}. The same Proposition also provides a bound for the $\nu$-Matérn case, but more recent results allow us to derive a tighter bound. Section B.2. in \cite{berkenkamp2019no} proves that the $\nu$-Matérn kernel has the ($C_p$,$\beta_P$) polynomial eigendecay:
    $\beta_P = (2\nu + d) / d$ and $C_p = \left(\frac{1}{\theta}\right)^{2\nu + d}$, according to the definition of polynomial eigendecay as given by \cite{vakili2021information}. Substituting these values to Corollary 1 of \cite{vakili2021information}, we get that in $\nu$-Matérn case 
    $$\mathcal{I}_T^\theta  = \mathcal{O}\left( C_p^{1/\beta_P}T^{1/\beta_P}\log^{1 - 1/\beta_P}(T)^{1 - 1/\beta_P}\right) = \mathcal{O}\left( \left(\frac{1}{\theta}\right)^{d}T^{d/(2\nu +d)}\log^{2\nu/(d + 2\nu)}(T)^{2\nu/(d + 2\nu)}\right) ,$$
    which finishes the proof.
\end{proof}

\section{General Hyperparameter case}
\begin{figure}[h]
    \centering
    \begin{tikzpicture}
    \node[draw, circle] (B1) at (0,0) {Lemma \ref{lemma:noisebound}};
    \node[draw, circle] (Thm2) at (4,0) {Theorem \ref{theorem:ucb}};

    \node[draw, circle] (B2) at (2,-2) {Lemma \ref{lemma:preservation}};

    \node[draw, circle] (B3) at (2,-5) {Lemma \ref{lemma:balancedbounds}};

    \node[draw, circle] (B4) at (0,-7) {Lemma \ref{lemma:remaininghypers}};

    \node[draw, circle] (B5) at (4,-7) {Lemma \ref{lemma:regretbalancing}};

    \draw[->] (B1) -- (B2);
    \draw[->] (Thm2) -- (B2);
    \draw[->] (B2) -- (B3);
    \draw[->] (B3) -- (B4);
    \draw[->] (B1) -- (B4);
     \draw[->] (B3) -- (B5);
    \draw[->] (B4) -- (B5);
\end{tikzpicture}
    \caption{Diagram of the relationship between Lemmas in this Section. An incoming arrow means that the Lemma relies on the Lemma/ Theorem from which the arrow is outgoing. The final objective of this Section is to prove Lemma \ref{lemma:regretbalancing}.}
    \label{fig:proofstrategy}
\end{figure}
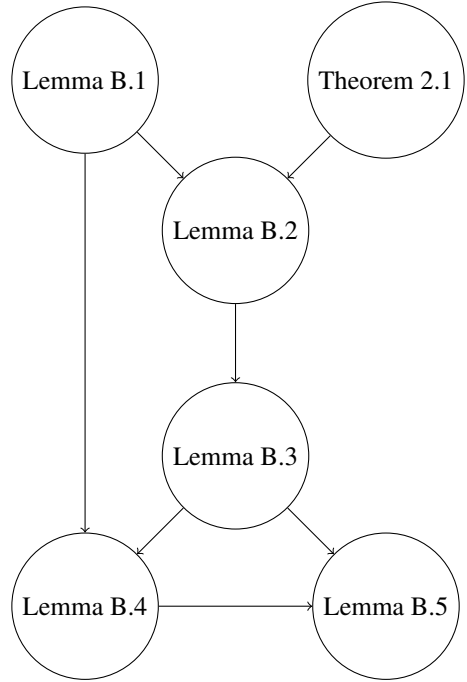

We first derive a general result for a BO Algorithm utilising a regret-balancing scheme, under hyperparameters of any form. We present the pseudo-code of that procedure in Algorithm \ref{alg:hb_bo}. We now proceed the prove the general regret bound for this algorithm. We would like to note that our proof closely follows the idea of \cite{pacchiano2020regret}. We assume the unknown hyperparameter takes values in some set $\mathcal{U}$ and specifying this hyperparameter uniquely defines a kernel function $k^u(\cdot, \cdot)$ and the RKHS norm $B^u$. We say a hyperparameter value is well-specified if $|f|_{k^u} \le B^u$. Algorithm \ref{alg:hb_bo} requires hyperparameter-proposing function $a(\cdot)$ as one of the inputs, which optionally expands the set of considered hyperparameters. We will denote by $A = \bigcup_{t=0,\dots,T} a(t)$  the set of all hyperparameters introduced by $a(\cdot)$ up to step $T$. We will also denote $\mathcal{W}$ to mean the set of all well-specified hyperparameters in $A$ and $\mathcal{M} = A \setminus \mathcal{W}$ to denote the set of misspecified hyperparameters. Let $u^\star$ be the first well-specified hyperparameter introduced.  We will also write $\mathcal{T}$ to mean the set of all iterations after (and including) $t_0$, where at least one hyperparameter value was rejected, that is:
\begin{equation*}
    \mathcal{T} = \left\{ t=t_0,\dots,T \Bigg | \exists_{u \in U_t} L_t(u) +  \frac{2}{|S_t^{u}|} \sum_{\tau \in S_t^u} \beta_\tau^u \sigma^u_{\tau-1}(\bm{x}_\tau)  <  \max_{u^\prime \in U_{t}} L_t(u^\prime) \right\} .
\end{equation*}
We also define $\mathcal{M}_0= \{u \in \mathcal{M}| \exists_{t \ge t_0} u_t = u\}$ to be the set of all misspecified hyperparameters that were selected at least once after $t_0$.
\begin{algorithm}
\caption{Hyperparameter Balancing GP-UCB }\label{alg:hb_bo}
\begin{algorithmic}[1]
\REQUIRE  suspected regret bounds $R^u(\cdot)$; \\ hyperparameter-proposing function $a(\cdot)$;
confidence parameters $\{\xi_t\}_{t=1}^T$ and
$\{\beta^u_t\}_{t=1}^T$ 

\STATE Set $\mathcal{D}_{0} = \emptyset$, $U_1 = a(0)$ ,$S^u_0 = \emptyset$ for all $u \in U_1$,  
\FOR{$t = 1, \dots, T$}
\STATE Select hyperparameter  $u_t = \arg\min_{u \in U_t} R^u(|S_{t-1}^{u}| + 1)$
\STATE Select point to query $\bm{x}_t =  \underset{\bm{x} \in \mathcal{X}}{\arg\max} \textrm{ UCB}_{t-1}^{u_t}(\bm{x})$
\STATE Query the black-box $y_t = f(\bm{x}_t)$ 
\STATE Update data buffer $\mathcal{D}_t = \mathcal{D}_{t-1} \cup (x_t, y_t)$
\STATE For each $u \in U_{t}$, set $S_t^{u} = \{\tau  = 1,\dots, t: u_{\tau} = u\}$
\STATE Initialise hyperparameter set for new iteration $U_{t+1} := U_{t}$
\IF{ $\forall_{u \in U_t} |S_t^u| \neq 0$}
\STATE Define $L_t(u) = \left( \frac{1}{|S_t^{u}|} \sum_{\tau \in S_t^{u}}y_{\tau} - \sqrt{\frac{\xi_{t}}{|S_{t}^{u}|}} \right)$

\STATE $U_{t+1} = \left\{ u \in U_t: L_t(u) +  \frac{2}{|S_t^{u}|} \sum_{\tau \in S_t^u} \beta_\tau^u \sigma^u_{\tau-1}(\bm{x}_\tau)  \ge  \max_{u^\prime \in U_{t}} L_t(u^\prime) \right\}$

\ENDIF

\STATE Optionally expand hyperparameter set $U_{t+1} := U_{t+1} \cup \{a(t))\}$
\ENDFOR

\end{algorithmic}
\end{algorithm}

We will first provide some auxiliary Lemmas that we will be required to prove the general result for Algorithm \ref{alg:hb_bo}. We first recall a result from existing literature on the concentration of noise.

\begin{restatable}[Lemma 5.1 in \cite{ziomek2024timevaryinggaussianprocessbandits}]{lemma}{noisebound}
\label{lemma:noisebound}
For each $u \in A$ and $t=1,\dots,T$ we have:
\begin{equation*}
    P \left( \underset{t= 1,\dots, T}{\forall} \, \underset{u \in A}{\forall} \left | \sum_{i \in S_t^u} \epsilon_i \right| \le \sqrt{\xi_t |S_t^u|} \right) \ge 1 - \delta_B ,
\end{equation*}
where $\xi_t = 2 \sigma_N^2 \log \frac{|A|\pi^2 t^2}{6\delta_B}$.
\end{restatable}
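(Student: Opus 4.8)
The plan is to exploit the martingale structure hidden in $\sum_{i \in S_t^u}\epsilon_i$ so that a standard sub-Gaussian tail bound can be applied despite $S_t^u$ being a random, data-dependent set. First I would rewrite the sum as $\sum_{i \in S_t^u}\epsilon_i = \sum_{i=1}^t \mathbb{1}[u_i = u]\,\epsilon_i$ and introduce the natural filtration $\mathcal{F}_{i-1} = \sigma(\bm{x}_1, y_1, \dots, \bm{x}_{i-1}, y_{i-1})$. The crucial observation is that the hyperparameter chosen at step $i$ in Algorithm \ref{alg:hb_bo} (line 3), together with the elimination step and the deterministic proposer $a(\cdot)$, depends only on $\mathcal{D}_{i-1}$; hence $u_i$, and therefore the multiplier $\mathbb{1}[u_i = u]$, is $\mathcal{F}_{i-1}$-measurable, i.e.\ predictable. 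Since $\mathbb{E}[\epsilon_i \mid \mathcal{F}_{i-1}] = 0$ and $\epsilon_i$ is conditionally $\sigma_N^2$-sub-Gaussian, the partial sums $Z_n = \sum_{i=1}^n \mathbb{1}[u_i = u]\,\epsilon_i$ form a martingale whose increments are conditionally sub-Gaussian with proxy $\mathbb{1}[u_i = u]\sigma_N^2$, so their accumulated variance proxy is exactly $\sigma_N^2 |S_n^u|$.

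With the martingale in hand, the second step is to apply a self-normalised, uniform-in-time concentration inequality for conditionally sub-Gaussian increments to control $|Z_t|$ by a multiple of $\sqrt{\sigma_N^2 |S_t^u|\log(1/\delta')}$, simultaneously over all $t$, for each fixed $u$ and target confidence $\delta'$. Inverting the two-sided sub-Gaussian tail and allocating a failure budget of order $\tfrac{\delta_B}{|A|\pi^2 t^2}$ to each pair $(u,t)$ then forces the logarithmic factor to be precisely $\log\frac{|A|\pi^2 t^2}{6\delta_B}$, which is exactly $\xi_t/(2\sigma_N^2)$; this pins down the stated $\xi_t = 2\sigma_N^2\log\frac{|A|\pi^2 t^2}{6\delta_B}$ so that the deviation is bounded by $\sqrt{\xi_t |S_t^u|}$.

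The final step is a union bound. Summing the per-pair failure probabilities over the $|A|$ hyperparameters in $A$ and over $t \ge 1$, and using $\sum_{t \ge 1} t^{-2} = \pi^2/6$ to cancel the $t^2$ in the denominator, bounds the total failure probability by $\delta_B$ (the constant from the two-sided tail being absorbed into the $6$ appearing in $\xi_t$), which yields the claim. I expect the main obstacle to be exactly the adaptivity of $S_t^u$: because the indices contributing to the sum are selected on the basis of earlier noisy observations, one cannot treat $\{\epsilon_i\}_{i \in S_t^u}$ as a fixed family of independent variables and must instead route the argument through the predictable-multiplier martingale and a self-normalised bound. The secondary difficulty is making the guarantee hold uniformly in $t$, which the summable weighting $t^{-2}$ resolves at the price of only a logarithmic inflation of $\xi_t$.
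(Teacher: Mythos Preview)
The paper does not give its own proof of this lemma: it is quoted verbatim as Lemma~5.1 of \cite{ziomek2024timevaryinggaussianprocessbandits} and simply invoked as a known concentration result, so there is no in-paper argument to compare your proposal against.

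On the merits of your plan, the core idea is correct: writing $\sum_{i\in S_t^u}\epsilon_i=\sum_{i\le t}\mathbb{1}[u_i=u]\,\epsilon_i$ with $\mathcal{F}_{i-1}$-measurable multipliers is exactly what converts the adaptively-indexed sum into a sub-Gaussian martingale, and a union bound with weights summing via $\sum_{t\ge 1}t^{-2}=\pi^2/6$ is what produces the $\pi^2 t^2/6$ appearing in $\xi_t$. The one place your description is tangled is the juxtaposition of a ``self-normalised, uniform-in-time'' inequality with a per-$(u,t)$ failure budget $\tfrac{\delta_B}{|A|\pi^2 t^2}$: these are two different mechanisms. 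A genuine self-normalised bound (in the sense of Abbasi-Yadkori et al.) is already uniform in $t$ and would place $|S_t^u|$, not $t$, inside the logarithm; the stated form of $\xi_t$ instead matches a per-level Azuma--Hoeffding plus explicit union bound. If you go the per-level route, note that applying Azuma directly at a fixed $t$ does \emph{not} immediately yield a deviation scaled by the random $\sqrt{|S_t^u|}$, because the optimal Chernoff parameter depends on the random variance proxy $\sigma_N^2|S_t^u|$. The clean fix is to union bound over $u\in A$ and over the number of pulls $n\ge 1$, applying Azuma to the $n$-step stopped martingale whose variance proxy $n\sigma_N^2$ is deterministic, and then use $|S_t^u|\le t$ to replace $n$ by $t$ inside the logarithm. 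With that clarification your outline goes through and delivers exactly the claimed $\xi_t$.
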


The next result we will need is the high-probability guarantee that the optimal hyperparameter $u^\star$ will not be removed from the set of considered hyperparameters after it is introduced.

\begin{restatable}[]{lemma}{preservation} \label{lemma:preservation}
    If the events of Theorem \ref{theorem:ucb} and Lemma \ref{lemma:noisebound} hold, then the optimal hyperparameter $u^\star$ is never removed after it is introduced, i.e. $u^* \in U_{t}$ for all $t=t_0, \dots, T$, where $t_0$ is the first iteration, where the  optimal  hyperparameter $u^\star$ is introduced.
\end{restatable}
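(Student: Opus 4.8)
The plan is to show that on every round $t=t_0,\dots,T$ on which the elimination test of Algorithm~\ref{alg:hb_bo} is actually executed (i.e. rounds where every surviving hyperparameter has been played at least once, so that each $L_t(\cdot)$ is well defined), the well-specified hyperparameter $u^\star$ passes the test, so it is never discarded. Since the elimination step only ever removes elements and $u^\star$ is added at $t_0$, an induction over $t$ then yields $u^\star \in U_t$ for all $t=t_0,\dots,T$. The retention condition for $u^\star$ reads
\[
L_t(u^\star) + \frac{2}{|S_t^{u^\star}|}\sum_{\tau \in S_t^{u^\star}} \beta_\tau^{u^\star}\sigma_{\tau-1}^{u^\star}(\bm{x}_\tau) \;\ge\; \max_{u' \in U_t} L_t(u'),
\]
so my strategy is to sandwich both sides around the global optimum $f(\bm{x}^\star)$: bound the right-hand side above by $f(\bm{x}^\star)$, and the left-hand side below by $f(\bm{x}^\star)$.

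For the right-hand side I would first argue that, for \emph{every} $u \in U_t$, the quantity $L_t(u)$ is a valid lower confidence bound on the average true reward collected by that learner. Writing $y_\tau = f(\bm{x}_\tau) + \epsilon_\tau$ and invoking Lemma~\ref{lemma:noisebound} to control $\frac{1}{|S_t^u|}\sum_{\tau \in S_t^u}\epsilon_\tau$ by $\sqrt{\xi_t/|S_t^u|}$, the term subtracted in the definition of $L_t(u)$ cancels the empirical noise, giving $L_t(u) \le \frac{1}{|S_t^u|}\sum_{\tau \in S_t^u} f(\bm{x}_\tau) \le f(\bm{x}^\star)$, where the last step uses that every queried point has value at most $f(\bm{x}^\star)$. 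Taking the maximum over $u' \in U_t$ yields $\max_{u'} L_t(u') \le f(\bm{x}^\star)$.

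For the left-hand side I would exploit that $u^\star$ is well-specified, so Theorem~\ref{theorem:ucb} applies to its GP model fitted on the shared data. Combining the confidence band with the UCB selection rule in the usual way gives the per-round guarantee $f(\bm{x}^\star) \le f(\bm{x}_\tau) + 2\beta_\tau^{u^\star}\sigma_{\tau-1}^{u^\star}(\bm{x}_\tau)$ for each $\tau \in S_t^{u^\star}$, since $\textrm{UCB}^{u^\star}_{\tau-1}(\bm{x}_\tau) \ge \textrm{UCB}^{u^\star}_{\tau-1}(\bm{x}^\star) \ge f(\bm{x}^\star)$ and $\textrm{UCB}^{u^\star}_{\tau-1}(\bm{x}_\tau) \le f(\bm{x}_\tau) + 2\beta_\tau^{u^\star}\sigma_{\tau-1}^{u^\star}(\bm{x}_\tau)$. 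Averaging over $S_t^{u^\star}$, re-expressing each $f(\bm{x}_\tau)$ through $y_\tau$, and applying Lemma~\ref{lemma:noisebound} once more to the empirical noise, I would convert the average of $f(\bm{x}_\tau)$ back into $L_t(u^\star)$ plus the width term, so that the left-hand side is at least $f(\bm{x}^\star)$ up to the noise-concentration slack that the confidence parameters $\xi_t,\beta_\tau^{u^\star}$ are designed to absorb. Chaining this with the right-hand bound gives the retention inequality, and the induction over $t$ closes the argument.

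The main obstacle I anticipate is the careful reconciliation of the two distinct confidence statements that must be used simultaneously: the \emph{deterministic} RKHS confidence band of Theorem~\ref{theorem:ucb}, which controls the width term, and the \emph{probabilistic, two-sided} sub-Gaussian concentration of Lemma~\ref{lemma:noisebound}, which governs the passage between the observed $y_\tau$ and the true $f(\bm{x}_\tau)$ on both sides of the test. The delicate point is to spend the noise slack in the correct direction on each side so that both inequalities are anchored at the same quantity $f(\bm{x}^\star)$ with exactly the constant and the single $\sqrt{\xi_t/|S_t^{u^\star}|}$ that the elimination threshold actually uses; this is where the specific choice of $\xi_t$ enters. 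A secondary bookkeeping point is that the elimination step is skipped on any round where some surviving learner has never been played, so the test need only be verified where it is non-vacuous, and the monotonicity $U_{t+1}\subseteq U_t \cup \{a(t)\}$ must be used to propagate survival of $u^\star$ across rounds.
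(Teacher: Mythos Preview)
Your proposal is correct and follows essentially the same route as the paper's proof: both arguments sandwich the elimination test around $f(\bm{x}^\star)$ by (i) using Lemma~\ref{lemma:noisebound} to show $L_t(u)\le f(\bm{x}^\star)$ for every $u$, and (ii) using the UCB confidence band of Theorem~\ref{theorem:ucb} together with the selection rule to obtain the per-round inequality $f(\bm{x}^\star)-f(\bm{x}_\tau)\le 2\beta_\tau^{u^\star}\sigma_{\tau-1}^{u^\star}(\bm{x}_\tau)$ on rounds $\tau\in S_t^{u^\star}$, then averaging and re-applying Lemma~\ref{lemma:noisebound}. The only additional remarks in your write-up (skipping rounds where the test is vacuous, and the induction over $t$) are bookkeeping details that the paper leaves implicit.
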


\begin{proof}
First, observe that under these events, the optimal hyperparameter $u^*$ will never be removed. To see this, observe  that at any time set $t$ such that $u_t = u^\star$ the following holds:
\begin{align*}
    f(\bm{x}^*) - f(\bm{x}_t) &\le \mu_t^{u^\star}(\bm{x}^\star) + \beta_t^{u^\star} \sigma_t^{u^\star}(\bm{x}^\star) - \mu_t^{u^\star}(\bm{x}_t) + \beta_t^{u^\star} \sigma_t^{u^\star}(\bm{x}_t) \\
    &\le \mu_t^{u^\star}(\bm{x}_t) + \beta_t^{u^\star} \sigma_t^{u^\star}(\bm{x}_t) - \mu_t^{u^\star}(\bm{x}_t) + \beta_t^{u^\star} \sigma_t^{u^\star}(\bm{x}_t) \\
    &= 2 \beta_t^{u^\star}\sigma_t^{u^\star}(\bm{x}_t),
\end{align*}
where the first inequality follows from the fact that the event of Theorem \ref{theorem:ucb} holds and $u^\star$ is well-specified, and the second inequality is because we chose $\bm{x}_t$ so as to maximise the UCB.
Using this fact, we get that:
\begin{equation*}
   \frac{1}{|S_t^{u^*}|} \sum_{t \in S_t^{u^*}}\left( f(\bm{x}^*) - (y_t - \epsilon_t) \right) = \frac{1}{|S_t^{u^*}|}\sum_{t \in S_t^{u^*}}  \left(f(\bm{x}^*) - f(\bm{x}_t) \right)  \le \frac{2}{|S_t^{u^*}|}  \sum_{t \in S_t^{u^*}} \beta_t^{u^\star}\sigma_t^{u^\star}(\bm{x}_t)
\end{equation*}
Rearranging, we get:
\begin{align} \label{eq:rb_lhs}
    f(\bm{x}^*) &\le \frac{2}{|S_t^{u^*}|}  \sum_{t \in S_t^{u^*}} \beta_t^{u^\star}\sigma_t^{u^\star}(\bm{x}_t) - \frac{1}{|S_t^{u^*}|}\sum_{t \in S_t^{u^*}} \epsilon_t +  \sum_{t \in S_t^{u^*}} \frac{y_t}{|S_t^{u^*}|} \notag \\ 
    &\le \frac{2}{|S_t^{u^*}|}  \sum_{t \in S_t^{u^*}} \beta_t^{u^\star}\sigma_t^{u^\star}(\bm{x}_t) + \sqrt{\frac{\xi_t}{|S_t^{u^\star}|}}+  \sum_{t \in S_t^{u^*}} \frac{y_t}{|S_t^{u^*}|}, 
\end{align}
where the second inequality is a result of Lemma \ref{lemma:noisebound}. Now, for any $u \in A$ and any $t \in S_T^u$, we have the following:
\begin{align} \label{eq:rb_rhs}
    f(\bm{x}^*) &= \frac{1}{|S_t^u|}\sum_{t \in S_t^u} f(\bm{x}^*) 
    \ge \frac{1}{|S_t^u|}\sum_{t \in S_t^u} f(\bm{x}_t) \notag \\
     &=
     \sum_{t \in S_t^u}\frac{y_t}{|S_t^u|}   - \sum_{t \in S_t^u}\frac{\epsilon_t}{|S_t^u|}  \ge \sum_{t \in S_t^u} \frac{y_t}{|S_t^u|}   - \sqrt{\frac{\xi_t}{|S_t^u|} } ,
\end{align}
where again second inequality comes from Lemma \ref{lemma:noisebound}.
Thus combining Inequalities \ref{eq:rb_lhs} and \ref{eq:rb_rhs}, we get that for any $u \in A$:
\begin{equation*}
    \frac{2}{|S_t^{u^*}|}  \sum_{t \in S_t^{u^*}} \beta_t^{u^\star}\sigma_t^{u^\star}(\bm{x}_t)  + \sqrt{\frac{\xi_t}{|S_t^{u^\star}|}}+  \sum_{t \in S_t^{u^*}} \frac{y_t}{|S_t^{u^*}|} \ge \sum_{t \in S_t^u} \frac{y_t}{|S_t^u|}   - \sqrt{\frac{\xi_t}{|S_t^u|} } ,
\end{equation*}
which means the \texttt{if} statement in line 11 will always evaluate to \texttt{true} for $u^\star$ and thus $u^* \in U_{t}$ for all $t=1, \dots, T$. 
\end{proof}
We now recall the balancing condition, which is satisfied by regret balancing algorithms as proven in Lemma 5.2 of \cite{pacchiano2020regret}. This condition basically says that due to the selection rule that chooses an algorithm with current lower suspected regret, the regret of any two algorithms (that have not been rejected) must be "close" to each other. We prove a slightly different statement than that in \cite{pacchiano2020regret}, as in our case the set of base algorithms can be dynamically expanded.
\begin{restatable}[]{lemma}{balancedbounds} \label{lemma:balancedbounds}
Assume the event of Lemma \ref{lemma:preservation} holds. For any  $u \in A$ that was selected in line 3 at least once after $t_0$ and any time step $t=t_0,\dots,T$ we must have that $R^{u}(|S_t^u|) \le R^{u^\star}(|S_t^{u^\star}|) + 2B^{u^\star}$.

\end{restatable}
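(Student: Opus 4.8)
The plan is to exploit the selection rule in line 3 of Algorithm \ref{alg:hb_bo} together with the preservation guarantee of Lemma \ref{lemma:preservation}, following the regret-balancing argument of \cite{pacchiano2020regret} but adapted to the fact that the set of base learners is expanded dynamically. First I would dispose of the case $u = u^\star$, which is immediate because $2B^{u^\star} \ge 0$. So I fix an arbitrary $u \in A$ with $u \neq u^\star$ that is selected at least once after $t_0$, together with a time step $t \ge t_0$.

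The core step is to examine the \emph{last} iteration $s \le t$ at which $u$ was selected, i.e.\ $u_s = u$. Since $u$ is not selected on $(s,t]$, its counter is frozen, so $R^{u}(|S_t^u|) = R^{u}(|S_s^u|) = R^{u}(|S_{s-1}^u| + 1)$, where the final equality uses that selecting $u$ at $s$ increments its counter by one. I then apply the selection rule at $s$: because $u_s$ minimises $R^{\cdot}(|S_{s-1}^{\cdot}| + 1)$ over $U_s$, and because by Lemma \ref{lemma:preservation} the well-specified hyperparameter $u^\star$ lies in $U_s$ (this is exactly where $s \ge t_0$ is needed), I obtain $R^{u}(|S_{s-1}^u| + 1) \le R^{u^\star}(|S_{s-1}^{u^\star}| + 1)$.

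It remains to transport the right-hand side from time $s$ to time $t$ via two monotonicity facts. Since counters are non-decreasing and $s-1 \le t$, we have $|S_{s-1}^{u^\star}| + 1 \le |S_t^{u^\star}| + 1$, so monotonicity of the suspected bound gives $R^{u^\star}(|S_{s-1}^{u^\star}| + 1) \le R^{u^\star}(|S_t^{u^\star}| + 1)$. Then I invoke the incremental property assumed in the preliminaries, $R^{u^\star}(m+1) - R^{u^\star}(m) \le 2B^{u^\star}$, to peel off the extra unit: $R^{u^\star}(|S_t^{u^\star}| + 1) \le R^{u^\star}(|S_t^{u^\star}|) + 2B^{u^\star}$. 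Chaining the three displays yields precisely $R^{u}(|S_t^u|) \le R^{u^\star}(|S_t^{u^\star}|) + 2B^{u^\star}$.

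The single point requiring care, and the main obstacle, is guaranteeing that the last selection $s$ of $u$ before $t$ satisfies $s \ge t_0$, since this is what licenses the use of Lemma \ref{lemma:preservation} (otherwise $u^\star$ need not yet be in the active set, and the comparison against it is unavailable). This holds automatically whenever $u$ is itself the hyperparameter selected at $t$ (then $s = t \ge t_0$) and, crucially for the downstream analysis, for every $u \in \mathcal{M}_0$ at the terminal time $t = T$: membership in $\mathcal{M}_0$ forces a selection of $u$ somewhere in $[t_0, T]$, so its last selection up to $T$ occurs at some $s \ge t_0$. I would therefore establish the bound at these relevant times, and the remaining bookkeeping around the frozen counters is routine.
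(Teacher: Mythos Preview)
Your proposal is correct and follows essentially the same argument as the paper: both look at the last time $u$ was selected, invoke the selection rule in line~3 (with $u^\star$ available by Lemma~\ref{lemma:preservation}), and use the non-triviality property $R^{u^\star}(m+1)-R^{u^\star}(m)\le 2B^{u^\star}$; the paper packages this as a brief proof by contradiction while you give the direct chain of inequalities. Your flag that the argument needs the last selection $s$ of $u$ up to $t$ to satisfy $s\ge t_0$ is apt---the paper's proof is equally terse on this point---and your observation that the downstream uses (at $t'$ in Lemma~\ref{lemma:remaininghypers} and at $t=T$ in Lemma~\ref{lemma:regretbalancing}) all fall in the regime where this is guaranteed is exactly right.
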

\begin{proof}
    We will prove the statement by contradiction. If the statement of the Lemma was not true, we would have:
    \begin{equation*}
        R^{u}(|S_t^u|) > R^{u^\star}(|S_t^{u^\star}|) + 2B^{u^\star} \ge R^{u^\star}(|S_t^{u^\star}| + 1),
    \end{equation*}
    where the second inequality comes from the assumption that the bounds are non-trivial.  If $u$ has been selected at least once after $u^\star$ was introduced (and by Lemma \ref{lemma:preservation} $u^\star$  could not have been excluded afterwards), then the last time $u$ was selected, $u^\star$ must have been present in $U_t$ and as such selection of $u$ such that  $R^{u}(|S_t^u|) >  R^{u^\star}(|S_t^{u^\star}| + 1)$ would violate the selection rule in line 3.
\end{proof}
Before we can prove the final result of this Section, we need one more auxiliary Lemma. This Lemma bounds the regret on all the iterations, where none of the hyperparameters was rejected (i.e. $t \notin \mathcal{T}$), by using the rejection rule of line 11.
\begin{restatable}[]{lemma}{remaininghypers} \label{lemma:remaininghypers} Assume the event of Lemma \ref{lemma:balancedbounds} holds.
Let us call by $\mathcal{T}$ the set of all iterations after $t_0$, where we do not reject any of the candidates. We must have that for any $u \in A$:
\begin{align*}
\sum_{\substack{t \notin \mathcal{T} \\     t \in S_T^{u}}} r_t
 & \le \left(\frac{|S_T^{u}|}{|S_T^{u^*}|} + 1 \right) CR^{u^\star}(|S_T^{u^*}|) + 2CB^{u^\star} + 2\left(\sqrt{\frac{|S_T^{u}|}{|S_T^{u^*}|}} + 1\right)\sqrt{|S_T^{u}|\xi_t} .
\end{align*}
\end{restatable}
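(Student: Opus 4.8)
The plan is to translate the cumulative regret of arm $u$ into statements about the empirical averages $L_t(\cdot)$ that drive the elimination rule in line 11, and then chain three ingredients already in hand: the bound on $f(\bm{x}^\star)$ derived inside Lemma \ref{lemma:preservation}, the non-elimination (survival) condition itself, and the balancing guarantee of Lemma \ref{lemma:balancedbounds}. First I would write $r_t = f(\bm{x}^\star) - (y_t - \epsilon_t)$, sum over the relevant rounds, and use the noise concentration of Lemma \ref{lemma:noisebound} to replace $\tfrac{1}{|S^u|}\sum_\tau y_\tau$ by $L(u)$ (up to a $\sqrt{\xi/|S^u|}$ term) and to absorb $\sum_\tau \epsilon_\tau$ by another term of order $\sqrt{|S^u|\xi}$. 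Since every $r_t \ge 0$, the restricted sum $\sum_{t\in S_T^u\setminus\mathcal{T}} r_t$ is at most the sum over all plays of $u$ up to its last non-elimination round, which this manipulation bounds by a multiple of $f(\bm{x}^\star) - L(u)$ plus a noise residual of order $\sqrt{|S_T^u|\xi_T}$.

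Next I would control $f(\bm{x}^\star) - L(u)$ by splitting it as $[f(\bm{x}^\star) - L(u^\star)] + [L(u^\star) - L(u)]$. For the first bracket I reuse the inequality established in the proof of Lemma \ref{lemma:preservation}, namely $f(\bm{x}^\star) \le L_t(u^\star) + 2\sqrt{\xi_t/|S_t^{u^\star}|} + \tfrac{2}{|S_t^{u^\star}|}\sum_{\tau\in S_t^{u^\star}}\beta^{u^\star}_\tau\sigma^{u^\star}_{\tau-1}(\bm{x}_\tau)$, valid at every $t$ because $u^\star$ is never eliminated. The key reduction is to convert the averaged confidence-width sum into a suspected regret bound: by Cauchy--Schwarz together with the maximum-information-gain estimate underlying Theorem \ref{thm:gpucbbound}, $\sum_{\tau\in S^{u}} 2\beta^{u}_\tau\sigma^{u}_{\tau-1}(\bm{x}_\tau)\le C\,R^{u}(|S^{u}|)$ for a universal constant $C$. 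Applied to $u^\star$, this turns the first bracket into the $\tfrac{|S_T^u|}{|S_T^{u^\star}|}C R^{u^\star}(|S_T^{u^\star}|)$ and $\sqrt{|S_T^u|/|S_T^{u^\star}|}\,\sqrt{|S_T^u|\xi_T}$ pieces of the claim. For the second bracket I invoke the survival condition of line 11 at a round where $u$ was played and no arm was eliminated: there $L(u) + \tfrac{2}{|S^u|}\sum_\tau\beta^u_\tau\sigma^u_{\tau-1}(\bm{x}_\tau)\ge \max_{u'}L(u')\ge L(u^\star)$, so $L(u^\star) - L(u)\le C R^u(|S^u|)/|S^u|$ by the same variance-sum estimate. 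Finally I would replace $R^u(|S_T^u|)$ by $R^{u^\star}(|S_T^{u^\star}|) + 2B^{u^\star}$ through Lemma \ref{lemma:balancedbounds}; this yields the remaining $+C R^{u^\star}(|S_T^{u^\star}|)$ (the ``$+1$'' in the leading coefficient) and the additive $2CB^{u^\star}$.

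The hard part will be the bookkeeping forced by the restriction $t\notin\mathcal{T}$ together with the possibility that $u$ is eventually eliminated. The survival condition is an \emph{instantaneous} statement, relating $L_t(u)$ to $L_t(u^\star)$ with the counts $|S_t^u|,|S_t^{u^\star}|$ as of round $t$, whereas the target bound is phrased with the terminal counts $|S_T^u|,|S_T^{u^\star}|$. I therefore plan to apply the survival condition at $\bar t := \max\{t\in S_T^u : t\notin\mathcal{T}\}$, the last round on which $u$ was played without triggering any elimination (note that $L_t(u)$ would only deteriorate after $\bar t$, since $\xi_t$ keeps growing while $S_t^u$ is frozen, so it is essential to anchor the argument at $\bar t$ rather than at $T$), and then to pass from the counts at $\bar t$ to the terminal counts using monotonicity of $R^{u^\star}(\cdot)$ and $\xi_t$ and the inclusion $S_T^u\setminus\mathcal{T}\subseteq S_{\bar t}^u$. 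Keeping the direction of each monotonicity step consistent, and correctly charging the plays of $u$ after $\bar t$ (all of which lie in $\mathcal{T}$ and are accounted for separately in the main theorem), is exactly where the extra additive noise term $2\sqrt{|S_T^u|\xi_T}$ (the ``$+1$'' in the noise coefficient) gets absorbed. Everything else is a termwise combination of the displayed inequalities with constants collected into $C$.
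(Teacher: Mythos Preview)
Your proposal is correct and follows essentially the same route as the paper. The paper anchors at the last non-rejection round where $u$ was played (your $\bar t$), applies the survival inequality of line~11 there against $u^\star$, subtracts from $f(\bm{x}^\star)$ and uses noise concentration to produce the regret inequality, then converts $\sum_\tau 2\beta_\tau^{u}\sigma^{u}_{\tau-1}(\bm{x}_\tau)$ and $\sum_{\tau\in S^{u^\star}}r_\tau$ into $CR^{u}(|S^{u}|)$ and $CR^{u^\star}(|S^{u^\star}|)$ respectively, and finally invokes Lemma~\ref{lemma:balancedbounds} to trade $R^{u}$ for $R^{u^\star}+2B^{u^\star}$. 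Your decomposition $f(\bm{x}^\star)-L(u)=[f(\bm{x}^\star)-L(u^\star)]+[L(u^\star)-L(u)]$ is just a reorganisation of the same algebra (the first bracket is exactly the inequality inside the proof of Lemma~\ref{lemma:preservation}, the second is the survival condition), and your discussion of passing from the counts at $\bar t$ to the terminal counts $|S_T^{u}|,|S_T^{u^\star}|$ is, if anything, more careful than the paper's handling of that step.
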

\begin{proof}
    Let $t^\prime$ be the smallest iteration after $t_0$, such that $S_{t^\prime}^u = S_T^u$ and $t^\prime \notin \mathcal{T}$, i.e. $t^\prime$ is the last iteration not in $\mathcal{T}$ when $u$ was played.  Notice that since $t^\prime \notin \mathcal{T}$, all hyperparameter values in $U_{t^\prime}$ must satisfy the \texttt{if} statement in line 11 when compared with any other hyperparameter value in $U_{t^\prime}$. Let us choose any hyperparameter value in $U_{t^\prime}$ and compare it with $u^\star$ (which must be in $U_{t^\prime}$ due to the event of Lemma \ref{lemma:balancedbounds} that guarantees preservation of $u^\star$.). For the \texttt{if} statement to evaluate to \texttt{true}, we must have:
\begin{equation*}
      \frac{1}{|S_{t^\prime}^{u}|} \sum_{\substack{t \notin \mathcal{T} \\     t \in S_{t^\prime}^{u}}} y_t + \frac{2}{|S_{t^\prime}^{u}|}  \sum_{t \in S_{t^\prime}^{u}} \beta_t^{u}\sigma_t^{u}(\bm{x}_t) + \sqrt{\frac{\xi_t}{|S_{t^\prime}^{u}|}}  \ge\frac{1}{|S_t^{u^*}|} \sum_{t \in S_t^{u^*}}  y_t   - \sqrt{\frac{\xi_t}{|S_t^{u^*}|} } .
\end{equation*}
Multiplying both sides by $-1$ and adding $f(\bm{x}^*)$, we get:
\begin{equation*}
      \frac{1}{|S_{t^\prime}^{u}|} \sum_{\substack{t \notin \mathcal{T} \\     t \in S_{t^\prime}^{u}}} \left(f(\bm{x}^*) - y_t\right) - \frac{2\sum_{t \in S_{t^\prime}^{u}} \beta_t^{u}\sigma_t^{u}(\bm{x}_t) }{|S_{t^\prime}^{u}|}   - \sqrt{\frac{\xi_t}{|S_{t^\prime}^{u}|}}  \le \frac{1}{|S_{t^\prime}^{u^\star}|} \sum_{t \in S_{t^\prime}^{u^\star}} \left(f(\bm{x}^*) - y_t \right)  + \sqrt{\frac{\xi_t}{|S_{t^\prime}^{u^\star}|} } .
\end{equation*}
Using the fact that $y_t = f(\bm{x}_t) + \epsilon_t$ and Lemma \ref{lemma:noisebound} we get that:
\begin{align*}
      \frac{1}{|S_{t^\prime}^{u}|} \sum_{\substack{t \notin \mathcal{T} \\     t \in S_{t^\prime}^{u}}} \left(f(\bm{x}^*) - f(\bm{x}_t)\right) -\frac{2\sum_{t \in S_{t^\prime}^{u}} \beta_t^{u}\sigma_t^{u}(\bm{x}_t) }{|S_{t^\prime}^{u}|}  - 2\sqrt{\frac{\xi_t}{|S_{t^\prime}^{u}|}}  \\
      \le \frac{1}{|S_{t^\prime}^{u^\star}|} \sum_{t \in S_{t^\prime}^{u^\star}} \left(f(\bm{x}^*) - f(\bm{x}_t) \right)  + 2\sqrt{\frac{\xi_t}{|S_{t^\prime}^{u^\star}|} } .
\end{align*}
Rearranging and observing that $r_t = f(\bm{x}^\star) - f(\bm{x}_t)$, we obtain:
\begin{align*}
\sum_{\substack{t \notin \mathcal{T} \\     t \in S_{t^\prime}^{u}}} r_t    \le &  2\sum_{t \in S_{t^\prime}^{u}} \beta_t^{u}\sigma_t^{u}(\bm{x}_t) + 2\sqrt{|S_{t^\prime}^{u}|\xi_t} +  \frac{|S_{t^\prime}^{u}|}{|S_{t^\prime}^{u^\star}|} \sum_{t \in S_{t^\prime}^{u^\star}} r_t  + 2|S_{t^\prime}^{u}|\sqrt{\frac{\xi_t}{|S_{t^\prime}^{u^\star}|} } \\
\le &  C R^{u}(|S_{t^\prime}^{u}|)+ 2\sqrt{|S_{t^\prime}^{u}|\xi_t}+  \frac{|S_{t^\prime}^{u}|}{|S_{t^\prime}^{u^\star}|} CR^{u^\star}(|S_{t^\prime}^{u^\star}|) + 2|S_{t^\prime}^{u}|\sqrt{\frac{\xi_t}{|S_{t^\prime}^{u^\star}|} },
\end{align*}
where we use the fact that $2\sum_{t \in S_{t^\prime}^{u}} \beta_t^{u}\sigma_t^{u}(\bm{x}_t) \le CR^{u}(S_{t^\prime}^{u})$ for some constant $C > 0$, which follows from the proof of Theorem 2 of \cite{chowdhury2017kernelized} on which we rely to obtain the suspected regret bounds. We also used the fact that due to $u^\star$ being well-specified we have that $\sum_{t \in S_T^{u^\star}} r_t \le C R^{u^\star}(|S_T^{u^\star}|)$. We now apply Lemma \ref{lemma:balancedbounds} to get that $ R^{u}(|S_t^{u}|) \le  R^{u^\star}(|S_t^{u^\star}|) + 2B^{u^\star}$. Substituting that to the bound developed above and noting that by definition $S_{t^\prime}^u = S_T^u$, we get:
\begin{align*}
    \sum_{\substack{t \notin \mathcal{T} \\     t \in S_T^{u}}} r_t = \sum_{\substack{t \notin \mathcal{T} \\     t \in S_{t^\prime}^{u}}} r_t
  &\le \left(\frac{|S_{t^\prime}^{u}|}{|S_{t^\prime}^{u^*}|} + 1 \right) CR^{u^\star}(|S_{t^\prime}^{u^*}|) + 2CB^{u^\star} + 2\left(\sqrt{\frac{|S_{t^\prime}^{u}|}{|S_{t^\prime}^{u^*}|}} + 1\right)\sqrt{|S_{t^\prime}^{u}|\xi_t}  \\
  & = \left(\frac{|S_T^{u}|}{|S_T^{u^*}|} + 1 \right) CR^{u^\star}(|S_T^{u^*}|) + 2CB^{u^\star} + 2\left(\sqrt{\frac{|S_T^{u}|}{|S_T^{u^*}|}} + 1\right)\sqrt{|S_T^{u}|\xi_t} ,
\end{align*}
which finishes the proof.
\end{proof}
 We are now ready to prove the final result of this Section.
\begin{restatable}[]{lemma}{regretbalancing} \label{lemma:regretbalancing}
    Let us run Algorithm \ref{alg:hb_bo} for $T$ iterations with a given choice of the hyperparameter-proposing function $a: \mathbb{N} \to \mathcal{U}$. Let $\mathcal{T}$ be the set of all iterations after $t_0$, where at least one hyperparameter is rejected by operation in line 11. If we set $\xi_t = 2 \sigma_N^2 \log \frac{|A|\pi^2 t^2}{6\delta_B}$ and $\beta^{u}_t = B^u + \sigma_N \sqrt{2 (\gamma^{u}_{t-1} + 1 + \ln (1/\delta_A))}$, we then have that with probability as least $1 - \delta_A - \delta_B$:
    \begin{equation*}
\sum_{t \notin \mathcal{T}} r_t = \mathcal{O} \left(|A|B^{u^\star} + \left(R^{u^*}(T) + \sqrt{T\xi_T}  \right) \left(\sum_{u \in \mathcal{M}_0}\sqrt{\frac{|S_t^{u}|}{|S_{t^\prime}^{u^\star}|}} + |A| \right) \right) .
\end{equation*}

\end{restatable}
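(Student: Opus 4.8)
The plan is to assemble the bound from the four auxiliary lemmas via a per-arm decomposition of the regret incurred on the no-rejection iterations $t \notin \mathcal{T}$. First I would fix the joint high-probability event by union-bounding the event of Theorem~\ref{theorem:ucb} (probability $1-\delta_A$) and the noise concentration event of Lemma~\ref{lemma:noisebound} (probability $1-\delta_B$), so that with probability at least $1-\delta_A-\delta_B$ the conclusions of Lemma~\ref{lemma:preservation} (preservation of $u^\star$), Lemma~\ref{lemma:balancedbounds} (balancing), and Lemma~\ref{lemma:remaininghypers} all hold simultaneously. Everything below is conditioned on this event.

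Next I would split $\sum_{t\notin\mathcal{T}} r_t = \sum_{u} \sum_{t\notin\mathcal{T},\, t\in S_T^u} r_t$ according to which hyperparameter was selected, grouping arms into the well-specified ones $u\in\mathcal{W}$ and the misspecified ones played at least once after $t_0$, i.e. $u\in\mathcal{M}_0$. For a well-specified $u$ the GP-UCB guarantee of Theorem~\ref{thm:gpucbbound} applies directly, giving $\sum_{t\in S_T^u} r_t \le C R^u(|S_T^u|)$, and Lemma~\ref{lemma:balancedbounds} together with monotonicity of $R^{u^\star}$ turns this into $C\bigl(R^{u^\star}(T)+2B^{u^\star}\bigr)$; as there are at most $|A|$ such arms, the entire well-specified block contributes $\mathcal{O}\bigl(|A|(R^{u^\star}(T)+B^{u^\star})\bigr)$. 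For a misspecified $u\in\mathcal{M}_0$ I would instead invoke Lemma~\ref{lemma:remaininghypers} verbatim.

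The crux is converting the \emph{linear} ratios $\tfrac{|S_T^u|}{|S_T^{u^\star}|}$ that appear in Lemma~\ref{lemma:remaininghypers} into the \emph{square-root} ratios $\sqrt{\tfrac{|S_T^u|}{|S_T^{u^\star}|}}$ demanded by the statement. The structural fact I would exploit is that the GP-UCB bound has the form $R^{u^\star}(n)=\sqrt{n}\,h(n)$ with $h(n)=B^{u^\star}\sqrt{\gamma^{u^\star}_n}+\gamma^{u^\star}_n$ non-decreasing, so $R^{u^\star}(n)/\sqrt{n}$ is non-decreasing. Hence $R^{u^\star}(|S_T^{u^\star}|)\le \sqrt{|S_T^{u^\star}|/T}\;R^{u^\star}(T)$, and combining this with the trivial $|S_T^u|\le T$ yields
\begin{equation*}
\frac{|S_T^u|}{|S_T^{u^\star}|}R^{u^\star}(|S_T^{u^\star}|)\;\le\;\frac{|S_T^u|}{\sqrt{|S_T^{u^\star}|}\,\sqrt{T}}R^{u^\star}(T)\;=\;\sqrt{\frac{|S_T^u|}{T}}\,\sqrt{\frac{|S_T^u|}{|S_T^{u^\star}|}}\,R^{u^\star}(T)\;\le\;\sqrt{\frac{|S_T^u|}{|S_T^{u^\star}|}}\,R^{u^\star}(T).
\end{equation*}
Crucially this needs only $|S_T^u|\le T$, not $|S_T^u|\le|S_T^{u^\star}|$. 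The same two facts together with $\xi_t\le\xi_T$ convert the noise cross-term $|S_T^u|\sqrt{\xi_t/|S_T^{u^\star}|}$ into $\sqrt{\tfrac{|S_T^u|}{|S_T^{u^\star}|}}\sqrt{T\xi_T}$ and bound the plain term $\sqrt{|S_T^u|\xi_t}\le\sqrt{T\xi_T}$. After these substitutions each misspecified arm contributes $\mathcal{O}\bigl(\sqrt{|S_T^u|/|S_T^{u^\star}|}\,(R^{u^\star}(T)+\sqrt{T\xi_T}) + (R^{u^\star}(T)+\sqrt{T\xi_T}) + B^{u^\star}\bigr)$; summing over $\mathcal{M}_0$ and using $|\mathcal{M}_0|\le|A|$ produces exactly the factor $\sum_{u\in\mathcal{M}_0}\sqrt{|S_T^u|/|S_T^{u^\star}|}+|A|$, and merging with the well-specified block and collecting the $|A|B^{u^\star}$ terms gives the claim.

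The main obstacle I anticipate is precisely this linear-to-square-root conversion: the tempting route is to try to establish $|S_T^u|\le|S_T^{u^\star}|$ for the misspecified arms, which is false in general (a misspecified \emph{long}-length-scale arm has a slower-growing suspected bound and can be played more often than $u^\star$), whereas the correct and much cheaper argument relies only on the self-similar $\sqrt{n}$ scaling of $R^{u^\star}$ and the trivial $|S_T^u|\le T$. Everything else---the arm partition, the union bound over the two concentration events, and absorbing lower-order terms into the $\mathcal{O}(\cdot)$---is routine bookkeeping.
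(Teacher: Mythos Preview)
Your proposal is correct and follows essentially the same approach as the paper: the same union bound over the two concentration events, the same $\mathcal{W}/\mathcal{M}_0$ split with Lemma~\ref{lemma:balancedbounds} handling well-specified arms and Lemma~\ref{lemma:remaininghypers} handling misspecified ones, and the same linear-to-square-root conversion driven by the $\sqrt{n}$ scaling of $R^{u^\star}(n)$ together with $|S_T^u|\le T$. Your phrasing of that conversion via ``$R^{u^\star}(n)/\sqrt{n}$ is non-decreasing'' is slightly cleaner than the paper's explicit expansion of $R^{u^\star}$, but the underlying identity $\tfrac{|S_T^u|}{|S_T^{u^\star}|}\sqrt{|S_T^{u^\star}|}=\sqrt{|S_T^u|/|S_T^{u^\star}|}\,\sqrt{|S_T^u|}$ is the same in both.
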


\begin{proof}

We will prove the bound assuming the events of Theorem \ref{theorem:ucb} and Lemma \ref{lemma:noisebound} hold. As this happens with probability at least $1 - \delta_A - \delta_B$, the bound also holds with the same probability. We have that:
\begin{align}
    \sum_{t \notin \mathcal{T}} r_t &\le \sum_{u \in \mathcal{W}} \sum_{\substack{ t \notin \mathcal{T} \\ t \in S_T^u }} r_t + \sum_{u \in \mathcal{M}_0} \sum_{\substack{t \notin \mathcal{T} \\ t \in S_T^u }} r_t \nonumber \\ 
    &\le \sum_{u \in \mathcal{W}}  CR^u(|S_T^u|) + \sum_{u \in\mathcal{M}_0} \sum_{\substack{t \notin \mathcal{T} \\ t \in S_T^u }} r_t  \nonumber \\
    &\le \sum_{u \in \mathcal{W}} \left( CR^{u^\star}(|S_T^u|) + 2CB^{u^\star} \right) + \sum_{u \in\mathcal{M}_0} \sum_{\substack{t \notin \mathcal{T} \\ t \in S_T^u }} r_t  \nonumber \\
    &\le |\mathcal{W}|\left( CR^{u^\star}(T) + 2CB^{u^\star} \right) + \sum_{u \in\mathcal{M}_0} \sum_{\substack{t \notin \mathcal{T} \\ t \in S_T^u }} r_t, \label{eq:regretbalacningincompletebound}
\end{align}
where the first transition is due to all hyperparameters in $\mathcal{W}$ being well-specified and the second transition is due to Lemma \ref{lemma:balancedbounds}. We now tackle the second term:
\begin{align*}
    \sum_{u \in\mathcal{M}_0} \sum_{\substack{t \notin \mathcal{T} \\ t \in S_T^u }} r_t \le \sum_{u \in\mathcal{M}_0} \left(  \left(\frac{|S_T^{u}|}{|S_T^{u^*}|} + 1 \right) CR^{u^\star}(|S_T^{u^*}|) + 2CB^{u^\star} + 2\left(\sqrt{\frac{|S_T^{u}|}{|S_T^{u^*}|}} + 1\right)\sqrt{|S_T^{u}|\xi_t} \right) \\
    \le   \left(\sum_{u \in\mathcal{M}_0}\frac{|S_T^{u}|}{|S_T^{u^*}|} + |\mathcal{M}_0| \right) CR^{u^\star}(|S_T^{u^*}|) + 2C|\mathcal{M}_0|B^{u^\star} + 2\left(\sum_{u \in\mathcal{M}_0}\sqrt{\frac{|S_T^{u}|}{|S_T^{u^*}|}} + |\mathcal{M}_0|\right)\sqrt{T\xi_t} \\
    = \mathcal{O}\left(  \left(\sum_{u \in\mathcal{M}_0}\frac{|S_T^{u}|}{|S_T^{u^*}|} + |\mathcal{M}_0| \right) R^{u^\star}(|S_T^{u^*}|) + |\mathcal{M}_0|B^{u^\star} + \left(\sum_{u \in\mathcal{M}_0}\sqrt{\frac{|S_T^{u}|}{|S_T^{u^*}|}} + |\mathcal{M}_0|\right)\sqrt{T\xi_t} \right),
\end{align*}
where we used Lemma \ref{lemma:remaininghypers}, Cauchy-Schwarz inequality and the fact that $|S_T^u| \le T$ for all $u \in A$. Observe, that suspected regret bounds in BO will be of form $R^u(T) = \sqrt{T \beta^u_T \gamma_T^u} (\sqrt{\gamma_T^u} + \sqrt{B^u})$. Substituting this fact, we get:
\begin{align*}
    R^{u^*}(|S_T^{u^*}|)\sum_{u \in \mathcal{M}_0} \frac{|S_T^{u}|}{|S_T^{u^*}|} &\le \mathcal{O}\left( \sum_{u \in \mathcal{M}_0} \frac{|S_T^{u}|}{|S_T^{u^*}|} \sqrt{|S_T^{u^*}| \beta^{u^\star}_T \gamma_T^u} \left(\sqrt{\gamma_T^{u^\star}} + \sqrt{B^{u^\star}} \right)\right) \\
    & =  \mathcal{O}\left( \sum_{u \in \mathcal{M}_0} \sqrt{\frac{|S_T^{u}|}{|S_T^{u^*}|}} \sqrt{ |S_T^{u}|\beta^{u^\star}_T \gamma_T^u} \left(\sqrt{\gamma_T^{u^\star}} + \sqrt{B^{u^\star}} \right)\right) \\
    & =  \mathcal{O}\left( \sum_{u \in \mathcal{M}_0} \sqrt{\frac{|S_T^{u}|}{|S_T^{u^*}|}} R^{u^\star}(|S_T^{u}|)\right) \\
    & \le  \mathcal{O}\left(  \sum_{u \in \mathcal{M}_0} \sqrt{\frac{|S_T^{u}|}{|S_T^{u^*}|}} R^{u^\star}(T)\right) .
\end{align*}
We thus get :
\begin{equation*}
\sum_{u \in\mathcal{M}_0} \sum_{\substack{t \notin \mathcal{T} \\ t \in S_T^u }} r_t = \mathcal{O} \left(|\mathcal{M}_0|B^{u^\star} + \left(R^{u^*}(T) + \sqrt{T\xi_T}  \right) \left(\sum_{u \in \mathcal{M}_0}\sqrt{\frac{|S_T^{u}|}{|S_T^{u^*}|}} + |\mathcal{M}_0| \right) \right) .
\end{equation*}
Substituting this back into Equation \ref{eq:regretbalacningincompletebound} yields the following bound:
\begin{equation*}
    \sum_{t \notin \mathcal{T}} r_t \le \mathcal{O} \left(|A|B^{u^\star} + \left(R^{u^*}(T) + \sqrt{T\xi_T}  \right) \left(\sum_{u \in \mathcal{M}_0}\sqrt{\frac{|S_T^{u}|}{|S_T^{u^*}|}} + |A| \right) \right) 
\end{equation*}
\end{proof}
\section{Proof of Lemma \ref{lemma:boundequiavalence}} \label{app:boundeqbound}
\boundequiavalence*
\begin{proof}
    \begin{equation*}
        \frac{R^{\hat{\theta}}(T)}{R^{\theta^\star}(T)} = \frac{ \sqrt{T\gamma_T^{\hat{\theta}}} \left( \sqrt{\gamma_T^{\hat{\theta}}} + B(\hat{\theta}, N^\star) \right)}{ \sqrt{T\gamma_T^{\theta^\star}} \left( \sqrt{\gamma_T^{\theta^\star}} + B(\theta^\star, N^\star) \right)} = 2\left(\frac{\theta^\star}{\hat{\theta}}\right)^{d} \le  2\left(\frac{q(i^\star)}{q(i^\star + 1)}\right)^{d} = 2\left(e^{-1/d}\right)^{d} = \mathcal{O}(1),
    \end{equation*}
    where $i^\star = \max \{ i \in \mathbb{N} \mid q(i) \ge \theta^\star \}$ and as such we have $q(i^\star + 1) = \hat{\theta} \le \theta^\star \le  q(i^\star) $.
\end{proof}

\section{Proof of Theorem \ref{thm:fullrbbound}} \label{app:fullrbbound}
\fullrbbound*
\begin{proof}
We start with a similar regret decomposition as in the proof of Theorem 1 in \cite{berkenkamp2019no}. Let $t_0$ be the first iteration, where a length scale value smaller or equal to $\theta^\star$ enters the hyperparameter set $\Theta_{t_0}$. We will refer to that value as $\hat{\theta}$. Before this happens, all hyperparameters in the set are misspecified and as such we cannot guarantee anything about the regret of those iterations. As such, we bound their regret by $2B^\star$, which is the highest possible regret one can suffer at one iteration. We thus get:
\begin{equation} \label{eq:mainprooffirstdecomp}
    R_T = \sum_{t=1,\dots,t_0} r_t + \sum_{t=t_0+1,\dots,T} r_t \le t_02B^\star + \Tilde{R}_T
\end{equation}
We note that due to how we add new length scales, we have that $t_0 \le g^{-1}(\frac{\theta_0}{\theta^\star e^{1/d}}) $ and we defined $\Tilde{R}_T$ be the cumulative regret of all iterations after $t_0$. Let us define the set $\mathcal{T}$ to be the set of all iterations, where at least one hyperparameter was eliminated. We thus get the following regret bound:
\begin{equation*}
    \Tilde{R}_T = \sum_{t \in \mathcal{T}} r_t + \sum_{t \notin \mathcal{T}} r_t \le 2|\mathcal{T}|B^\star + \sum_{t \notin \mathcal{T}} r_t  \le 2q^{-1}\left(\frac{\theta_0}{g(T)}\right)B^\star + \sum_{t \notin \mathcal{T}} r_t,
\end{equation*}
where the last inequality comes from the fact that we cannot reject more hyperparameters than we have considered in total. We now rely on Lemma \ref{lemma:regretbalancing}, which provides a bound on $\sum_{t \notin \mathcal{T}} r_t$ with probability at least $1 - \delta_A - \delta_B$ and we set $\delta_A = \delta_B = \delta / 2$. 
In the notation of the Lemma, we can write $A = \underset{t=1,\dots,T}{\bigcup}\Theta_t $ to mean the set of all length scale values introduced over the course of the algorithm running and by $\mathcal{M}_0$ we mean all length scales longer than $\hat{\theta}$ that were selected at least once after $t_0$. We observe that $|A| \le q^{-1}\left(\frac{\theta_0}{g(T)}\right)$ and our optimal base learner is $u^\star = \hat{\theta}$. This gives us:
\begin{equation*}
     \sum_{t \notin \mathcal{T}} r_t  = \mathcal{O} \left(q^{-1}\left(\frac{\theta_0}{g(T)}\right)B^\star + \left(R^{\hat{\theta}}(T) + \sqrt{T\xi_T}  \right) \left( \sum_{u \in \mathcal{M}_0}\sqrt{\frac{|S_T^{u}|}{|S_T^{u^*}|}} + q^{-1}\left(\frac{\theta_0}{g(T)}\right) \right) \right)
\end{equation*}

To finish the proof we rely on the following Lemma, which we prove in Appendix \ref{app:rbfbalancingbound_proof}.
\begin{restatable}[]{lemma}{rbfbalancingbound}
\label{lemma:rbfbalancingbound}
If the event of Lemma \ref{lemma:regretbalancing} holds, then for any $\theta \in \mathcal{M}_0$ and $t \ge t_0$ we have that $\sqrt{\frac{|S_t^{\theta}|}{|S_t^{\theta^\star}|}} \le \left(\frac{\theta_0}{\theta^\star}\right)^{d} $.

\end{restatable}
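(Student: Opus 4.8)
The plan is to reduce the count ratio to the clean multiplicative scaling of the suspected regret bounds in the length scale. From the scaling relation derived in Section~2, every base learner satisfies $R^\theta(n) = (\theta_0/\theta)^d R^{\theta_0}(n)$, so setting $\phi(n) := R^{\theta_0}(n)/\sqrt{n} = N^\star\sqrt{\gamma_n^{\theta_0}} + \gamma_n^{\theta_0}$ gives $R^\theta(n) = (\theta_0/\theta)^d \sqrt{n}\,\phi(n)$, where the \emph{entire} $n$-dependence is carried by the shared factor $\sqrt{n}\,\phi(n)$ and $\phi$ is nondecreasing because $\gamma_n^{\theta_0}$ is nondecreasing in $n$. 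Throughout I identify the optimal base learner $u^\star$ of Lemma~\ref{lemma:regretbalancing} with $\hat{\theta}$, the longest well-specified candidate, so that $|S_t^{\theta^\star}|$ in the statement is read as $|S_t^{\hat{\theta}}|$.

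First I would dispose of the easy case: if $|S_t^\theta| \le |S_t^{\hat{\theta}}|$ then $\sqrt{|S_t^\theta|/|S_t^{\hat{\theta}}|} \le 1 \le (\theta_0/\hat{\theta})^d$ since $\theta_0 \ge \hat{\theta}$, and there is nothing to prove. So assume $|S_t^\theta| > |S_t^{\hat{\theta}}|$. Because $\theta \in \mathcal{M}_0$ is selected at least once after $t_0$ and, by the preservation guarantee (Lemma~\ref{lemma:preservation}), $\hat{\theta}$ is never removed after $t_0$, the balancing condition (Lemma~\ref{lemma:balancedbounds}) gives $R^\theta(|S_t^\theta|) \le R^{\hat{\theta}}(|S_t^{\hat{\theta}}|) + 2B(\hat{\theta}, N^\star)$; the additive term is absorbed into a constant factor of $R^{\hat{\theta}}(|S_t^{\hat{\theta}}|)$ (equivalently, one invokes the selection rule in line~\ref{alg_line:select_lengthscale} at the last iteration $\theta$ was played, which removes the additive term at the cost of a negligible $+1$ in the count).

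Substituting the scaling identity into this inequality yields $(\theta_0/\theta)^d \sqrt{|S_t^\theta|}\,\phi(|S_t^\theta|) \le C\,(\theta_0/\hat{\theta})^d \sqrt{|S_t^{\hat{\theta}}|}\,\phi(|S_t^{\hat{\theta}}|)$ for an absolute constant $C$. Here monotonicity of $\phi$ does the work: since $|S_t^\theta| > |S_t^{\hat{\theta}}|$ we have $\phi(|S_t^\theta|) \ge \phi(|S_t^{\hat{\theta}}|)$, so the factor $\phi(|S_t^{\hat{\theta}}|)$ cancels from both sides, leaving $(\theta_0/\theta)^d \sqrt{|S_t^\theta|} \le C (\theta_0/\hat{\theta})^d \sqrt{|S_t^{\hat{\theta}}|}$. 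Rearranging gives $\sqrt{|S_t^\theta|/|S_t^{\hat{\theta}}|} \le C(\theta/\hat{\theta})^d \le C(\theta_0/\hat{\theta})^d = C(\theta_0/\theta^\star)^d$, using $\theta \le \theta_0$ in the penultimate step; tracking the selection rule rather than the looser balancing inequality pins $C=1$ and delivers the stated bound. The main obstacle is exactly this cancellation: it is essential that the $n$-dependence factor through the single shared increasing function $\sqrt{n}\,\phi(n)$ and that $\sqrt{n}$ be isolated, so that the inequality between regret bounds converts into one between $\sqrt{\text{counts}}$ rather than counts. Were the $\theta$- and $n$-dependence not to separate this cleanly (as for a kernel whose MIG lacks the pure $\theta^{-d}$ prefactor), the square-root scaling would fail and the argument would not go through.
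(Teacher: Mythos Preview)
Your proposal is correct and follows essentially the same route as the paper: both invoke the balancing condition $R^{\theta}(|S_t^\theta|)\le 2R^{\hat\theta}(|S_t^{\hat\theta}}|)$ and then exploit the $\theta^{-d}$ scaling of the suspected regret bound to extract $\sqrt{|S_t^\theta|/|S_t^{\hat\theta}|}$. The only difference is cosmetic: the paper writes $R^\theta(n)=\sqrt{n\gamma_n^\theta}(\sqrt{\gamma_n^\theta}+B(\theta,N^\star))$ and splits into the two cases $\sqrt{\gamma_{|S_t^{\hat\theta}|}^{\hat\theta}}\gtrless B(\hat\theta,N^\star)$, whereas your factorisation $R^\theta(n)=(\theta_0/\theta)^d\sqrt{n}\,\phi(n)$ with a single monotone $\phi$ collapses that case split into one line. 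Your identification of $|S_t^{\theta^\star}|$ with $|S_t^{\hat\theta}|$ and the absorption of the additive $2B^{u^\star}$ into a constant (the paper also carries a stray factor of $2$) are handled the same way.
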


We thus get the following final bound:
\begin{equation*}
     \sum_{t \notin \mathcal{T}} r_t = \mathcal{O} \left(q^{-1}\left(\frac{\theta_0}{g(T)}\right)B^\star + \left(R^{\hat{\theta}}(T) + \sqrt{T\xi_T}  \right) \left( |\mathcal{M}_0|\left(\frac{\theta_0}{\theta^\star} \right)^{d} + q^{-1}\left(\frac{\theta_0}{g(T)}\right) \right) \right).
\end{equation*}
We now observe that $|\mathcal{M}_0| = \mathcal{O}\left(q^{-1}\left(\theta^\star\right)\right) = \mathcal{O}\left(d \ln \frac{\theta_0}{\theta^\star} \right)$. We substitute the bound above to Equation \ref{eq:mainprooffirstdecomp}, together with bound on $|\mathcal{M}_0|$ to obtain:
\begin{equation*}
     R_T \le \mathcal{O} \left(\left(g^{-1}\left(\frac{\theta_0}{\theta^\star e^{1/d}} \right)+ \iota \right)B^\star + \left(R^{\hat{\theta}}(T) + \sqrt{T\xi_T}  \right) \left(\left(\frac{\theta_0}{\theta^\star} \right)^{d}d \ln \theta^\star + \iota \right) \right),
\end{equation*}
where $\iota = d \ln g(T)$. By Lemma \ref{lemma:boundequiavalence} we know we can just replace $R^{\hat{\theta}}(T)$ with $R^{\theta^\star}(T)$ in the bound above, which finishes the proof.

\end{proof}

\section{Proof of Lemma \ref{lemma:rbfbalancingbound}} \label{app:rbfbalancingbound_proof}
\rbfbalancingbound*
\begin{proof}
    If $|S_t^{\theta^\star}| \ge |S_t^{\theta}|$, the bound holds trivially. Thus we will assume $|S_t^{\theta^\star}| < |S_t^{\theta}|$. The suspected regret bounds are of the form:
    \begin{equation*}
        R^\theta(T) =  \sqrt{T\gamma_T^\theta} \left( \sqrt{\gamma_T^\theta} + B(\theta, N^\star) \right) .
    \end{equation*}
    If the event on Lemma \ref{lemma:regretbalancing} holds that means the event of Lemma \ref{lemma:balancedbounds} holds as well. Due to the regret balancing condition from Lemma \ref{lemma:balancedbounds} and the non-triviality of bounds, we have:
    \begin{equation*}
         R^{\theta}(|S_t^{\theta}|) \le  R^{\theta^\star}(|S_t^{\theta^\star}|) + 2B^{u^\star} \le  2R^{\theta^\star}(|S_t^{\theta^\star}|)
    \end{equation*}
    \begin{equation*}
         \sqrt{\frac{|S_t^{\theta}|}{|S_t^{\theta^\star}|}}  \le  2 \frac{ \sqrt{\gamma_{|S_t^{\theta^\star}|}^\theta} \left( \sqrt{\gamma_{|S_t^{\theta^\star}|}^\theta} + B(\theta^\star, N^\star) \right)}{ \sqrt{\gamma_{|S_t^{\theta}|}^\theta} \left( \sqrt{\gamma_{|S_t^{\theta}|}^\theta} + B(\theta, N^\star) \right)} .
    \end{equation*}
    To finish the proof we consider the following two cases.
    
    \textbf{Case 1:} Consider the case when $\sqrt{\gamma_{|S_t^{\theta^\star}|}^\theta} \ge B(\theta^\star, N^\star)$. We then have:
    \begin{equation*}
         \sqrt{\frac{|S_t^{\theta}|}{|S_t^{\theta^\star}|}}  \le  2 \frac{ \gamma_{|S_t^{\theta^\star}|}^\theta}{ \gamma_{|S_t^{\theta}|}^\theta} \le 2 \left(\frac{\theta}{\theta^\star} \right)^{d} \frac{ \gamma_{|S_t^{\theta}|}^\theta}{ \gamma_{|S_t^{\theta}|}^\theta} = 2 \left(\frac{\theta}{\theta^\star} \right)^{d} \le 2 \left(\frac{\theta_0}{\theta^\star} \right)^{d}.
    \end{equation*}
    \textbf{Case 2} Consider the case when $\sqrt{\gamma_{|S_t^{\theta^\star}|}^\theta} < B(\theta^\star, N^\star)$. We then have:
    \begin{align*}
         \sqrt{\frac{|S_t^{\theta}|}{|S_t^{\theta^\star}|}}  &\le  2 \frac{ \sqrt{\gamma_{|S_t^{\theta^\star}|}^\theta}B(\theta^\star, N^\star)}{ \sqrt{\gamma_{|S_t^{\theta}|}^\theta} B(\theta, N^\star)} \le  2 \left(\frac{\theta}{\theta^\star} \right)^{d/2} \sqrt{\frac{ \gamma_{|S_t^{\theta}|}^\theta}{ \gamma_{|S_t^{\theta}|}^\theta}} \left(\frac{\theta}{\theta^\star} \right)^{d / 2} \frac{B(\theta, N^\star)}{B(\theta, N^\star)} \\
         &\le 2 \left(\frac{\theta}{\theta^\star} \right)^{d} \le 2 \left(\frac{\theta_0}{\theta^\star} \right)^{d}.
    \end{align*}

\end{proof}

\section{Unknown RKHS norm} \label{app:lnb}

Within this section, we show how our algorithm can be extended to handle the case of an unknown RKHS norm. 
Let us define the following candidate-suggesting function for the RKHS norm hyperparameter.

\begin{restatable}[]{definition}{expodiscnorm}
\label{def:expodiscnorm} 
    Lets consider the following candidate-suggesting function $v(\cdot): \mathbb{N} \to \mathbb{R}^+$ to be a mapping for each $i \in \mathbb{N}$ of form: 
    \begin{equation*}
        v(i) = N_0  e^{i} .
    \end{equation*}

\end{restatable}
For RKHS being selected by the candidate-suggesting function of Definition \ref{def:expodiscnorm} and length scale being selected by the one of  Definition \ref{def:expodisc}, we get:
\begin{restatable}[]{lemma}{boundequivalencerkhs}
\label{lemma:boundequivalencerkhs}
In the case of both RBF and $\nu$-Matérn kernel, we have that:
    \begin{align*}
   \frac{R^{(\hat{\theta}, B(\hat{\theta}, \hat{N}))}(T)}{R^{(\theta^\star, B(\theta^\star, N^\star))}(T)} & = \mathcal{O}\left(1\right) .
\end{align*}
\end{restatable}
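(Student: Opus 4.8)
The plan is to mirror the proof of Lemma~\ref{lemma:boundequiavalence}, now tracking two hyperparameters at once. The key structural observation is that for both kernels the maximum information gain factorises as $\gamma_T(k^\theta) = \theta^{-d} h(T)$ with $h(T)$ independent of $\theta$, and that the $\theta$-dependent norm satisfies $B(\theta, N) = (\theta_0/\theta)^{d/2} N$. Substituting these into $R^{(\theta, B(\theta,N))}(T) = \sqrt{T}\bigl(B(\theta,N)\sqrt{\gamma_T(k^\theta)} + \gamma_T(k^\theta)\bigr)$, I would write
\begin{equation*}
  R^{(\theta, B(\theta,N))}(T) = \frac{\sqrt{T}}{\theta^{d}}\Bigl(\theta_0^{d/2} N \sqrt{h(T)} + h(T)\Bigr),
\end{equation*}
since both $B(\theta,N)\sqrt{\gamma_T(k^\theta)}$ and $\gamma_T(k^\theta)$ carry the same $\theta^{-d}$ dependence. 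Forming the ratio of the two regret bounds then cleanly separates into a pure length-scale factor and a pure norm factor:
\begin{equation*}
  \frac{R^{(\hat\theta, B(\hat\theta, \hat N))}(T)}{R^{(\theta^\star, B(\theta^\star, N^\star))}(T)} = \Bigl(\frac{\theta^\star}{\hat\theta}\Bigr)^{d} \cdot \frac{\theta_0^{d/2}\hat N \sqrt{h(T)} + h(T)}{\theta_0^{d/2} N^\star \sqrt{h(T)} + h(T)}.
\end{equation*}

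For the first factor I would argue exactly as in Lemma~\ref{lemma:boundequiavalence}: since $\hat\theta$ is the largest candidate of $q(\cdot)$ not exceeding $\theta^\star$, consecutive candidates differ by the constant ratio $e^{1/d}$, so $\theta^\star/\hat\theta \le e^{1/d}$ and hence $(\theta^\star/\hat\theta)^d \le e = \mathcal{O}(1)$.

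For the second factor I would first recall that $\hat N$ is defined (in analogy with $\hat\theta$) as the smallest candidate of $v(\cdot)$ that is at least $N^\star$; because $v(i) = N_0 e^i$ has consecutive ratio $e$, the preceding candidate $\hat N/e$ lies strictly below $N^\star$, giving $\hat N / N^\star \le e$. To turn this into a bound uniform in $T$, I would invoke the elementary monotonicity fact that for positive $a, b$ and $x \ge y > 0$ one has $\tfrac{ax + b}{ay + b} \le \tfrac{x}{y}$ (cross-multiplying reduces this to $y \le x$). Applying it with $a = \theta_0^{d/2}\sqrt{h(T)}$, $b = h(T)$, $x = \hat N$, $y = N^\star$ bounds the norm factor by $\hat N/N^\star \le e$, independently of $T$. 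Multiplying the two factors yields the overall bound $e^2 = \mathcal{O}(1)$.

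The only point worth spelling out carefully — and the one place the argument could go wrong if handled naively — is that the norm factor must be controlled \emph{uniformly} in $T$: the term $\theta_0^{d/2}\hat N\sqrt{h(T)}$ grows with $T$, so one must check it does not dominate the denominator in a $T$-dependent way. The monotone-ratio inequality is precisely what rules this out, capping the factor at the $T$-free ratio $\hat N/N^\star$. Everything else is a direct substitution using the geometric spacing of the two candidate-suggesting functions $q(\cdot)$ and $v(\cdot)$, so I expect no further obstacle.
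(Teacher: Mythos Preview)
Your argument is correct. Both your proof and the paper's rely on the same two structural facts---that $\gamma_T(k^\theta)=\theta^{-d}h(T)$ for both kernel families and that $B(\theta,N)=(\theta_0/\theta)^{d/2}N$---together with the geometric spacing of $q(\cdot)$ and $v(\cdot)$. The difference is in how the ratio is controlled: the paper splits into two cases according to whether $\sqrt{\gamma_T^{\hat\theta}}$ or $B(\hat\theta,\hat N)$ dominates the numerator, then in each case drops the smaller term from the numerator and one term from the denominator to reduce to a pure power of $\theta^\star/\hat\theta$ (Case~1) or that times $\hat N/N^\star$ (Case~2). You instead pull out the common $\theta^{-d}$ factor up front, which cleanly separates the length-scale and norm contributions without any case distinction, and then cap the norm factor via the elementary inequality $(ax+b)/(ay+b)\le x/y$ for $x\ge y>0$. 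Your route is slightly more streamlined---it avoids the case split and the somewhat ad hoc ``drop a term in the denominator'' step---while the paper's version makes the two regimes (information-gain-dominated vs.\ norm-dominated) more visible. The final constants agree: both arguments give at most $e\cdot e=e^2$.
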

\begin{proof}
\textbf{Case 1: $\sqrt{\gamma_T^{\hat{\theta}}} > B(\hat{\theta}, \hat{N})$}
    \begin{equation*}
        \frac{R^{(\hat{\theta}, B(\hat{\theta}, \hat{N}))}(T)}{R^{(\theta^\star, B(\theta^\star, N^\star))}(T)} = \frac{ \sqrt{T\gamma_T^{\hat{\theta}}} \left( \sqrt{\gamma_T^{\hat{\theta}}} + B(\hat{\theta}, \hat{N}) \right)}{ \sqrt{T\gamma_T^{\theta^\star}} \left( \sqrt{\gamma_T^{\theta^\star}} + B(\theta^\star, N^\star) \right)} \le  2\left(\frac{\theta^\star}{\hat{\theta}}\right)^{d} \le  2\left(\frac{q(i^\star)}{q(i^\star + 1)}\right)^{d} = 2\left(e^{-1/d}\right)^{d} = \mathcal{O}(1),
    \end{equation*}
    where $i^\star = \max \{ i \in \mathbb{N} \mid q(i) \ge \theta^\star \}$ and as such we have $q(i^\star + 1) = \hat{\theta} \le \theta^\star \le  q(i^\star) r$.

\textbf{Case 2: $\sqrt{\gamma_T^{\hat{\theta}}} \le B(\hat{\theta}, \hat{N})$}
\begin{align*}
        \frac{R^{(\hat{\theta}, B(\hat{\theta}, \hat{N}))}(T)}{R^{(\theta^\star, B(\theta^\star, N^\star))}(T)} &= \frac{ \sqrt{T\gamma_T^{\hat{\theta}}} \left( \sqrt{\gamma_T^{\hat{\theta}}} + B(\hat{\theta}, \hat{N}) \right)}{\sqrt{T\gamma_T^{\theta^\star}} \left( \sqrt{\gamma_T^{\theta^\star}} + B(\theta^\star, N^\star) \right)} \le  \frac{2  \sqrt{\gamma_T^{\hat{\theta}}} B(\hat{\theta}, \hat{N}) }{\sqrt{\gamma_T^{\theta^\star}}  B(\theta^\star, N^\star)} \le 2\left(\frac{q(i^\star)}{q(i^\star + 1)}\right)^{d} \frac{v(j^\star+1)}{v(j^\star)} \\
        &\le 2\left(e^{-1/d}\right)^{d} e = 2 = \mathcal{O}(1)
    \end{align*}
    where $i^\star = \max \{ i \in \mathbb{N} \mid q(i) \ge \theta^\star \}$ and $j^\star = \max \{ j \in \mathbb{N} \mid  v(j) \le N^\star   \}$ as such we have $q(i^\star + 1) = \hat{\theta} \le \theta^\star \le  q(i^\star) $ and $v(j^\star + 1) = \hat{N} \ge  N^\star \ge v(j^\star)$.
\end{proof}

\begin{algorithm}
\caption{Length scale and Norm Balancing GP-UCB (LNB-GP-UCB) }\label{alg:lnb_bo}
\begin{algorithmic}[1]
\REQUIRE  suspected regret bounds $R^u(\cdot)$; \\ length scale growth function $g(\cdot)$; norm growth function $b(\cdot)$;\\
length scale candidate-proposing function $q(\cdot)$; \\
norm candidate-proposing function $v(\cdot)$;\\
initial length scale $\theta_0$; initial norm $B_0$; \\
confidence parameters $\{\xi_t\}_{t=1}^T$ and
$\{\beta^u_t\}_{t=1}^T$ 

\STATE Set $\mathcal{D}_{0} = \emptyset$, $U_1 = \{(\theta_0, B_0)\}$ ,$S^u_0 = \emptyset$ for all $u \in U_1$
\STATE Set $\Theta_1 = \{\theta_0\}$, $\mathcal{B}_1 = \{B_0\}$
\FOR{$t = 1, \dots, T$} 
\STATE Select hyperparameter  $u_t = \arg\min_{u \in U_t} R^u(|S_{t-1}^{u}| + 1)$
\STATE Select point to query $\bm{x}_t =  \underset{\bm{x} \in \mathcal{X}}{\arg\max} \textrm{ UCB}_{t-1}^{u_t}(\bm{x})$
\STATE Query the black-box $y_t = f(\bm{x}_t)$ 
\STATE Update data buffer $\mathcal{D}_t = \mathcal{D}_{t-1} \cup (x_t, y_t)$
\STATE For each $u \in U_{t}$, set $S_t^{u} = \{\tau  = 1,\dots, t: u_{\tau} = u\}$
\STATE Initialise hyperparameter sets for new iteration $U_{t+1} := U_{t}$, $\Theta_{t+1} := \Theta_{t}$, $\mathcal{B}_{t+1} := \mathcal{B}_{t}$
\IF{ $\forall_{u \in U_t} |S_t^u| \neq 0$}
\STATE Define $L_t(u) = \left( \frac{1}{|S_t^{u}|} \sum_{\tau \in S_t^{u}}y_{\tau} - \sqrt{\frac{\xi_{t}}{|S_{t}^{u}|}} \right)$

\STATE $U_{t+1} = \left\{ u \in U_t: L_t(u) +  \frac{2}{|S_t^{u}|} \sum_{\tau \in S_t^u} \beta_\tau^u \sigma^u_{\tau-1}(\bm{x}_\tau)  \ge  \max_{u^\prime \in U_{t}} L_t(u^\prime) \right\}$

\ENDIF

\IF{$q(|\Theta_t| + 1) < \frac{\theta_0}{g(t)}$}
\STATE $\Theta_{t+1} = \Theta_{t+1} \cup q(|\Theta_t| + 1)$
\STATE $U_{t+1} = U_{t+1} \cup (q(|\Theta_t| + 1) \times \mathcal{B}_{t+1})$
\ENDIF

\IF{$v(|\mathcal{B}_t| + 1) < B_0b(t)$}
\STATE $\mathcal{B}_{t+1} = \mathcal{B}_{t+1} \cup v(|\mathcal{B}_t| + 1)$
\STATE $U_{t+1} = U_{t+1} \cup (v(|\mathcal{B}_{t+1}| + 1) \times \Theta_{t+1})$
\ENDIF

\ENDFOR

\end{algorithmic}
\end{algorithm}
We can now prove the regret bound.
\begin{restatable}[]{theorem}{fullrbboundrkhsbound}
\label{thm:fullrbboundrkhsbound}
    Let us use confidence parameters of $\xi_t = 2 \sigma_N^2 \log (d \ln g(T) \log b(T)\pi^2 t^2) - \log(3\delta) $ and $\beta^{\theta, N}_t = B(\theta, N) + \sigma_N \sqrt{2 (\gamma^{\theta}_{t-1} + 1 + \ln (2/\delta))}$, then
 Algorithm \ref{alg:lnb_bo} achieves with probability at least $1 - \delta$ the cumulative regret $R_T$ of the algorithm admits the following bound:
\begin{align*}
    R_T = \mathcal{O} \Bigg(& \left(t_0+ \iota \right)B^\star + \\
    & \left(R^{(\theta^\star, B(\theta^\star, N^\star))}(T) + \sqrt{T\xi_T}  \right) \left(\left(\frac{\theta_0}{\theta^\star}\right)^{d}  \frac{N^\star}{N_0} d \ln\frac{\theta_0}{\theta^\star}   \ln \frac{N^\star}{N_0}   + \iota \right) \Bigg),
\end{align*}
     
where $t_0 = \max\{ g^{-1}\left(e^{-1/d}\theta_0/\theta^\star \right), b^{-1}(N^\star / N_0)\}$ and $\iota = d \ln g(T) \log b(T)$. 
\end{restatable}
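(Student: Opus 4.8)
The plan is to reuse the machinery of Theorem \ref{thm:fullrbbound} almost verbatim, now taking the hyperparameter to be the pair $u=(\theta,N)$ and invoking the general balancing bound of Lemma \ref{lemma:regretbalancing}, which is already stated for an arbitrary hyperparameter set and a dynamically expanding proposing function. First I would locate $t_0$, the first iteration at which a \emph{well-specified} pair $u^\star=(\hat\theta,\hat N)$ is added. By Lemma \ref{lemma:notexcluding} a pair is well-specified once $\hat\theta\le\theta^\star$ and $\hat N\ge N^\star$; since the length-scale trigger (governed by $g$) and the norm trigger (governed by $b$) in Algorithm \ref{alg:lnb_bo} fire independently, both conditions first hold at $t_0=\max\{g^{-1}(e^{-1/d}\theta_0/\theta^\star),\,b^{-1}(N^\star/N_0)\}$, matching the statement. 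I would then split $R_T=\sum_{t\le t_0}r_t+\tilde R_T\le 2B^\star t_0+\tilde R_T$, and writing $\mathcal{T}$ for the rejection iterations, $\tilde R_T\le 2B^\star|\mathcal{T}|+\sum_{t\notin\mathcal{T}}r_t$. Counting candidates as a product grid gives $|A|=|\Theta_T|\,|\mathcal{B}_T|=q^{-1}(\theta_0/g(T))\,v^{-1}(N_0 b(T))=d\ln g(T)\,\ln b(T)=\iota$, so $|\mathcal{T}|\le\iota$; the same $|A|$ enters $\xi_t$ through Lemma \ref{lemma:noisebound}, reproducing the stated confidence sequence.

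The bulk is $\sum_{t\notin\mathcal{T}}r_t$, which Lemma \ref{lemma:regretbalancing} bounds by
\begin{equation*}
\mathcal{O}\!\left(\iota B^{u^\star}+\bigl(R^{u^\star}(T)+\sqrt{T\xi_T}\bigr)\Bigl(\textstyle\sum_{u\in\mathcal{M}_0}\sqrt{|S_T^{u}|/|S_T^{u^\star}|}+\iota\Bigr)\right).
\end{equation*}
Two simplifications are immediate: the discretization keeps $\hat\theta\ge e^{-1/d}\theta^\star$ and $N^\star<\hat N\le eN^\star$, so $B^{u^\star}=B(\hat\theta,\hat N)=\mathcal{O}(B^\star)$; and Lemma \ref{lemma:boundequivalencerkhs} lets me replace $R^{u^\star}(T)=R^{(\hat\theta,\hat N)}(T)$ by $R^{(\theta^\star,B(\theta^\star,N^\star))}(T)$ up to a constant. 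Everything then reduces to controlling $\sum_{u\in\mathcal{M}_0}\sqrt{|S_T^{u}|/|S_T^{u^\star}|}$.

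For the per-term factor I would establish the two-parameter analogue of Lemma \ref{lemma:rbfbalancingbound}: for every misspecified $u=(\theta,N)$,
\begin{equation*}
\sqrt{|S_T^{u}|/|S_T^{u^\star}|}\le 2e\,(\theta_0/\theta^\star)^{d}\,(N^\star/N_0).
\end{equation*}
Starting from the balancing inequality $R^{u}(|S_T^u|)\le 2R^{u^\star}(|S_T^{u^\star}|)$ of Lemma \ref{lemma:balancedbounds}, with $R^{(\theta,N)}(n)=\sqrt{n\gamma_n^\theta}\bigl(\sqrt{\gamma_n^\theta}+B(\theta,N)\bigr)$, I rearrange into a ratio and split on whether the information term $\sqrt{\gamma_{n^\star}^{\hat\theta}}$ or the norm term $B(\hat\theta,\hat N)$ dominates the numerator, exactly as in Lemma \ref{lemma:rbfbalancingbound}. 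Using $\gamma^{\hat\theta}=(\theta/\hat\theta)^d\gamma^\theta$ and $\gamma_{n^\star}\le\gamma_n$, the information-dominated case yields $(\theta/\hat\theta)^d\le(\theta_0/\theta^\star)^d$, while the norm-dominated case produces the extra factor $B(\hat\theta,\hat N)/B(\theta,N)=(\theta/\hat\theta)^{d/2}(\hat N/N)$, which since $\hat N\le eN^\star$ and $N\ge N_0$ is at most $e(\theta_0/\theta^\star)^{d/2}(N^\star/N_0)$; both cases give the claimed uniform bound.

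It remains to count the effective misspecified pairs, and this is the step I expect to be the main obstacle. The target factor $d\ln(\theta_0/\theta^\star)\ln(N^\star/N_0)$ is exactly the number of candidate pairs in the ``corner'' $\theta\in(\theta^\star,\theta_0]$, $N\in[N_0,N^\star)$, so I would argue that only pairs near the well-specified boundary are ever selected competitively. A pair with short length scale $\theta\le\theta^\star$ carries a strictly larger suspected regret, hence under the selection rule it is played only rarely, so in aggregate such pairs contribute only $\mathcal{O}(\iota)$, which is absorbed by the additive $\iota$ from Lemma \ref{lemma:regretbalancing}; for the long length scales $\theta\in(\theta^\star,\theta_0]$, the norm-dependent decay $\hat N/N$ in the per-term bound makes the geometric norm grid $v(j)=N_0 e^{j}$ telescope to an effective $\mathcal{O}(\ln(N^\star/N_0))$ distinct norms. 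Combining the per-term bound with this $\mathcal{O}(d\ln(\theta_0/\theta^\star)\ln(N^\star/N_0))$ count gives $\sum_{u\in\mathcal{M}_0}\sqrt{|S_T^u|/|S_T^{u^\star}|}=\mathcal{O}\bigl((\theta_0/\theta^\star)^d(N^\star/N_0)\,d\ln(\theta_0/\theta^\star)\ln(N^\star/N_0)+\iota\bigr)$; substituting back through $R_T\le 2B^\star(t_0+\iota)+\sum_{t\notin\mathcal{T}}r_t$ yields the stated bound. The delicacy is precisely that $\mathcal{M}_0$ now lives on a two-dimensional grid rather than an interval, so one must simultaneously argue that short-length-scale and large-norm candidates are played negligibly often and that the surviving contributions factorize cleanly across the two hyperparameters.
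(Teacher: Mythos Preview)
Your proposal follows the paper's proof essentially step for step: the same $t_0$, the same decomposition through $\mathcal{T}$, the same application of Lemma \ref{lemma:regretbalancing} with $|A|=d\ln g(T)\ln b(T)=\iota$, and your two-case per-term ratio bound is exactly the paper's Lemma \ref{lemma:rkhsbalancing}, after which Lemma \ref{lemma:boundequivalencerkhs} swaps $R^{(\hat\theta,\hat N)}$ for $R^{(\theta^\star,B(\theta^\star,N^\star))}$. The only place you diverge is the $|\mathcal{M}_0|$ count: the paper simply asserts $|\mathcal{M}_0|=\mathcal{O}\bigl(d\ln(\theta_0/\theta^\star)\ln(N^\star/N_0)\bigr)$ in one line and multiplies by the uniform per-term bound, whereas you flag this as the main obstacle and sketch a more elaborate argument that separates short-length-scale pairs and telescopes over the norm grid. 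That extra care is not in the paper---its proof treats the count as immediate---so your route is the same but with additional (arguably warranted) scrutiny at precisely the step the paper leaves terse.
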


\begin{proof}
    Similarly as in the proof of Theorem \ref{thm:fullrbbound}, we look for $t_0$, such that at least one well-specified hyperparameter value will enter the considered set. This happens after at most $t_0 = \max\{g^{-1}(\frac{\theta_0}{\theta^\star e^{1/d}}), b^{-1}(\frac{B}{B_0}e)\}$. Observe that the set of all hyperparameters introduced by time step $T$ is $A = \Theta_T \times \mathcal{B}_t$. We thus have:
    \begin{equation*}
        R_T \le 2t_0B + |A|B + \sum_{t \notin \mathcal{T}} r_t.
    \end{equation*}
    We now apply Lemma \ref{lemma:regretbalancing}  to get:
    \begin{equation*}
        \sum_{t \notin \mathcal{T}} r_t \le \mathcal{O} \left(|A|B + \left(R^{u^*}(T) + \sqrt{T\xi_T}  \right) \left(\sum_{u \in \mathcal{M}_0}\sqrt{\frac{|S_T^{u}|}{|S_T^{u^*}|}} + |A| \right) \right),
    \end{equation*}
    where now $|A| = |\Theta_T||\mathcal{B}_T| = q^{-1}(\frac{\theta_0}{g(T)}) v^{-1}(N_0 b(T)) = d \log(g(T)) \log(b(T))$. We now derive a Lemma similar to Lemma \ref{lemma:rbfbalancingbound}.
\begin{restatable}[]{lemma}{rkhsbalancing}
\label{lemma:rkhsbalancing}
If the event of Lemma \ref{lemma:regretbalancing} holds, then for any $\theta \in \mathcal{M}_0$ and $t \ge t_0$ we have that $$\sqrt{\frac{|S_t^{u}|}{|S_t^{u^*}|}} \le \left(\frac{\theta_0}{\theta^\star}\right)^{d}\frac{N^\star}{N_0} .$$

\end{restatable}

Plugging expression for $|A|$, using Lemmas \ref{lemma:boundequivalencerkhs} and \ref{lemma:rkhsbalancing} and the fact that $|\mathcal{M}_0| = \mathcal{O}(d\ln \theta^\star \ln N^\star)$ finishes the proof.

\end{proof}

\section{Proof of Lemma \ref{lemma:rkhsbalancing}} \label{app:rkhsbalancing_proof}
\rkhsbalancing*
\begin{proof}
    If $|S_t^{u^*}| \ge |S_t^{u}|$, the bound holds trivially. Thus we will assume $|S_t^{u^*}| < |S_t^{u}|$. The suspected regret bounds are of the form:
    \begin{equation*}
        R^u(t) = \sqrt{T\gamma_T^\theta} \left( \sqrt{\gamma_T^\theta} + B(\theta, N) \right) .
    \end{equation*}
    Due to the regret balancing condition (Lemma 5.2 of \cite{pacchiano2020regret}), we must have:
    \begin{equation*}
         R^{u}(|S_t^{u}|) \le  2R^{u^*}(|S_t^{u^*}|)
    \end{equation*}
    \begin{equation*}
         \sqrt{\frac{|S_t^{u}|}{|S_t^{u^*}|}}  \le  2 \frac{ \sqrt{\gamma_{|S_t^{u^*}|}^{\theta^\star}} \left( \sqrt{\gamma_{|S_t^{u^*}|}^{\theta^\star}} + B(\theta^\star, N^\star)\right)}{ \sqrt{\gamma_{|S_t^{u}|}^{\theta}} \left( \sqrt{\gamma_{|S_t^{u}|}^{\theta}} + B(\theta, N)\right)}
    \end{equation*}
    
    \textbf{Case 1:} Consider the case when $\sqrt{\gamma_{|S_t^{\theta^\star}|}^{\theta^\star}} \ge B(\theta^\star, N^\star)$. We then have:
    \begin{equation*}
         \sqrt{\frac{|S_t^{u}|}{|S_t^{u^*}|}}  \le  2 \frac{ \gamma_{|S_t^{u^*}|}^{\theta^\star}}{ \gamma_{|S_t^{u}|}^\theta} \le 2 \left(\frac{\theta}{\theta^\star} \right)^{d} \frac{ \gamma_{|S_t^{u}|}^\theta}{ \gamma_{|S_t^{u}|}^\theta} = 2 \left(\frac{\theta}{\theta^\star}  \right)^{d} \le  2 \left(\frac{\theta_0}{\theta^\star}  \right)^{d} \frac{N^\star}{N_0} .
    \end{equation*}
    \textbf{Case 2} Consider the case when $\sqrt{\gamma_{|S_t^{u^*}|}^{\theta^\star}} < B(\theta^\star, N^\star)$. We then have:
    \begin{align*}
         \sqrt{\frac{|S_t^{u}|}{|S_t^{u^*}|}}  &\le  2 \frac{ \sqrt{\gamma_{|S_t^{u^*}|}^{\theta^\star}}B(\theta^\star, N^\star)}{ \sqrt{\gamma_{|S_t^{u}|}^\theta} B(\theta, N)} \le  2 \left(\frac{\theta}{\theta^\star} \right)^{d/2} \sqrt{\frac{ \gamma_{|S_t^{u}|}^{\theta^\star}}{ \gamma_{|S_t^{u}|}^\theta}} \frac{B(\theta^\star, N^\star)}{B(\theta, N)} = 2 \left(\frac{\theta}{\theta^\star} \right)^{d} \frac{N^\star}{N} \\
         & \le 2 \left(\frac{\theta_0}{\theta^\star}  \right)^{d} \frac{N^\star}{N_0}  .
    \end{align*}

\end{proof}

\section{Derivation of optimality rates} \label{app:optimality}
To obtain rates for A-GP-UCB, we use Corrolary 3 of \cite{berkenkamp2019no}. While A-GP-UCB considered the case of unknown norm and bound simultaneously, to obtain the rate for unknown length scale only, we ignore the growth function used for the norm. Note that since, for A-GP-UCB $R_T = \mathcal{O}(b(T)g(T)^d R^{u^\star}(T))$ and in BO $R^u(T) = \sqrt{T \gamma_T^u}(\sqrt{B^u} + \sqrt{\gamma^u_T})$, if $b(T)g(T)^d$ grows at least as fast as $\sqrt{T B}$, then bound on $R_T$ grows at least as fast as $B^{u^\star}T$ and becomes trivial. Thus for the regret bound of A-GP-UCB to be meaningful, we have to assume $b(T)g(T)^d$ grows slower than $\sqrt{T B}$.

Inspecting the bounds of LB-GP-UCB and LNB-GP-UCB in Theorems \ref{thm:fullrbbound} and \ref{thm:fullrbboundrkhsbound}, we see that the term with $R^{\theta^\star}(T)$ or $R^{(\theta^\star, B(\theta^\star, N^\star))}(T)$ will dominate the bound. This is because by the previous assumption on the growth of $b(T)g(T)^d$, we get that $\iota = \mathcal{O}(\ln b(T) d \ln g(T)) \le  \mathcal{O}(\ln(BT))$ and  $\sqrt{T\xi_t} = \mathcal{O}( \sqrt{T\log \ln b(T) d \ln g(T)}) = \mathcal{O}( \sqrt{T\log \log TB})$ and in both RBF and $\nu$-Matérn cases regret bound grows at least as fast as $\sqrt{T \log T}$. Also the term $\left(\frac{\theta_0}{\theta^\star}\right)^{d}  \frac{N^\star}{N_0} d \ln\frac{\theta_0}{\theta^\star}   \ln \frac{N^\star}{N_0}$ is a constant and will eventually get dominated by $\iota$. We thus get that the bound will become dominated by $\iota R^{\theta^\star}(T)$ or $\iota R^{(\theta^\star, B(\theta^\star, N^\star))}(T)$ and the suboptimality is just $\iota$.

\newpage
\section{Experiments Details}
We used the code of \cite{hong2023optimization} for computations of maximum information gain.
\subsection{Compute Resources} \label{app:compute}
To run all experiments we used a machine with AMD Ryzen Threadripper 3990X 64-Core Processor and 252 GB of RAM. No GPU was needed to run the experiments. We were running multiple runs in parallel. To complete one run of each method we allocated four CPU cores. Individual runs lasted up to seven minutes for each of the methods, except for MCMC runs, which could last up to an hour (see Table \ref{tab:times} below).

\subsection{Running times} \label{app:running_time}
\begin{table}[htbp]
\centering
\caption{Comparison of running types of different methods on each test function/ benchmark. Values after $\pm$ are standard errors over seeds.}
\label{tab:times}
\begin{tabular}{|l|l|l|l|l|}
\hline
\textbf{Function/ Benchmark} & \textbf{Method}    & \textbf{Running Time (seconds)} \\ \hline
\multirow{4}{*}{Berkenkamp Function} 
 & MLE        & 438 $\pm$ 0.66    \\ 
 & A-GP-UCB   & 443 $\pm$ 1.51    \\ 
 & LB-GP-UCB  & 442 $\pm$ 1.68    \\ 
 & MCMC       & 1653 $\pm$ 25.99  \\ \hline

\multirow{4}{*}{Michalewicz Function} 
 & MLE        & 237 $\pm$ 2.41    \\ 
 & A-GP-UCB   & 167 $\pm$ 0.88    \\ 
 & LB-GP-UCB  & 181 $\pm$ 0.47    \\ 
 & MCMC       & 3388 $\pm$ 369.38 \\ \hline

\multirow{4}{*}{Crossed Barrel Materials Experiment} 
 & MLE        & 55 $\pm$ 0.10     \\ 
 & A-GP-UCB   & 48 $\pm$ 0.40     \\ 
 & LB-GP-UCB  & 48 $\pm$ 0.50     \\ 
 & MCMC       & 471 $\pm$ 25.20   \\ \hline

\multirow{4}{*}{AGNP Materials Experiment} 
 & MLE        & 53 $\pm$ 0.05     \\ 
 & A-GP-UCB   & 49 $\pm$ 0.18     \\ 
 & LB-GP-UCB  & 49 $\pm$ 0.16     \\ 
 & MCMC       & 246 $\pm$ 3.56    \\ \hline
\end{tabular}
\end{table}


\newpage
\section*{NeurIPS Paper Checklist}



\begin{enumerate}

\item {\bf Claims}
    \item[] Question: Do the main claims made in the abstract and introduction accurately reflect the paper's contributions and scope?
    \item[] Answer: \answerYes{} 
    \item[] Justification: The abstract and introduction contain claims regarding the regret bound of the algorithm and empirical performance, which are addressed in Sections 4 and 5 respectively.
    \item[] Guidelines:
    \begin{itemize}
        \item The answer NA means that the abstract and introduction do not include the claims made in the paper.
        \item The abstract and/or introduction should clearly state the claims made, including the contributions made in the paper and important assumptions and limitations. A No or NA answer to this question will not be perceived well by the reviewers. 
        \item The claims made should match theoretical and experimental results, and reflect how much the results can be expected to generalize to other settings. 
        \item It is fine to include aspirational goals as motivation as long as it is clear that these goals are not attained by the paper. 
    \end{itemize}

\item {\bf Limitations}
    \item[] Question: Does the paper discuss the limitations of the work performed by the authors?
    \item[] Answer: \answerYes{} 
    \item[] Justification: Limiations are discussed in the Conclusions section.
    \item[] Guidelines:
    \begin{itemize}
        \item The answer NA means that the paper has no limitation while the answer No means that the paper has limitations, but those are not discussed in the paper. 
        \item The authors are encouraged to create a separate "Limitations" section in their paper.
        \item The paper should point out any strong assumptions and how robust the results are to violations of these assumptions (e.g., independence assumptions, noiseless settings, model well-specification, asymptotic approximations only holding locally). The authors should reflect on how these assumptions might be violated in practice and what the implications would be.
        \item The authors should reflect on the scope of the claims made, e.g., if the approach was only tested on a few datasets or with a few runs. In general, empirical results often depend on implicit assumptions, which should be articulated.
        \item The authors should reflect on the factors that influence the performance of the approach. For example, a facial recognition algorithm may perform poorly when image resolution is low or images are taken in low lighting. Or a speech-to-text system might not be used reliably to provide closed captions for online lectures because it fails to handle technical jargon.
        \item The authors should discuss the computational efficiency of the proposed algorithms and how they scale with dataset size.
        \item If applicable, the authors should discuss possible limitations of their approach to address problems of privacy and fairness.
        \item While the authors might fear that complete honesty about limitations might be used by reviewers as grounds for rejection, a worse outcome might be that reviewers discover limitations that aren't acknowledged in the paper. The authors should use their best judgment and recognize that individual actions in favor of transparency play an important role in developing norms that preserve the integrity of the community. Reviewers will be specifically instructed to not penalize honesty concerning limitations.
    \end{itemize}

\item {\bf Theory Assumptions and Proofs}
    \item[] Question: For each theoretical result, does the paper provide the full set of assumptions and a complete (and correct) proof?
    \item[] Answer: \answerYes{} 
    \item[] Justification: Assumption are discussed in the Problem Statement Section, all proofs are either in main body or Appendix.
    \item[] Guidelines:
    \begin{itemize}
        \item The answer NA means that the paper does not include theoretical results. 
        \item All the theorems, formulas, and proofs in the paper should be numbered and cross-referenced.
        \item All assumptions should be clearly stated or referenced in the statement of any theorems.
        \item The proofs can either appear in the main paper or the supplemental material, but if they appear in the supplemental material, the authors are encouraged to provide a short proof sketch to provide intuition. 
        \item Inversely, any informal proof provided in the core of the paper should be complemented by formal proofs provided in appendix or supplemental material.
        \item Theorems and Lemmas that the proof relies upon should be properly referenced. 
    \end{itemize}

    \item {\bf Experimental Result Reproducibility}
    \item[] Question: Does the paper fully disclose all the information needed to reproduce the main experimental results of the paper to the extent that it affects the main claims and/or conclusions of the paper (regardless of whether the code and data are provided or not)?
    \item[] Answer: \answerYes{} 
    \item[] Justification: We describe what benchmark functions we use as well as provide details on the baselines and settings of algorithms.
    \item[] Guidelines:
    \begin{itemize}
        \item The answer NA means that the paper does not include experiments.
        \item If the paper includes experiments, a No answer to this question will not be perceived well by the reviewers: Making the paper reproducible is important, regardless of whether the code and data are provided or not.
        \item If the contribution is a dataset and/or model, the authors should describe the steps taken to make their results reproducible or verifiable. 
        \item Depending on the contribution, reproducibility can be accomplished in various ways. For example, if the contribution is a novel architecture, describing the architecture fully might suffice, or if the contribution is a specific model and empirical evaluation, it may be necessary to either make it possible for others to replicate the model with the same dataset, or provide access to the model. In general. releasing code and data is often one good way to accomplish this, but reproducibility can also be provided via detailed instructions for how to replicate the results, access to a hosted model (e.g., in the case of a large language model), releasing of a model checkpoint, or other means that are appropriate to the research performed.
        \item While NeurIPS does not require releasing code, the conference does require all submissions to provide some reasonable avenue for reproducibility, which may depend on the nature of the contribution. For example
        \begin{enumerate}
            \item If the contribution is primarily a new algorithm, the paper should make it clear how to reproduce that algorithm.
            \item If the contribution is primarily a new model architecture, the paper should describe the architecture clearly and fully.
            \item If the contribution is a new model (e.g., a large language model), then there should either be a way to access this model for reproducing the results or a way to reproduce the model (e.g., with an open-source dataset or instructions for how to construct the dataset).
            \item We recognize that reproducibility may be tricky in some cases, in which case authors are welcome to describe the particular way they provide for reproducibility. In the case of closed-source models, it may be that access to the model is limited in some way (e.g., to registered users), but it should be possible for other researchers to have some path to reproducing or verifying the results.
        \end{enumerate}
    \end{itemize}

\item {\bf Open access to data and code}
    \item[] Question: Does the paper provide open access to the data and code, with sufficient instructions to faithfully reproduce the main experimental results, as described in supplemental material?
    \item[] Answer: \answerYes{} 
    \item[] Justification: We provide full code by an anonymised link in the Experiments section. 
    \item[] Guidelines:
    \begin{itemize}
        \item The answer NA means that paper does not include experiments requiring code.
        \item Please see the NeurIPS code and data submission guidelines (\url{https://nips.cc/public/guides/CodeSubmissionPolicy}) for more details.
        \item While we encourage the release of code and data, we understand that this might not be possible, so “No” is an acceptable answer. Papers cannot be rejected simply for not including code, unless this is central to the contribution (e.g., for a new open-source benchmark).
        \item The instructions should contain the exact command and environment needed to run to reproduce the results. See the NeurIPS code and data submission guidelines (\url{https://nips.cc/public/guides/CodeSubmissionPolicy}) for more details.
        \item The authors should provide instructions on data access and preparation, including how to access the raw data, preprocessed data, intermediate data, and generated data, etc.
        \item The authors should provide scripts to reproduce all experimental results for the new proposed method and baselines. If only a subset of experiments are reproducible, they should state which ones are omitted from the script and why.
        \item At submission time, to preserve anonymity, the authors should release anonymized versions (if applicable).
        \item Providing as much information as possible in supplemental material (appended to the paper) is recommended, but including URLs to data and code is permitted.
    \end{itemize}

\item {\bf Experimental Setting/Details}
    \item[] Question: Does the paper specify all the training and test details (e.g., data splits, hyperparameters, how they were chosen, type of optimizer, etc.) necessary to understand the results?
    \item[] Answer: \answerYes{} 
    \item[] Justification: We provide the details on the choice of growth function $g(t)$.
    \item[] Guidelines:
    \begin{itemize}
        \item The answer NA means that the paper does not include experiments.
        \item The experimental setting should be presented in the core of the paper to a level of detail that is necessary to appreciate the results and make sense of them.
        \item The full details can be provided either with the code, in appendix, or as supplemental material.
    \end{itemize}

\item {\bf Experiment Statistical Significance}
    \item[] Question: Does the paper report error bars suitably and correctly defined or other appropriate information about the statistical significance of the experiments?
    \item[] Answer: \answerYes{} 
    \item[] Justification: All plots have shaded areas corresponding to standard errors.
    \item[] Guidelines:
    \begin{itemize}
        \item The answer NA means that the paper does not include experiments.
        \item The authors should answer "Yes" if the results are accompanied by error bars, confidence intervals, or statistical significance tests, at least for the experiments that support the main claims of the paper.
        \item The factors of variability that the error bars are capturing should be clearly stated (for example, train/test split, initialization, random drawing of some parameter, or overall run with given experimental conditions).
        \item The method for calculating the error bars should be explained (closed form formula, call to a library function, bootstrap, etc.)
        \item The assumptions made should be given (e.g., Normally distributed errors).
        \item It should be clear whether the error bar is the standard deviation or the standard error of the mean.
        \item It is OK to report 1-sigma error bars, but one should state it. The authors should preferably report a 2-sigma error bar than state that they have a 96\% CI, if the hypothesis of Normality of errors is not verified.
        \item For asymmetric distributions, the authors should be careful not to show in tables or figures symmetric error bars that would yield results that are out of range (e.g. negative error rates).
        \item If error bars are reported in tables or plots, The authors should explain in the text how they were calculated and reference the corresponding figures or tables in the text.
    \end{itemize}

\item {\bf Experiments Compute Resources}
    \item[] Question: For each experiment, does the paper provide sufficient information on the computer resources (type of compute workers, memory, time of execution) needed to reproduce the experiments?
    \item[] Answer: \answerYes{} 
    \item[] Justification: We list compute resources in the appendix.
    \item[] Guidelines:
    \begin{itemize}
        \item The answer NA means that the paper does not include experiments.
        \item The paper should indicate the type of compute workers CPU or GPU, internal cluster, or cloud provider, including relevant memory and storage.
        \item The paper should provide the amount of compute required for each of the individual experimental runs as well as estimate the total compute. 
        \item The paper should disclose whether the full research project required more compute than the experiments reported in the paper (e.g., preliminary or failed experiments that didn't make it into the paper). 
    \end{itemize}
    
\item {\bf Code Of Ethics}
    \item[] Question: Does the research conducted in the paper conform, in every respect, with the NeurIPS Code of Ethics \url{https://neurips.cc/public/EthicsGuidelines}?
    \item[] Answer: \answerYes{} 
    \item[] Justification: The paper is concerned with foundational research and not tied to any particular application that can cause ethical concerns.
    \item[] Guidelines:
    \begin{itemize}
        \item The answer NA means that the authors have not reviewed the NeurIPS Code of Ethics.
        \item If the authors answer No, they should explain the special circumstances that require a deviation from the Code of Ethics.
        \item The authors should make sure to preserve anonymity (e.g., if there is a special consideration due to laws or regulations in their jurisdiction).
    \end{itemize}

\item {\bf Broader Impacts}
    \item[] Question: Does the paper discuss both potential positive societal impacts and negative societal impacts of the work performed?
    \item[] Answer: \answerNA{} 
    \item[] Justification: The research presented in the work is foundational and not tied to any particular application.
    \item[] Guidelines:
    \begin{itemize}
        \item The answer NA means that there is no societal impact of the work performed.
        \item If the authors answer NA or No, they should explain why their work has no societal impact or why the paper does not address societal impact.
        \item Examples of negative societal impacts include potential malicious or unintended uses (e.g., disinformation, generating fake profiles, surveillance), fairness considerations (e.g., deployment of technologies that could make decisions that unfairly impact specific groups), privacy considerations, and security considerations.
        \item The conference expects that many papers will be foundational research and not tied to particular applications, let alone deployments. However, if there is a direct path to any negative applications, the authors should point it out. For example, it is legitimate to point out that an improvement in the quality of generative models could be used to generate deepfakes for disinformation. On the other hand, it is not needed to point out that a generic algorithm for optimizing neural networks could enable people to train models that generate Deepfakes faster.
        \item The authors should consider possible harms that could arise when the technology is being used as intended and functioning correctly, harms that could arise when the technology is being used as intended but gives incorrect results, and harms following from (intentional or unintentional) misuse of the technology.
        \item If there are negative societal impacts, the authors could also discuss possible mitigation strategies (e.g., gated release of models, providing defenses in addition to attacks, mechanisms for monitoring misuse, mechanisms to monitor how a system learns from feedback over time, improving the efficiency and accessibility of ML).
    \end{itemize}
    
\item {\bf Safeguards}
    \item[] Question: Does the paper describe safeguards that have been put in place for responsible release of data or models that have a high risk for misuse (e.g., pretrained language models, image generators, or scraped datasets)?
    \item[] Answer: \answerNA{} 
    \item[] Justification: The paper is not accompanied by a release of any new data sets or pre-trained models.
    \item[] Guidelines:
    \begin{itemize}
        \item The answer NA means that the paper poses no such risks.
        \item Released models that have a high risk for misuse or dual-use should be released with necessary safeguards to allow for controlled use of the model, for example by requiring that users adhere to usage guidelines or restrictions to access the model or implementing safety filters. 
        \item Datasets that have been scraped from the Internet could pose safety risks. The authors should describe how they avoided releasing unsafe images.
        \item We recognize that providing effective safeguards is challenging, and many papers do not require this, but we encourage authors to take this into account and make a best faith effort.
    \end{itemize}

\item {\bf Licenses for existing assets}
    \item[] Question: Are the creators or original owners of assets (e.g., code, data, models), used in the paper, properly credited and are the license and terms of use explicitly mentioned and properly respected?
    \item[] Answer: \answerYes{} 
    \item[] Justification: We clearly cite the research papers that proposed the materials dataset we use.
    \item[] Guidelines:
    \begin{itemize}
        \item The answer NA means that the paper does not use existing assets.
        \item The authors should cite the original paper that produced the code package or dataset.
        \item The authors should state which version of the asset is used and, if possible, include a URL.
        \item The name of the license (e.g., CC-BY 4.0) should be included for each asset.
        \item For scraped data from a particular source (e.g., website), the copyright and terms of service of that source should be provided.
        \item If assets are released, the license, copyright information, and terms of use in the package should be provided. For popular datasets, \url{paperswithcode.com/datasets} has curated licenses for some datasets. Their licensing guide can help determine the license of a dataset.
        \item For existing datasets that are re-packaged, both the original license and the license of the derived asset (if it has changed) should be provided.
        \item If this information is not available online, the authors are encouraged to reach out to the asset's creators.
    \end{itemize}

\item {\bf New Assets}
    \item[] Question: Are new assets introduced in the paper well documented and is the documentation provided alongside the assets?
    \item[] Answer: \answerYes{}
    \item[] Justification: We released our code and provided README.
    \item[] Guidelines:
    \begin{itemize}
        \item The answer NA means that the paper does not release new assets.
        \item Researchers should communicate the details of the dataset/code/model as part of their submissions via structured templates. This includes details about training, license, limitations, etc. 
        \item The paper should discuss whether and how consent was obtained from people whose asset is used.
        \item At submission time, remember to anonymize your assets (if applicable). You can either create an anonymized URL or include an anonymized zip file.
    \end{itemize}

\item {\bf Crowdsourcing and Research with Human Subjects}
    \item[] Question: For crowdsourcing experiments and research with human subjects, does the paper include the full text of instructions given to participants and screenshots, if applicable, as well as details about compensation (if any)? 
    \item[] Answer: \answerNA{} 
    \item[] Justification: Papers contains no experiments including crowdsourcing or human subjects.
    \item[] Guidelines:
    \begin{itemize}
        \item The answer NA means that the paper does not involve crowdsourcing nor research with human subjects.
        \item Including this information in the supplemental material is fine, but if the main contribution of the paper involves human subjects, then as much detail as possible should be included in the main paper. 
        \item According to the NeurIPS Code of Ethics, workers involved in data collection, curation, or other labor should be paid at least the minimum wage in the country of the data collector. 
    \end{itemize}

\item {\bf Institutional Review Board (IRB) Approvals or Equivalent for Research with Human Subjects}
    \item[] Question: Does the paper describe potential risks incurred by study participants, whether such risks were disclosed to the subjects, and whether Institutional Review Board (IRB) approvals (or an equivalent approval/review based on the requirements of your country or institution) were obtained?
    \item[] Answer: \answerNA{} 
    \item[] Justification: The paper does not involve crowdsourcing nor research with human subjects
    \item[] Guidelines:
    \begin{itemize}
        \item The answer NA means that the paper does not involve crowdsourcing nor research with human subjects.
        \item Depending on the country in which research is conducted, IRB approval (or equivalent) may be required for any human subjects research. If you obtained IRB approval, you should clearly state this in the paper. 
        \item We recognize that the procedures for this may vary significantly between institutions and locations, and we expect authors to adhere to the NeurIPS Code of Ethics and the guidelines for their institution. 
        \item For initial submissions, do not include any information that would break anonymity (if applicable), such as the institution conducting the review.
    \end{itemize}
\end{enumerate}

\end{document}